\documentclass{article} 

\usepackage[T1]{fontenc}
\usepackage{iclr2027_conference,times}

\usepackage{amsmath,amsfonts,bm}

\def\eqref#1{equation~\ref{#1}}
\def\Eqref#1{Equation~\ref{#1}}

\def\1{\bm{1}}

\DeclareMathAlphabet{\mathsfit}{\encodingdefault}{\sfdefault}{m}{sl}
\SetMathAlphabet{\mathsfit}{bold}{\encodingdefault}{\sfdefault}{bx}{n}

\usepackage{hyperref}
\usepackage{url}

\usepackage{booktabs}   
\usepackage{multirow}   
\usepackage{amsmath}    
\usepackage{amssymb}
\usepackage{amsthm}
\usepackage{mathtools}
\usepackage{bm}         
\usepackage{float}
\usepackage{placeins}
\usepackage{wrapfig} 
\usepackage{graphicx}
\usepackage{xcolor}

\usepackage{needspace}
\usepackage{caption}
\usepackage{threeparttable}
\usepackage{tabularx} 
\usepackage{ragged2e}

\definecolor{paperlink}{RGB}{0,78,140}
\hypersetup{
  colorlinks=true,
  linkcolor=paperlink,
  citecolor=paperlink,
  urlcolor=paperlink
}

\newtheorem{theorem}{Theorem}
\newtheorem{proposition}{Proposition}

\title{Transolver-$\sigma$: Joint Spectral-Physical\\ Subspace Modeling for Neural PDE Solving}

\author{{\fontsize{8.7}{10.5}\selectfont\bfseries
  Haonan Shangguan$^{1,*}$, Hang Zhou$^{1,*}$, Haixu Wu$^{2}$,
  Yuezhou Ma$^{1}$, Jianmin Wang$^{1}$, Mingsheng Long$^{1,\dagger}$}\\
  \normalfont $^{1}$School of Software, BNRist, Tsinghua University\qquad $^{2}$MIT CSAIL\\
  \texttt{\{sghn25,zhou-h23\}@mails.tsinghua.edu.cn}\\
  \texttt{\{jimwang,mingsheng\}@tsinghua.edu.cn}\\
  $^{*}$Equal contribution.\quad $^{\dagger}$Corresponding author.
}

\newcommand{\meanstd}[2]{%
  \begin{tabular}[c]{@{}c@{}}
    $\mathbf{#1}$\\[-1.5pt]
    {\scriptsize$\pm #2$}
  \end{tabular}%
}

\iclrfinalcopy
\begin{document}

\maketitle
\lhead{}

\begin{abstract}
Neural solvers offer efficient surrogates for numerical simulation of partial differential equations (PDEs). For time-dependent problems, strong one-step accuracy does not necessarily translate into reliable autoregressive rollout. We observe that a solver based only on physical-state modeling can achieve lower one-step error, whereas its spectral-only counterpart can become more accurate at later rollout steps. Motivated by this observation, we present Transolver-$\sigma$, a neural PDE solver based on joint spectral--physical subspace modeling. Within each block, adaptive physical-state interactions and spectral transformations are modeled in dedicated latent subspaces, whose responses are recomposed to enable information exchange between the two representations. Within the physical subspace, we introduce Slice-Residual Physics-Attention~(SRPA), which preserves an explicit slice-space identity path while retaining learnable cross-slice interaction. In parallel, an axis-factorized Fourier operator captures global spectral structure. Across five well-established PDE benchmarks spanning steady-state prediction and time-dependent dynamics, Transolver-$\sigma$ achieves state-of-the-art with a benchmark-averaged relative error reduction of 33.4\% over the strongest baseline for each metric, while consistently improving autoregressive rollout over single-operator counterparts. Transolver-$\sigma$ further delivers strong gains on coupled multiphysics systems and real-world fluid and combustion measurements from \mbox{RealPDEBench}, demonstrating its effectiveness beyond standard simulation benchmarks.
\end{abstract}
\setlength{\parskip}{4pt}

\vspace{-3pt}
\section{Introduction}
\vspace{-3pt}

Partial differential equations (PDEs) govern physical processes ranging from fluid dynamics to
material deformation. High-fidelity numerical solvers remain indispensable, but repeatedly solves
for parameter sweeps or optimization can be computationally prohibitive. Neural operators offer a
promising alternative by learning mappings between function spaces and providing fast, data-driven
surrogates~\citep{kovachki2023neuraloperator,li2021fno}. For time-dependent problems, however,
single-step precision alone does not guarantee long-term reliability. During autoregressive rollout,
predictions are recursively fed back as inputs, causing error accumulation over time. We refer to a
solver's capacity to maintain predictive fidelity under such recursive deployment as its
\textbf{rollout robustness}.

Autoregressive prediction can be fundamentally harder than single-step prediction. Rollout error
contains not only the error introduced at the current prediction step, but also errors already
present in the model-generated history. As a result, a model that performs well under teacher
forcing may behave very differently once it has to continue from its own predictions. Existing work
tackles this problem mainly through training strategies that expose models to their own outputs or
through iterative prediction refinement~\citep{brandstetter2022message,lippe2023pderefiner}. 
Here we study a complementary question from the perspective of model architecture:
\emph{how should neural operator architectures be structured to better reconcile local single-step precision with long-term rollout robustness?}

Figure~\ref{fig:accuracy-stability-discrepancy}(a) provides a simple diagnostic. We construct two
counterparts with comparable parameter budgets. \textbf{Physolver} follows Transolver and models the
field through input-adaptive physical states~\citep{wu2024Transolver}, whereas \textbf{Specsolver}
follows F-FNO and performs axis-factorized spectral
transformations~\citep{li2021fno,tran2023factorized}. Under teacher forcing, Physolver gives the
lower single-step error. Under autoregressive rollout, however, Specsolver becomes more accurate
at later prediction steps. In other words, the operator that is better for the next-step prediction
is not necessarily the one that remains better after repeated application. This contrast suggests
that physical-state and spectral modeling offer different advantages across the rollout horizon,
motivating their integration within a single architecture to exploit these complementary behaviors.

\begin{figure*}[t]
  \centering
  \includegraphics[width=\textwidth]{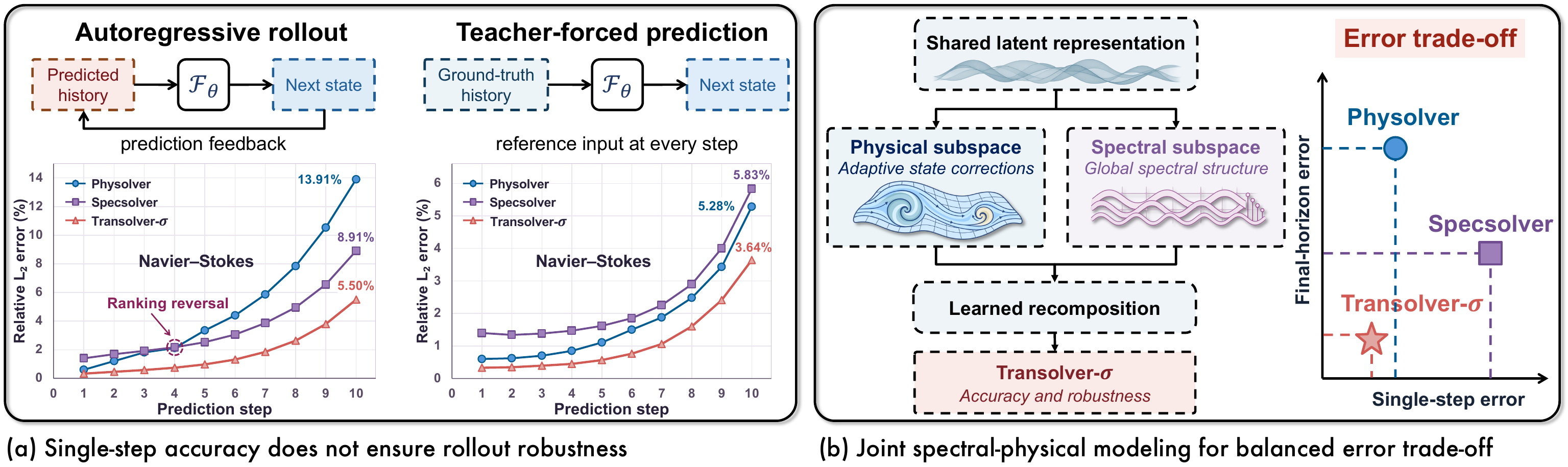}

  \caption{\textbf{Single-step accuracy does not guarantee rollout robustness.}
  (a) Error dynamics under teacher-forced prediction (ground-truth inputs) versus autoregressive rollout (recursive inputs) on 2D Navier--Stokes. Physolver achieves lower single-step error, whereas Specsolver achieves lower error at later rollout steps.
  (b) Conceptual overview of Transolver-$\sigma$, which leverages joint spectral-physical modeling via learned recomposition to achieve a balanced error trade-off.}
  \label{fig:accuracy-stability-discrepancy}
  \vspace{-18pt}
\end{figure*}

The remaining question is how to unify these two mechanisms effectively. The key design question
is how to preserve the distinct roles of physical-state and spectral updates while allowing them to
exchange information throughout the network. We therefore introduce \textbf{Transolver-$\sigma$},
which performs \emph{joint spectral--physical subspace modeling} within every block
(Figure~\ref{fig:accuracy-stability-discrepancy}(b)). Inspired by channel splitting in ShuffleNet
V2~\citep{ma2018shufflenetv2} and spatial--spectral channel allocation in Fast Fourier
Convolution~\citep{chi2020ffc}, each block first builds a shared latent representation and then
splits its channels into dedicated physical and spectral subspaces. The physical operator models
adaptive interactions among learned physical states, while the spectral operator applies
axis-factorized Fourier transformations. Their responses are then concatenated and recomposed
through a full-channel SwiGLU network~\citep{shazeer2020glu}. In this way, the two mechanisms remain distinct during their
own updates but can exchange information before the next block, through repeated recomposition.

We further revisit the physical-state update itself. Vanilla Physics-Attention updates the slice
states using attention alone, without a residual connection in slice space. LinearNO removes the
slice-to-slice attention through a linear-attention reformulation~\citep{hu2026linearno}, but consequently dispenses with a dedicated inter-slice interaction stage.
We therefore introduce
\emph{Slice-Residual Physics-Attention} (SRPA), which adds an explicit residual connection to the
slice-space update.
SRPA carries the incoming slice states to deslicing through an explicit
identity path and adds a learnably scaled attention correction, starting
from an exact slice-space identity at initialization.
We show when attention harms through replacement but helps through residual correction (Appendix~\ref{app:srpa-utility}).

We evaluate Transolver-$\sigma$ across steady-state and time-dependent PDEs, coupled multiphysics
simulations, and real-world measurements. It consistently outperforms single-operator counterparts
across four dynamical systems and achieves the lowest error on all eight metrics over five canonical
PDE benchmarks, with a benchmark-averaged relative reduction of 33.4\%. Further gains are observed
on coupled reacting-flow problems and RealPDEBench measurements~\citep{hu2026realpdebench}.

Our key contributions are summarized as follows:
\begin{itemize}
  \item \textbf{Joint Spectral--Physical Modeling}: We introduce Transolver-$\sigma$, which assigns
  physical-state and spectral operators to dedicated latent subspaces and recomposes their responses
  within each block, enabling the two representations to interact throughout the network.

  \item \textbf{Slice-Residual Physics-Attention}: We introduce SRPA, a residual reformulation of
  Physics-Attention that preserves an explicit slice-space identity path while retaining learnable
  cross-slice interaction. This design improves over both vanilla Physics-Attention and removing
  slice-to-slice attention in our controlled comparisons.

  \item \textbf{Broad Empirical Validation}: Transolver-$\sigma$ consistently achieves
  state-of-the-art performance across canonical PDE benchmarks, while improving autoregressive rollout
  over four dynamical systems and demonstrating strong generalization to coupled multiphysics
  simulations and real-world experimental measurements.
\end{itemize}

\vspace{-3pt}
\section{Related Work}
\vspace{-3pt}

\vspace{-3pt}
\subsection{Neural PDE Solvers}
\vspace{-3pt}
Neural PDE solvers based on operator learning approximate mappings between
function spaces to represent families of PDE solutions~\citep{kovachki2023neuraloperator}.
Spectral approaches parameterize these mappings through basis expansions,
with FNO implementing kernel integrals in the Fourier domain~\citep{li2021fno}.
F-FNO combines axis-factorized spectral layers with a residual architecture and
training refinements to support efficient, deeper operator
networks~\citep{tran2023factorized}. LSM applies spectral methods in learned
latent spaces, while GINO couples geometry-aware graph operators with Fourier
processing on regular latent grids~\citep{wu2023lsm,li2023gino}.

Attention-based operators model global interactions through input-dependent
weights. Galerkin Transformer connects efficient attention with operator
projections~\citep{cao2021galerkin}, GNOT extends attention to heterogeneous
inputs and irregular geometries~\citep{hao2023gnot}, and FactFormer factorizes
multidimensional interactions along spatial axes for improved computational
efficiency~\citep{li2023factformer}. 
Unisolver improves
cross-PDE generalization by conditioning a Transformer on explicit PDE
components~\citep{unisolver}. 
Beyond efficiency, these methods also
differ in the representation space over which attention is performed.
Transolver aggregates mesh features into learned physical states and applies
attention to capture global physical interactions
~\citep{wu2024Transolver}. Transolver++ improves state distinctiveness and
parallelism, while Transolver-3 scales training and inference to complex
industrial-scale geometries~\citep{luo2025transolverpp,zhou2026transolver3}.
LinearNO reformulates Physics-Attention as linear attention and removes the
explicit slice-to-slice attention stage~\citep{hu2026linearno}. In contrast,
SRPA retains cross-slice interaction while introducing an explicit residual
path, making attention a residual update in slice space rather than a direct replacement.

Several recent neural operators combine spectral processing with attention
to exploit both physical and frequency-domain representations
~\citep{rahman2024codano,yue2025holistic}. CoDA-NO uses Fourier neural
operators to construct function-valued attention for multiphysics
learning~\citep{rahman2024codano}, while HPM learns a unified
spectral--physical space that directly couples physical states with spectral
basis functions~\citep{yue2025holistic}. Transolver-$\sigma$ takes a different
approach: it assigns physical-state attention and spectral processing to
dedicated latent subspaces, allowing each to model the input independently
before their features are recombined within each block. This repeated
recomposition enables information from both representations to propagate
through the network.

\vspace{-3pt}
\subsection{Mitigating Error Accumulation in PDE Rollouts}
\vspace{-3pt}

Autoregressive neural PDE solvers reuse their predictions as future inputs,
allowing errors to propagate over the rollout horizon. Existing methods address
this problem through training, prediction refinement, or hybrid numerical
integration. MP-PDE uses pushforward training to expose the model to
self-generated states and temporal bundling to reduce feedback steps
~\citep{brandstetter2022message}. PDE-Refiner instead applies
diffusion-inspired iterative refinement to improve long-horizon prediction
~\citep{lippe2023pderefiner}.
DRIFT-Net couples a low-frequency spectral branch with an image-space branch
to preserve global structure and reduce drift during closed-loop rollout
~\citep{li2026drift}. Our work instead organizes physical-state and spectral operators into dedicated latent subspaces and repeatedly recomposes their responses across blocks to improve autoregressive prediction.

\FloatBarrier

\begingroup
\setlength{\textfloatsep}{4pt plus 1pt minus 1pt}
\setlength{\intextsep}{4pt plus 1pt minus 1pt}
\setlength{\abovecaptionskip}{2pt}
\setlength{\belowcaptionskip}{0pt}
\setlength{\abovedisplayskip}{3pt plus 1pt minus 1pt}
\setlength{\belowdisplayskip}{3pt plus 1pt minus 1pt}
\setlength{\abovedisplayshortskip}{0pt}
\setlength{\belowdisplayshortskip}{3pt plus 1pt minus 1pt}
\setlength{\columnsep}{12pt}
\vspace{-3pt}
\section{Transolver-$\sigma$}
\vspace{-3pt}

To address the discrepancy between one-step prediction and autoregressive
rollout performance, we present \textbf{Transolver-$\sigma$}, a novel network architecture built on joint spectral--physical subspace modeling. Within each block,
physical-state interactions and spectral transformations are modeled
in dedicated latent subspaces, whose features are recomposed to
enable effective information exchange.

\begin{figure}[t]
  \centering
  \setlength{\parskip}{0pt}
  \includegraphics[width=\textwidth]{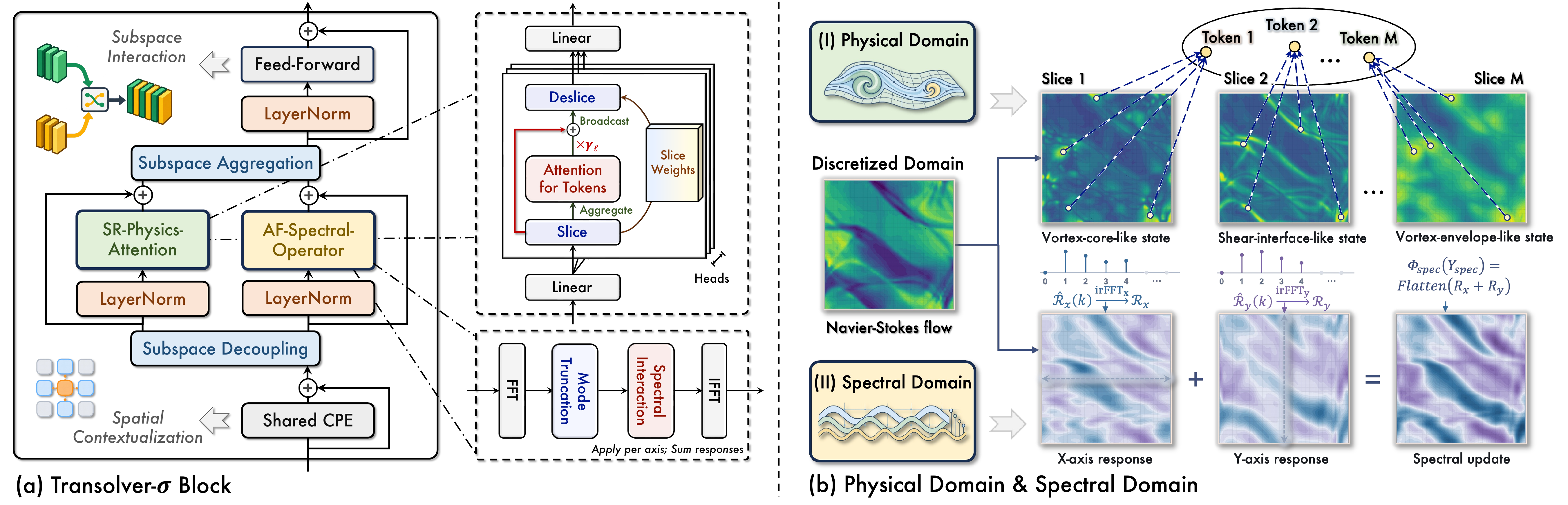}
  \vspace{-7pt}
  \caption{\textbf{Transolver-$\sigma$ architecture and spectral--physical modeling.}
  \textbf{(a)} Transolver-$\sigma$ performs specialized physical and spectral modeling in decoupled latent
  subspaces, followed by learned recomposition.
  \textbf{(b)} The physical branch captures intrinsic state interactions through slice-based modeling,
  while the spectral branch captures global spatial variations along individual axes.}
  \label{fig:transolver-sigma-architecture}
  \vspace{8pt}
\end{figure}

\paragraph{Problem Setup.}
\label{sec:method-setup}
We consider PDE prediction on a spatial domain
$\Omega \subset \mathbb{R}^{d}$ sampled at $N$ grid points
$\mathbf{g}\in\mathbb{R}^{N\times d}$. Given input fields
$\mathbf{X}\in\mathbb{R}^{N\times C_{\mathrm{in}}}$, we learn a neural
operator $\mathcal{F}_{\theta}$ to predict
\[
\widehat{\mathbf{U}}
=
\mathcal{F}_{\theta}(\mathbf{X};\mathbf{g})
\in\mathbb{R}^{N\times C_{\mathrm{out}}}.
\]
For steady-state problems, the model directly predicts the corresponding
solution fields. For time-dependent problems, $\mathbf{X}$ contains
$T_{\mathrm{in}}$ historical frames and the model predicts the next $T_{\mathrm{out}}$
frames, with $C_{\mathrm{out}}=T_{\mathrm{out}}q$ for $q$ output field channels per frame.
During autoregressive rollout, each predicted block is appended to the history
and the most recent $T_{\mathrm{in}}$ frames are used for the next model call.
See Appendix~\ref{app:experiment-protocols} for task-specific settings and rollout configurations.

\vspace{-3pt}
\subsection{Joint Spectral--Physical Subspace Modeling}
\vspace{3pt}
\label{sec:method-decomposition}

Physical-state and spectral operators represent spatial interactions through
different intermediate representations. Physics-Attention organizes spatial
features into input-adaptive physical states and models interactions among them~\citep{wu2024Transolver}. Fourier operators instead model global structure through frequency modes in a prescribed spectral basis~\citep{li2021fno}. These two representations therefore
provide distinct inductive biases: physical-state modeling adapts its interaction
structure specific to the current field, whereas spectral modeling employs a fixed global basis.

\vspace{8pt}
Rather than forcing these two forms of representation into a shared operator,
Transolver-$\sigma$ assigns physical-state and spectral transformations to
dedicated channel subspaces. Each subspace can specialize under its own
operator, while their responses are recomposed within every block to exchange
information across successive layers. As illustrated in Figure~\ref{fig:transolver-sigma-architecture}(a), Transolver-$\sigma$ first lifts the
input fields $\mathbf{X}$ and coordinates $\mathbf{g}$ into latent features $\mathbf{H}^{(0)} \in
\mathbb{R}^{N \times C}$, followed by $L$ dual-subspace blocks. In each block,
conditional positional encoding ($\operatorname{CPE}$)~\citep{chu2023conditional} injects spatial context before the features
are evenly split into spectral and physical channel subspaces:

\begingroup
\setlength{\parskip}{0pt}
\setlength{\abovedisplayskip}{1pt plus 1pt minus 1pt}
\setlength{\belowdisplayskip}{1pt plus 1pt minus 1pt}
\setlength{\abovedisplayshortskip}{0pt}
\setlength{\belowdisplayshortskip}{1pt plus 1pt minus 1pt}
\begin{subequations}
  \label{eq:transolver-sigma-block}
  \begin{align}
    &\begin{aligned}
      \overline{\mathbf{H}}^{(\ell)}
      &= \mathbf{H}^{(\ell)} + \operatorname{CPE}_{\ell}(\mathbf{H}^{(\ell)}),
      \qquad
      [\mathbf{H}^{(\ell)}_{\mathrm{spec}},\mathbf{H}^{(\ell)}_{\mathrm{phy}}]
      = \operatorname{Split}_{C/2}(\overline{\mathbf{H}}^{(\ell)}),
    \end{aligned}
    \label{eq:block-context}
    \\
    &\widetilde{\mathbf{H}}^{(\ell)}_{\ast}
    = \mathbf{H}^{(\ell)}_{\ast}
    + \Phi^{(\ell)}_{\ast}\!\left(
    \operatorname{LN}^{(\ell)}_{\ast}(\mathbf{H}^{(\ell)}_{\ast})
    \right),
    \qquad \ast \in \{\mathrm{spec},\mathrm{phy}\},
    \label{eq:block-private}
    \\
    &\begin{aligned}
      \mathbf{H}^{(\ell)}_{\mathrm{cat}}
      &= \operatorname{Concat}\!\left(
      \widetilde{\mathbf{H}}^{(\ell)}_{\mathrm{spec}},
      \widetilde{\mathbf{H}}^{(\ell)}_{\mathrm{phy}}
      \right),
      \qquad
      \mathbf{H}^{(\ell+1)}
      = \mathbf{H}^{(\ell)}_{\mathrm{cat}}
      + \operatorname{SwiGLU}_{\ell}\!\left(
      \operatorname{LN}^{(\ell)}(\mathbf{H}^{(\ell)}_{\mathrm{cat}})
      \right).
    \end{aligned}
    \label{eq:block-recompose}
  \end{align}
\end{subequations}
\endgroup

\vspace{8pt}
Here, the two operators capture complementary spatial dynamics: $\Phi_{\mathrm{phy}}$ denotes Slice-Residual
Physics-Attention, which models adaptive interactions among intrinsic physical states, while
$\Phi_{\mathrm{spec}}$ denotes the axis-factorized Fourier operator for global spectral modeling. The two
subspaces are updated independently and then recomposed within each block through a full-channel
SwiGLU network, enabling information exchange between physical and spectral representations. After
$L$ blocks, layer normalization and a linear readout produce the target fields
$\widehat{\mathbf{U}} \in \mathbb{R}^{N \times C_{\mathrm{out}}}$.

\vspace{1pt}
\subsection{Slice-Residual Physics-Attention}
\vspace{3pt}
\label{sec:method-srpa}

To overcome the quadratic complexity of point-to-point self-attention in discretized domains,
Transolver introduced Physics-Attention, which groups spatial points into $M$ intrinsic physical
tokens via soft slicing and models interactions strictly in the token
space~\citep{wu2024Transolver}. However, a key limitation of vanilla Physics-Attention lies in its
state update rule: at each layer, the aggregated slice tokens $\mathbf{Z}$ are completely
overwritten by the attention-mixed values $\Delta \mathbf{Z}$. Across deeper layers, this repeated replacement increasingly mixes slice representations, weakening distinctions among physical states and reducing token diversity (Figure~\ref{fig:efficiency-representations}c) (Appendix~\ref{app:srpa-theory}). A recent alternative, LinearNO, reformulates Physics-Attention as linear attention and removes the separate slice-to-slice attention stage while preserving input-adaptive global aggregation~\citep{hu2026linearno}.

\begingroup
\setlength{\parskip}{5pt}
\setlength{\abovedisplayskip}{2pt}
\setlength{\belowdisplayskip}{2pt}
\setlength{\abovedisplayshortskip}{0pt}
\setlength{\belowdisplayshortskip}{2pt}

\Needspace{22\baselineskip}
\begingroup
\newcommand{\SRPAFigureWidth}{0.49\textwidth} 
\newcommand{\SRPAFigureGap}{0.025\textwidth} 
\setlength{\columnsep}{\SRPAFigureGap}
\setlength{\intextsep}{2pt}
\setlength{\parskip}{5pt}
\begin{wrapfigure}{r}{\SRPAFigureWidth}
  \centering
  \setlength{\parskip}{0pt}
  \captionsetup{width=\linewidth,skip=4pt,justification=justified,singlelinecheck=false}
  \includegraphics[width=\linewidth]{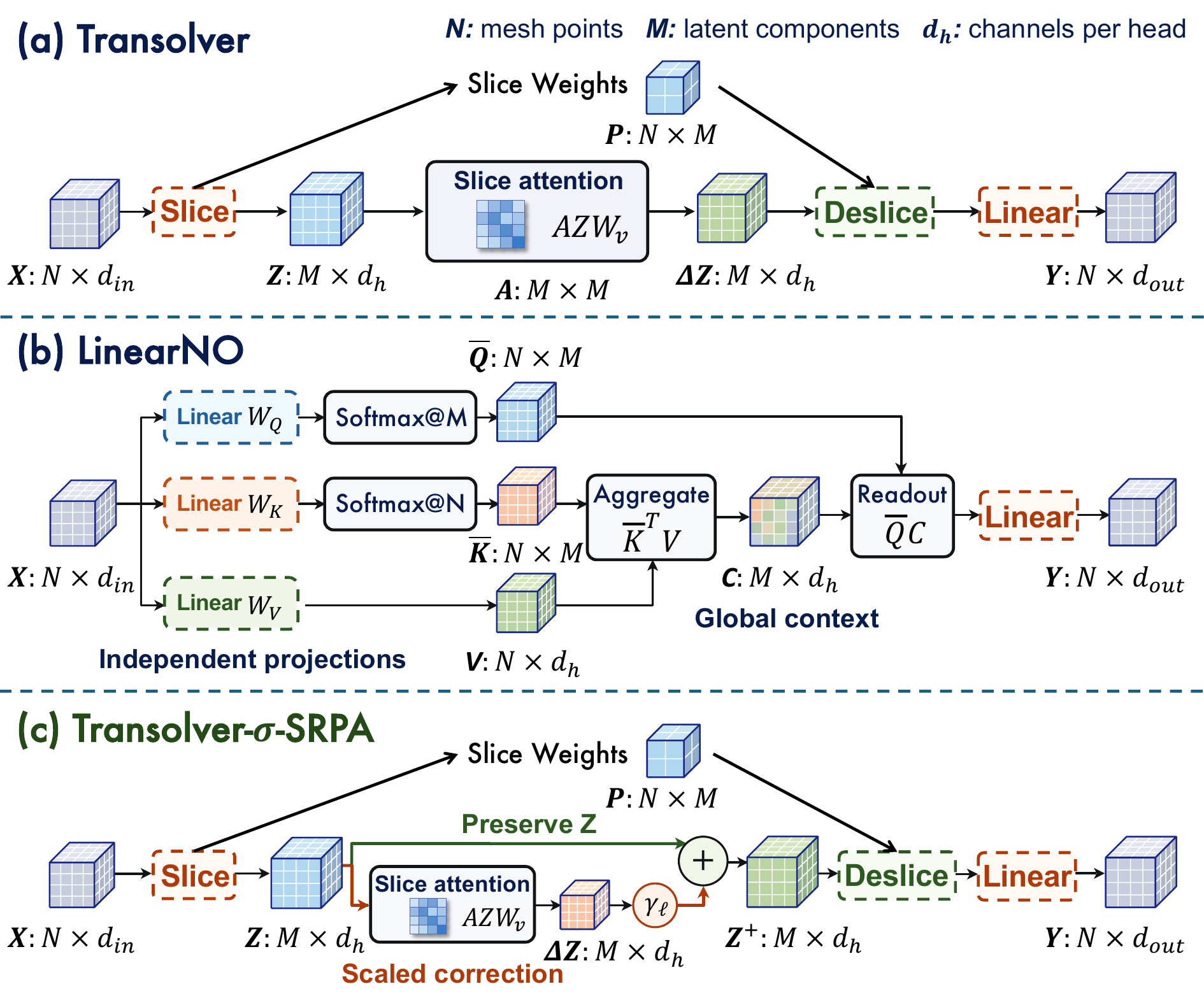}
  \caption{\textbf{Comparison of slice-based operator updates.}
  \textbf{(a)} Transolver updates physical states with solely attention.
  \textbf{(b)} LinearNO uses linear global-context aggregation.
  \textbf{(c)} SRPA retains slice states with scaled residual attention.}
  \label{fig:srpa-comparison}
\end{wrapfigure}
\noindent
We argue that inter-slice interactions can help capture non-local physical correlations, but should refine the existing slice states rather than replace them. To
this end, we propose \textbf{Slice-Residual Physics-Attention (SRPA)}. By introducing an explicit
residual path in the physical token space, SRPA retains the primary slice identities $\mathbf{Z}$
and incorporates the attention response $\Delta \mathbf{Z}$ as a learnably scaled residual
correction ($\mathbf{Z}^+ = \mathbf{Z} + \gamma_\ell \Delta \mathbf{Z}$). As illustrated in
Figure~\ref{fig:srpa-comparison}, SRPA preserves slice identities while enabling effective
inter-slice refinement.

\noindent
Formally, given the normalized physical subspace features \(\mathbf{Y}_{\mathrm{phy}}=\operatorname{LN}_{\mathrm{phy}}(\mathbf{H}_{\mathrm{phy}})\in\mathbb{R}^{N\times C_{\mathrm{phy}}}\), we first apply separate projections to obtain routing and feature representations, then partition outputs into \(n_h\) heads with head dimension \(d_h=C_{\mathrm{phy}}/n_h\). For each head, we denote these representations by \(\mathbf{R},\mathbf{F}\in\mathbb{R}^{N\times d_h}\). The point-to-slice routing matrix $\mathbf{P} \in \mathbb{R}^{N
\times M}$ softly assigns $N$ spatial points to $M$ physical slice tokens ($M \ll N$), forming slice states $\mathbf{Z} \in \mathbb{R}^{M \times d_h}$ as defined in
~\Eqref{eq:physical-slicing}, where $\mathbf{D}$ normalizes each physical
token by its total spatial mass, and temperature $\tau$ controls assignment sharpness across the
domain. \(\mathbf W_s\in\mathbb R^{d_h\times M}\) and \(\mathbf b_s\in\mathbb R^M\) are the learnable routing weight and bias. \(\mathbf1_N,\mathbf1_M\) are all-ones vectors, and \(\varepsilon>0\) stabilizes slice normalization.

Next, rather than replacing $\mathbf{Z}$ directly, SRPA performs multi-head self-attention across
the $M$ physical tokens to compute an interaction response $\Delta\mathbf{Z}$, which is added back to $\mathbf{Z}$ as a scaled residual correction (~\Eqref{eq:srpa-update}), where $\mathbf{W}_q, \mathbf{W}_k,
\mathbf{W}_v \in \mathbb{R}^{d_h \times d_h}$ are projection matrices. \(\mathbf A\) is the slice-attention matrix, and \(\operatorname{softmax}_M\) normalizes each row over \(M\) entries. The scalar
$\gamma_\ell$ is initialized to zero following ReZero~\citep{bachlechner2021rezero}. This ensures an
exact identity mapping in slice space at initialization, with subsequent inter-slice corrections controlled by the learned scale. 
\par
\endgroup

\vspace{-2pt}
\noindent
\begin{minipage}[t]{0.48\linewidth}
  \setlength{\abovedisplayskip}{1pt}
  \setlength{\belowdisplayskip}{1pt}
  \setlength{\abovedisplayshortskip}{0pt}
  \setlength{\belowdisplayshortskip}{1pt}
  \begin{equation}
    \begin{aligned}
      &\mathbf{P}
      =\operatorname{softmax}_{M}\!\left(
      \frac{\mathbf{R}\mathbf{W}_s
      +\mathbf{1}_N\mathbf{b}_s^{\top}}{\tau}
      \right),\\
      &\mathbf{D}
      =\operatorname{diag}\!\left(
      \mathbf{P}^{\top}\mathbf{1}_N
      +\varepsilon\mathbf{1}_M
      \right),\\
      &\mathbf{Z}
      =\mathbf{D}^{-1}\mathbf{P}^{\top}\mathbf{F}.
    \end{aligned}
    \label{eq:physical-slicing}
  \end{equation}
\end{minipage}
\hfill
\begin{minipage}[t]{0.48\linewidth}
  \setlength{\abovedisplayskip}{1pt}
  \setlength{\belowdisplayskip}{1pt}
  \setlength{\abovedisplayshortskip}{0pt}
  \setlength{\belowdisplayshortskip}{1pt}
  \begin{equation}
    \begin{aligned}
      &\mathbf{A}
      =\operatorname{softmax}_{M}\!\left(
      \frac{
      (\mathbf{Z}\mathbf{W}_q)
      (\mathbf{Z}\mathbf{W}_k)^{\top}}
      {\sqrt{d_h}}
      \right),\\
      &\Delta\mathbf{Z}
      =\mathbf{A}\mathbf{Z}\mathbf{W}_v,\\
      &\mathbf{Z}^{+}
      =\mathbf{Z}+\gamma_{\ell}\Delta\mathbf{Z}.
    \end{aligned}
    \label{eq:srpa-update}
  \end{equation}
\end{minipage}

Finally, the refined slice tokens $\mathbf{Z}^+$ are mapped back to spatial domain points via
deslicing using the routing weights $\mathbf{P}$, concatenated across heads, and projected to yield
the physical operator output:

\vspace{-4pt}
\begin{equation}
  \begin{aligned}
    &\mathbf{O}^{(r)}
    =\mathbf{P}^{(r)}\mathbf{Z}^{+,(r)},
    \qquad r=1,\ldots,n_h,\\
    &\Phi_{\mathrm{phy}}(\mathbf{Y}_{\mathrm{phy}})
    =\operatorname{Proj}_o\!\left(
    \operatorname{Concat}_{r=1}^{n_h}\mathbf{O}^{(r)}
    \right)
    \in\mathbb{R}^{N\times C_{\mathrm{phy}}}.
  \end{aligned}
  \label{eq:srpa-deslicing}
\end{equation}

Here, \(r\) indexes heads, \(\mathbf{O}^{(r)}\) is the desliced head output, and
\(\operatorname{Proj}_o\) is the output projection. Crucially, the inner residual path
carries the unmixed slice state $\mathbf{Z}$ into the deslicing stage
($\mathbf{P}\mathbf{Z}$), whereas the outer block residual in
Equation~\ref{eq:block-private} bypasses the physical operator
$\Phi_{\mathrm{phy}}$ through $\mathbf{H}_{\mathrm{phy}}$.
Appendix~\ref{app:srpa-theory} provides conditional pairwise and centered-variation
bounds for slice-space updates.
\par
\endgroup 

\vspace{-3pt}
\subsection{Axis-Factorized Spectral Operator}
\vspace{-3pt}
\label{sec:method-spectral}

While Slice-Residual Physics-Attention captures content-adaptive spatial interactions in the physical subspace, the spectral subspace plays a complementary role by modeling global spatial structure through retained Fourier modes.
Rather than using a multi-dimensional Fourier operator with parameter complexity $\mathcal{O}(C_{\mathrm{spec}}^2K^d)$, we adopt an axis-factorized spectral operator inspired by F-FNO~\citep{tran2023factorized}. \(K\) denotes the common per-axis mode budget. This factorization enables efficient spectral modeling in deep backbones with reduced parameter growth while providing constraints that help mitigate trajectory drift during autoregressive rollout over prediction horizons.

Specifically, for a $d$-dimensional field, 1D real Fast Fourier Transforms ($\operatorname{rFFT}$)
are applied independently along each spatial axis $a \in \{1, \ldots, d\}$. The lowest $K_a$ Fourier
modes are filtered via learnable complex channel-mixing matrices, transformed back via inverse
$\operatorname{rFFT}$, and summed across axes to form the updated representation in spectral space
(Figure~\ref{fig:transolver-sigma-architecture}(b)). The full formulation, mode truncation details,
and implementation conventions are provided in Appendix~\ref{app:spectral-operator}.

\endgroup 

\begingroup
\setlength{\textfloatsep}{4pt plus 1pt minus 1pt}
\setlength{\intextsep}{4pt plus 1pt minus 1pt}
\setlength{\abovecaptionskip}{2pt}
\setlength{\belowcaptionskip}{0pt}
\setlength{\abovedisplayskip}{3pt plus 1pt minus 1pt}
\setlength{\belowdisplayskip}{3pt plus 1pt minus 1pt}
\setlength{\abovedisplayshortskip}{0pt}
\setlength{\belowdisplayshortskip}{3pt plus 1pt minus 1pt}
\setlength{\columnsep}{12pt}

\vspace{-3pt}
\section{Experiments}
\vspace{-3pt}
\label{sec:experiments}

We conduct extensive experiments to evaluate Transolver-$\sigma$, extending from five canonical PDE
benchmarks to two coupled multiphysics problems and four real-world tasks. Covering
both steady-state and time-dependent systems, these evaluations assess field prediction accuracy and
autoregressive rollout robustness across numerical simulations and experimental
measurements.

\begin{table*}[!t]
  \centering
  \caption {\textbf{Results on Standard Benchmarks.} Relative $L_2$ errors (in \%) of pressure $p$,
  vorticity $\omega$, velocity $\mathbf{u}$, and smoke density $d$ across standard benchmarks, where
  ``avg.'' and ``final'' denote horizon-aggregated and final-frame errors. Our results are reported as
  mean $\pm$ std over three runs.}
  \label{tab:pde-main-results}
  \small
  \setlength{\tabcolsep}{2.8pt}
  \renewcommand{\arraystretch}{1.08}
  \begin{tabular*}{\linewidth}{@{\extracolsep{\fill}}l|cccccccc@{}}
    \toprule
    \multirow{2}{*}{\textsc{Models}} & \multicolumn{1}{c}{\textsc{Darcy}} & \multicolumn{1}{c}{\shortstack{\textsc{Navier--}\\\textsc{Stokes}}} & \multicolumn{2}{c}{\textsc{2D Kolmogorov}} & \multicolumn{2}{c}{\textsc{3D isotropic}} & \multicolumn{2}{c}{\textsc{3D smoke}} \\
    \cmidrule(lr){2-2}\cmidrule(lr){3-3}\cmidrule(lr){4-5}\cmidrule(lr){6-7}\cmidrule(lr){8-9}
    & $p$ & $\omega$ & $\omega_{\mathrm{avg.}}$ & $\omega_{\mathrm{final}}$ & $\mathbf{u}$ & $p$ & $\mathbf{u}$ & $d$ \\
    \midrule
    U-Net~\citeyearpar{ronneberger2015unet} & 0.80 & 19.82 & 32.81 & 47.24 & 35.76 & 48.49 & 44.10 & 14.95 \\
    ViT~\citeyearpar{dosovitskiy2021vit} & 0.47 & 4.64 & 17.19 & 27.19 & 31.66 & 44.54 & 38.41 & 12.86 \\
    \midrule
    FNO~\citeyearpar{li2021fno} & 1.08 & 15.56 & 29.78 & 45.67 & 33.82 & 46.34 & 42.55 & 13.44 \\
    F-FNO~\citeyearpar{tran2023factorized} & 0.77 & 23.22 & 24.53 & 38.61 & 23.03 & 32.64 & 37.13 & 12.36 \\
    RNO~\citeyearpar{lu2025radon} & 0.54 & 8.94 & 18.93 & 29.14 & 22.37 & 30.72 & 28.51 & 11.53 \\
    HPM~\citeyearpar{yue2025holistic} & 0.46 & 7.34 & 34.19 & 58.81 & 24.85 & 39.11 & 55.23 & 14.94 \\
    DRIFT-Net~\citeyearpar{li2026drift} & \underline{0.45} & 4.32 & 13.83 & 23.97 & 14.56 & 22.64 & 28.99 & 10.82 \\
    \midrule
    FactFormer~\citeyearpar{li2023factformer} & 0.47 & \underline{4.23} & 14.29 & 25.11 & 13.87 & 21.74 & \underline{23.37} & \underline{9.19} \\
    Transolver~\citeyearpar{wu2024Transolver} & 0.58 & 9.00 & 43.98 & 75.03 & 26.42 & 39.47 & 61.08 & 16.33 \\
    Transolver++~\citeyearpar{luo2025transolverpp} & 0.49 & 7.19 & 47.51 & 79.07 & 26.23 & 39.67 & 61.14 & 16.25 \\
    EddyFormer~\citeyearpar{du2025eddyformer} & 0.66 & 4.25 & \underline{12.69} & \underline{21.99} & \underline{12.70} & \underline{19.43} & 25.87 & 9.35 \\
    LinearNO~\citeyearpar{hu2026linearno} & 0.50 & 6.99 & 33.37 & 56.14 & 22.86 & 37.14 & 50.04 & 14.18 \\
    \midrule
    \textbf{Transolver-$\sigma$ (Ours)}
    & \meanstd{0.40}{0.01}
    & \meanstd{2.79}{0.05}
    & \meanstd{2.52}{0.05}
    & \meanstd{4.14}{0.07}
    & \meanstd{10.80}{0.16}
    & \meanstd{15.65}{0.23}
    & \meanstd{17.43}{0.25}
    & \meanstd{7.07}{0.09} \\
    \midrule
    \textsc{Relative promotion} & $11.1\%$ & $34.0\%$ & $80.1\%$ & $81.2\%$ & $15.0\%$ & $19.4\%$ & $25.4\%$ & $23.1\%$ \\
    \bottomrule
  \end{tabular*}
\end{table*}

\vspace{-3pt}
\subsection{Experimental Setup}
\vspace{-3pt}
\label{sec:experimental-setup}

\paragraph{Benchmarks}
Our evaluation spans a broad range of steady-state and time-dependent PDEs in both 2D and 3D, across diverse problem settings, including Darcy flow and Navier--Stokes from FNO~\citep{li2021fno}, and Kolmogorov flow, isotropic
turbulence, and smoke buoyancy from FactFormer~\citep{li2023factformer}. Beyond simulation-generated
benchmarks, we further evaluate on RealPDEBench~\citep{hu2026realpdebench}, which provides
real-world experimental measurements of fluid and combustion dynamics, offering a more realistic
test of neural PDE solvers beyond numerical simulation. We additionally include the coupled
reacting-flow benchmarks IgnitHIT and EvolveJet from REALM~\citep{mao2025benchmarking}. Full
benchmark details are provided in Appendix~\ref{app:experiment-protocols}.

\paragraph{Baselines}
We comprehensively compare Transolver-$\sigma$ with $12$ baselines, covering representative
transform-domain and hybrid neural operators, such as FNO~\citep{li2021fno},
DRIFT-Net~\citep{li2026drift}, etc.; Transformer-based PDE solvers, including
Transolver~\citep{wu2024Transolver}, LinearNO~\citep{hu2026linearno}, etc.; and general-purpose
architectures such as U-Net~\citep{ronneberger2015unet} and ViT~\citep{dosovitskiy2021vit}, etc. For
RealPDEBench and REALM, we additionally compare against the baselines reported in their original
studies~\citep{hu2026realpdebench,mao2025benchmarking}.

\paragraph{Implementations}
For fair comparison, we set the hidden dimension $C$ to $128$ or $256$ and the number of blocks \(L\) to a task-specific value in \(\{4,6,8\}\), keeping the model size comparable to
Transformer-based PDE solvers such as Transolver~\citep{wu2024Transolver} and
LinearNO~\citep{hu2026linearno}. All main experiments are conducted on a single NVIDIA A100 GPU. We
primarily report relative $L^2$ errors for field prediction and autoregressive rollout, while
following the original evaluation protocols of RealPDEBench and REALM for their task-specific
metrics. See Appendix~\ref{app:experiments} for comprehensive implementation details and metric
definitions, and training configurations.

\vspace{-3pt}
\subsection{Main Results}
\vspace{-3pt}

\paragraph{Standard Benchmarks}
As shown in Table~\ref{tab:pde-main-results}, Transolver-$\sigma$ achieves state-of-the-art performance
on five widely used PDE benchmarks spanning steady-state prediction and time-dependent dynamics in
both 2D and 3D. It obtains the lowest errors on all eight metrics, with a benchmark-averaged
relative error reduction of $33.4\%$ over the strongest per-metric baselines. 
The gains further extend to 3D dynamics, where Transolver-$\sigma$ reduces velocity and density errors on
smoke buoyancy by $25.4\%$ and $23.1\%$. Overall, these results demonstrate the consistent
effectiveness of joint spectral--physical modeling across diverse PDE systems and forecasting
settings.
Moreover, scaling experiments further show performance gains from more data, finer grids, and larger models (Appendix~\ref{app:model-scalability}).

\paragraph{RealPDEBench: Real-World Experimental Measurements}
To further assess performance beyond simulation-based benchmarks, we evaluate Transolver-$\sigma$ on
RealPDEBench~\citep{hu2026realpdebench}, the first scientific ML benchmark built on real-world
physical measurements paired with numerical simulations across diverse operating conditions. As shown in
Table~\ref{tab:realpdebench-results}, Transolver-$\sigma$ achieves superior performance across all four
tasks and metrics, reducing Foil RMSE by $53.0\%$ and Combustion fRMSE by $25.0\%$ over the
strongest respective baselines. Moreover, it surpasses U-Net, the strongest model reported in
RealPDEBench, with only $13.6\%$ of its parameters ($3.13$M vs.\ $23.08$M), achieving better
accuracy with substantially lower model complexity and computational overhead.

\begin{table*}[!t]
  \centering
  \caption{\textbf{Results on RealPDEBench.}
  RMSE, relative $L^2$, and Fourier-space RMSE (fRMSE) across four real-world measurement tasks.
  Params denotes the mean parameter count (M) across tasks.}

  \label{tab:realpdebench-results}
  \fontsize{8.0pt}{9pt}\selectfont
  \setlength{\tabcolsep}{0.6pt}
  \renewcommand{\arraystretch}{1.24}
  \begin{tabular*}{\linewidth}{@{\extracolsep{\fill}}l|c|ccc|ccc|ccc|ccc@{}}
    \toprule
    \multirow{2}{*}{\textsc{Models}} & \multirow{2}{*}{\shortstack{\textsc{Params}\\(M)}} & \multicolumn{3}{c}{\textsc{Controlled Cylinder}} & \multicolumn{3}{c}{\textsc{FSI}} & \multicolumn{3}{c}{\textsc{Foil}} & \multicolumn{3}{c}{\textsc{Combustion}} \\
    \cmidrule(lr){3-5}\cmidrule(lr){6-8}\cmidrule(lr){9-11}\cmidrule(lr){12-14}
    & & RMSE & Rel $L^2$ & fRMSE & RMSE & Rel $L^2$ & fRMSE & RMSE & Rel $L^2$ & fRMSE & RMSE & Rel $L^2$ & fRMSE \\
    \midrule
    U-Net & 23.08 & \underline{0.80} & \underline{5.55} & 0.10 & \underline{0.85} & \underline{5.83} & \underline{0.07} & \underline{1.00} & \underline{1.59} & 0.11 & 2.16 & 54.87 & 0.26 \\
    CNO & 8.00 & 0.81 & 5.83 & \underline{0.09} & 1.05 & 7.41 & 0.09 & 1.36 & 2.53 & 0.18 & 2.48 & 60.30 & 0.32 \\
    DeepONet & 3.53 & 3.09 & 23.99 & 0.58 & 3.50 & 25.02 & 0.51 & 2.22 & 3.63 & 0.28 & 2.29 & 57.51 & 0.28 \\
    FNO & 109.10 & 0.97 & 7.23 & 0.12 & 1.29 & 8.92 & 0.12 & 1.30 & 2.28 & 0.15 & 2.26 & 56.64 & 0.27 \\
    WDNO & 233.15 & 1.15 & 9.27 & 0.14 & 1.17 & 8.17 & 0.11 & 1.62 & 4.44 & 0.18 & 3.80 & 86.05 & 0.55 \\
    MWT & 2.90 & 1.02 & 7.47 & 0.11 & 1.28 & 9.10 & 0.10 & 1.33 & 2.27 & 0.15 & 2.21 & 55.60 & 0.27 \\
    GK-Transformer & 59.20 & 1.03 & 7.67 & 0.12 & 1.27 & 9.16 & 0.11 & 1.42 & 2.45 & 0.18 & 2.47 & 60.83 & 0.31 \\
    Transolver & 4.30 & 1.71 & 12.00 & 0.24 & 2.36 & 15.67 & 0.27 & 2.20 & 3.70 & 0.27 & 3.75 & 78.25 & 0.50 \\
    DPOT-S-FT & 38.73 & 0.84 & 5.98 & 0.10 & 1.05 & 7.46 & 0.08 & 1.06 & 1.66 & \underline{0.10} & 2.09 & 53.49 & \underline{0.24} \\
    DPOT-L-FT & 667.75 & 0.84 & 6.03 & 0.10 & 0.99 & 6.87 & 0.08 & 1.05 & \underline{1.59} & 0.11 & \underline{2.08} & \underline{53.31} & \underline{0.24} \\
    \midrule
    \textbf{Transolver-$\sigma$} & 3.13 & \textbf{0.65} & \textbf{5.20} & \textbf{0.06} & \textbf{0.74} & \textbf{5.55} & \textbf{0.06} & \textbf{0.47} & \textbf{1.40} & \textbf{0.08} & \textbf{1.86} & \textbf{52.86} & \textbf{0.18}\\
    \bottomrule
  \end{tabular*}
\end{table*}

\textbf{REALM: Coupled Multiphysics Dynamics} To push the evaluation beyond standalone fields and
empirical measurements, we further challenge Transolver-$\sigma$ on complex multi-field coupled reacting
flows from REALM (IgnitHIT and EvolveJet). Coupled multiphysics systems present severe challenges
for neural operators, as non-linear cross-field interactions often trigger explosive error
accumulation during long-horizon dynamic rollouts. As reported in Appendix~\ref{app:realm-training}
(Table~\ref{tab:realm-results}), Transolver-$\sigma$ outperforms all baselines, further validating our
joint spectral--physical modeling.

\vspace{-3pt}
\subsection{Model Analysis}
\vspace{-3pt}
\label{sec:experiment-analysis}

\paragraph{Single-Step Accuracy vs. Rollout Robustness}
Evaluating autoregressive forecasts across four dynamical systems reveals a fundamental trade-off in
single-domain modeling: short-term precision does not guarantee long-horizon stability. As shown in
Figure~\ref{fig:rollout-comparison}(a--d), the relative advantage of physical-only (Physolver) and
spectral-only (Specsolver) models shifts as prediction leads grow. In contrast, Transolver-$\sigma$
achieves lower errors at every prediction lead across all four systems, cutting rollout-averaged
error by \textbf{35.6\%--68.7\%} relative to the stronger single-domain counterpart
(Figure~\ref{fig:rollout-comparison}(e)). These findings confirm that joint physical-state and
spectral modeling is important for suppressing error accumulation in autoregressive rollouts
(details in Appendix~\ref{app:rollout-comparison} and Appendix~\ref{app:exact-error-decomposition}).

\vspace{-3pt}
\paragraph{Efficiency}
We compare training efficiency with representative PDE solvers selected for
their predictive performance and architectural relevance
(Figure~\ref{fig:efficiency-representations}(a,b)).
Transolver-$\sigma$ requires the least parameter storage across both tasks while
maintaining competitive training times.
Specifically, on NS2D, Transolver-$\sigma$ reduces relative $L_2$ error by $34.0\%$
compared with FactFormer, the strongest baseline on this task
($2.79\%$ vs.\ $4.23\%$; Table~\ref{tab:pde-main-results}).
In the efficiency comparison, it has a $3.4\times$ smaller parameter-storage
footprint and $1.05\times$ the mean training speed of FactFormer
(Appendix~\ref{app:efficiency-measurements}).

\begin{figure}[!t]
  \centering
  \includegraphics[width=\textwidth]{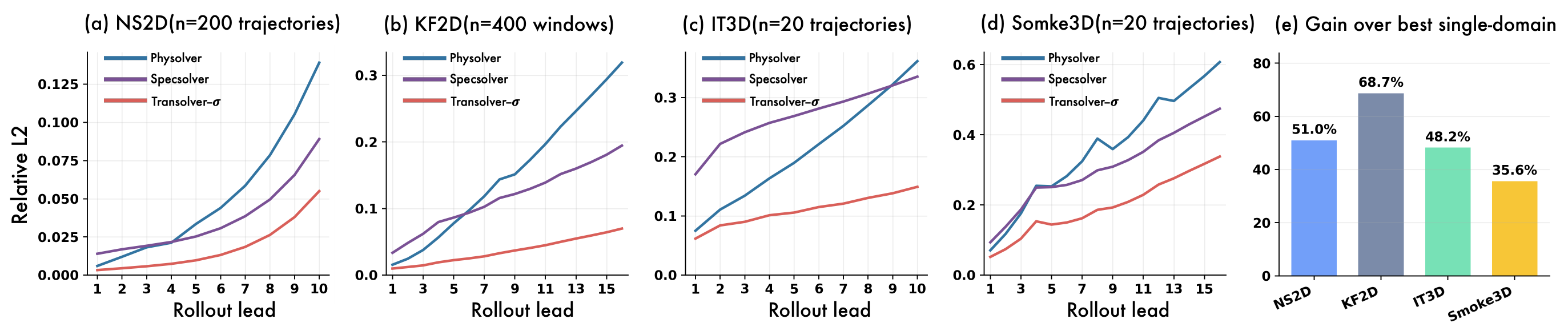}
  \vspace{-10pt}
  \caption{\textbf{Autoregressive forecasting errors across four dynamical systems.}
  \textbf{(a--d)} Relative $L^2$ errors over prediction leads on Standard Benchmarks, with $n$ the
  evaluation sample count.
  \textbf{(e)} Percentage reduction in rollout-averaged error over the stronger single-domain
  baseline.}
  \label{fig:rollout-comparison}
\end{figure}

\begin{table*}[!h]
  \centering
  \newcommand{\AblationLeftWidth}{0.43\textwidth}
  \newcommand{\AblationColumnGap}{0.015\textwidth}

  \begin{minipage}[t]{\AblationLeftWidth}
    \vspace{2pt}
    \justifying

    \textbf{Ablations.}
    We conduct ablation studies on Darcy, NS2D, and Smoke3D to evaluate our
    core design choices (Table~\ref{tab:ablation-components}).
    First, bypassing either subspace branch or replacing our parallel
    architecture with sequential hybrid configurations degrades performance
    on dynamic tasks, with the full model reducing Smoke3D velocity error by
    \textbf{16.0\%} over the stronger sequential baseline.
    Second, removing shared CPE, restricting cross-subspace mixing via grouped
    SwiGLU (NS2D error increases from \textbf{2.79\%} to \textbf{4.32\%}),
    or replacing SRPA with vanilla Physics-Attention~\citep{wu2024Transolver}
    consistently increases errors across all metrics.
    These comparisons collectively support the effectiveness of our joint subspace organization,
    full-channel recomposition, and slice-residual updates.

  \end{minipage}%
  \hspace{\AblationColumnGap}%
  \begin{minipage}[t]{\dimexpr\textwidth-\AblationLeftWidth-\AblationColumnGap\relax}
    \vspace{2pt}
    \raggedright

    \begin{threeparttable}
      \setlength{\parskip}{0pt}
      \captionsetup{position=top, skip=2pt}

      \caption{\textbf{Ablations of Transolver-$\sigma$.}
      Relative $L_2$ errors (\%) for pressure $p$, vorticity $\omega$,
      velocity $\mathbf{u}$, and density $d$.
      Branch ablations use identity mappings; ``w/o Cross-Branch FFN''
      removes cross-branch mixing while retaining grouped SwiGLU.
      SRPA $\rightarrow$ PA replaces SRPA with Physics-Attention.}
      \label{tab:ablation-components}

      \small
      \setlength{\tabcolsep}{3.5pt}
      \renewcommand{\arraystretch}{1.08}
      \begin{tabular*}{\linewidth}{@{\extracolsep{\fill}}lcccc@{}}
        \toprule
        \multirow{2}{*}{\textsc{Variant}}
        & \textsc{Darcy}
        & \textsc{NS-2D}
        & \multicolumn{2}{c}{\textsc{Smoke-3D}} \\
        \cmidrule(lr){2-2}
        \cmidrule(lr){3-3}
        \cmidrule(lr){4-5}
        & $p$ & $\omega$ & $\mathbf{u}$ & $d$ \\
        \midrule
        w/o Physical Branch
        & 0.65 & 4.96 & 26.29 & 8.97 \\
        w/o Spectral Branch
        & \textbf{0.40} & 7.94 & 29.02 & 10.15 \\
        \midrule
        Physical $\rightarrow$ Spectral
        & \underline{0.42} & \underline{3.04} & 20.75 & 7.78 \\
        Spectral $\rightarrow$ Physical
        & \underline{0.42} & 3.15 & 21.88 & 8.06 \\
        \midrule
        w/o Shared CPE
        & \underline{0.42} & 3.69 & 23.56 & 8.59 \\
        w/o Cross-Branch FFN
        & 0.43 & 4.32 & 25.53 & 9.04 \\
        SRPA $\rightarrow$ PA
        & \underline{0.42} & 3.07 & \underline{19.53} & \underline{7.50} \\
        \midrule
        \textbf{Transolver-$\sigma$}
        & \textbf{0.40}
        & \textbf{2.79}
        & \textbf{17.43}
        & \textbf{7.07} \\
        \bottomrule
      \end{tabular*}

    \end{threeparttable}
  \end{minipage}

\end{table*}

\vspace{-3pt}
\paragraph{Slice-Attention Analysis}
As presented in Table~\ref{tab:slice-attention-analysis}, vanilla Physics-Attention can
paradoxically degrade performance relative to its no-attention counterpart---a failure mode also
reflected in LinearNO~\citep{hu2026linearno}, which removes separate slice-to-slice attention
through a linear-attention reformulation. Crucially, this limitation stems not from unnecessary
inter-slice interaction, but from the update mechanism itself: vanilla attention replaces existing
slice states with attention-mixed values without an explicit slice-space identity path, eroding
feature distinctions as queries repeatedly draw from concentrated keys. SRPA addresses this by
retaining incoming slice states and appending a learnably scaled residual correction, allowing
cross-slice interaction to refine rather than replace representations. Consequently, SRPA converts
slice attention from a detrimental update into a beneficial correction, reducing errors on Darcy and
NS2D by \textbf{9.5\%} and \textbf{13.5\%}, respectively, over the no-attention baseline.
Complementary visualizations in Figure~\ref{fig:efficiency-representations}(c,d) confirm that
Transolver-$\sigma$ exhibits spatially structured routing across more slices with less key concentration,
supporting our design goal of preserving distinct states during interaction
(Appendix~\ref{app:slice-attention-analysis}).

\setlength{\floatsep}{4pt plus 1pt minus 1pt}

\begin{figure}[!h]
  \centering
  \includegraphics[width=\textwidth]{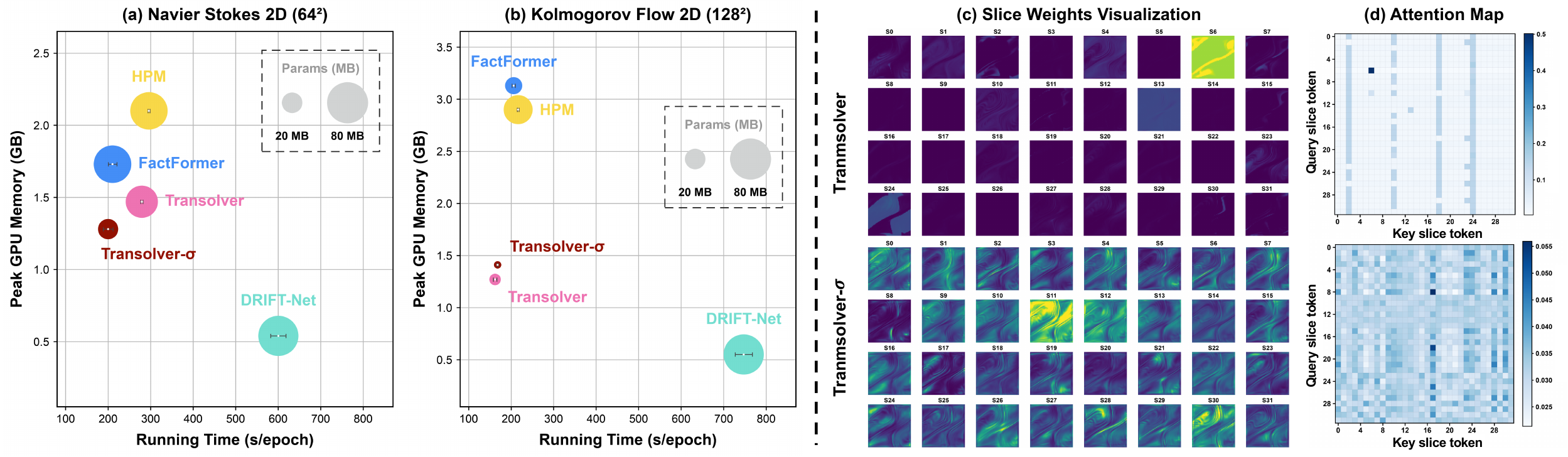}
  \captionsetup{skip=6pt}
  \caption{\textbf{Training efficiency and learned representations.}
  \textbf{(a,b)} Epoch time and peak GPU memory on NS2D ($64^2$) and Kolmogorov flow ($128^2$), with
  bubble area indicating parameter storage and error bars denoting $\pm1$ standard deviation over
  three runs.
  \textbf{(c, d)} Spatial routing maps and slice-attention matrices extracted from the final layer on
  NS2D.}
  \label{fig:efficiency-representations}
\end{figure}

\begin{figure}[!t]
  \centering
  \includegraphics[width=\textwidth]{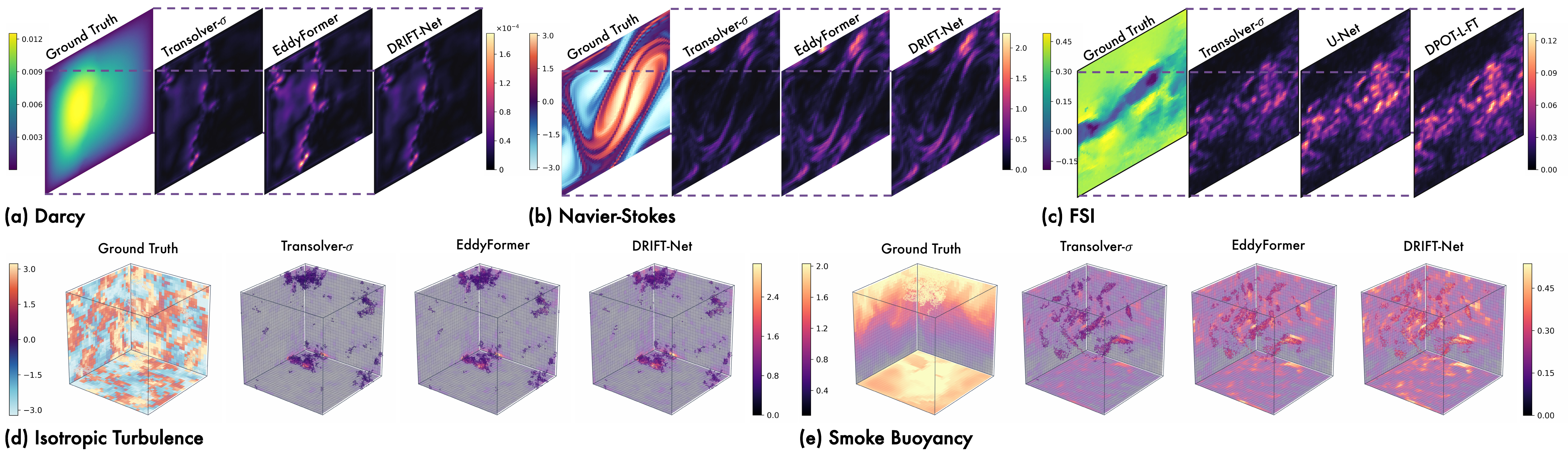}
  \captionsetup{skip=6pt}
  \caption{\textbf{Case study on error maps of different models.} Within each task, models share the
  same test case, prediction time, spatial view, and error scale. See
  Appendix~\ref{app:rollout-case-studies} for more details.}
  \label{fig:case-study-baselines}
\end{figure}

\paragraph{Case study}
Figure~\ref{fig:case-study-baselines} compares spatial error maps of
Transolver-$\sigma$, EddyFormer, and DRIFT-Net on representative simulation
and experimental cases.
In Navier--Stokes, the baselines exhibit pronounced band-shaped errors,
while Transolver-$\sigma$ shows weaker errors along the same structures.
The FSI case likewise shows weaker localized errors for Transolver-$\sigma$ than the baselines.

We further compare Physolver, Specsolver, and Transolver-$\sigma$ across
successive autoregressive leads (Appendix~\ref{app:rollout-case-studies}).
In NS2D, Physolver's initially small errors develop into pronounced bands
at later leads, while Specsolver exhibits weaker late-stage errors.
On Kolmogorov flow, both single-domain models show increasingly prominent
error patterns, whereas Transolver-$\sigma$ maintains lower error intensity
throughout the displayed sequence.
Together with the three-dimensional cases, these visualizations provide
a spatial view of the rollout differences quantified in
Figure~\ref{fig:rollout-comparison}.

\noindent\textbf{Layer-wise SRPA Corrections}\quad
To examine how SRPA uses its residual path after training, we inspect learned
scales and realized correction ratios across five PDE tasks.
Each ratio measures the joint norm of the scaled slice-attention correction
relative to the incoming states; we evaluate three
{\parfillskip=0pt\par}

\begingroup
\setlength{\columnsep}{0.015\textwidth}
\setlength{\intextsep}{4pt}
\Needspace{8\baselineskip} 
\begin{wraptable}{r}{0.60\textwidth}
  \vspace{-5.4pt} 
  \centering
  \setlength{\parskip}{0pt}
  \captionsetup{position=top,skip=2pt}
  \caption{\textbf{Effects of slice-attention updates.}
  }
  \label{tab:slice-attention-analysis}
  \small
  \setlength{\tabcolsep}{1.75pt}
  \renewcommand{\arraystretch}{1.08}
  \setlength{\aboverulesep}{\dimexpr 2\aboverulesep/3-\lightrulewidth/6\relax}
  \setlength{\belowrulesep}{\dimexpr 2\belowrulesep/3-\lightrulewidth/6\relax}
  \begin{tabular*}{\linewidth}{@{\extracolsep{\fill}}llcc@{}}
    \toprule
    Model & Slice-space update & Darcy & NS2D \\
    \midrule
    LinearNO & -- & 0.5033 & \underline{6.9871} \\
    \midrule
    Transolver & $\mathbf{A}\mathbf{Z}\mathbf{W}_v$ & 0.5807 & 9.0315 \\
    \textendash\,w/o Slice Attention & $\mathbf{Z}$ & \underline{0.4724} & 7.7124 \\
    \textendash\,with SRPA & $\mathbf{Z}+\gamma_\ell\mathbf{A}\mathbf{Z}\mathbf{W}_v$ & \textbf{0.4275} & \textbf{6.6735} \\
    \bottomrule
  \end{tabular*}
\end{wraptable}
\noindent
fixed held-out cases per task.
Corrections concentrate primarily in Darcy's first layer but remain measurable
throughout all eight Smoke3D layers.
Learned scale magnitudes alone do not determine the realized corrections.
These task- and layer-dependent profiles complement the preceding update
comparison, illustrating nonuniform use of residual interactions
(Appendix~\ref{app:srpa-layerwise-corrections}).
\par
\endgroup

\vspace{-3pt}
\section{Conclusion}
\vspace{-3pt}

We introduce Transolver-$\sigma$, a neural PDE solver that improves both
field prediction and autoregressive rollout through joint spectral--physical
subspace modeling. The key design is to preserve adaptive physical-state
interactions and spectral transformations in dedicated latent subspaces, while
repeatedly recomposing their responses across the network. Within the physical
subspace, Slice-Residual Physics-Attention further improves state interaction
by refining, rather than replacing, the incoming slice representations.
Transolver-$\sigma$ achieves a benchmark-averaged relative error reduction of
$33.4\%$ across five canonical PDE benchmarks, with consistent gains extending
to coupled multiphysics simulations and real-world measurements. Rollout comparisons and ablations further demonstrate the
complementary behaviors of physical-state and spectral modeling, while
highlighting the importance of dedicated subspace organization and
cross-subspace recomposition. These findings suggest that neural PDE solvers can benefit from preserving
specialized physical and spectral representations while explicitly learning
their interactions.

\setlength{\parskip}{6pt}
\subsection*{AI Use Statement}
We used generative AI tools for language polishing, grammar checking,
and translation, and to assist in developing and refining the
theoretical analyses and mathematical proofs in
Appendix~\ref{app:theoretical-analysis}.
All AI-assisted content was carefully reviewed and checked by the
authors. In particular, the authors reviewed the stated assumptions
and independently checked the derivations and logical steps of all
AI-assisted proofs, verifying that the mathematical claims follow
under the stated conditions. The authors take full responsibility
for the correctness and integrity of the final manuscript, including
all AI-assisted content.

\bibliography{iclr2027_conference}
\bibliographystyle{iclr2027_conference}

\clearpage
\appendix
\raggedbottom
\setlength{\parskip}{2pt}
\setlength{\floatsep}{8pt}
\setlength{\textfloatsep}{6pt}
\setlength{\intextsep}{6pt}
\setlength{\abovedisplayskip}{3pt}
\setlength{\belowdisplayskip}{3pt}
\setlength{\abovedisplayshortskip}{0pt}
\setlength{\belowdisplayshortskip}{3pt}
\captionsetup{skip=2pt}
\makeatletter
\renewcommand{\paragraph}{\@startsection{paragraph}{4}{\z@}{4pt}{-1em}{\normalsize\bfseries}}
\makeatother
\setcounter{topnumber}{5}
\setcounter{bottomnumber}{5}
\setcounter{totalnumber}{10}
\renewcommand{\topfraction}{0.95}
\renewcommand{\bottomfraction}{0.9}
\renewcommand{\textfraction}{0.05}
\renewcommand{\floatpagefraction}{0.85}

\paragraph{Appendix roadmap}
Appendix~\ref{app:theoretical-analysis} presents theoretical analyses and proofs.
Appendix~\ref{app:experiments} collects benchmarks, metrics, and implementation details.
Appendix~\ref{app:full-ablations} contains ablation analyses;
Appendix~\ref{app:supplementary-results} presents learned representations and prediction showcases.
Appendices~\ref{app:additional-experiments} and~\ref{app:efficiency-measurements}
contain additional experiments and the full efficiency analysis, respectively.

\section{Theoretical Analysis and Proofs}
\label{app:theoretical-analysis}

We develop a conditional explanation of why an attention response can be
harmful as a replacement yet useful as a residual correction.
We first characterize the slice--deslice path and changes in pairwise
differences and total centered variation. We then connect these changes to
prediction loss under explicit task conditions. A separate finite-horizon
analysis explains how prediction errors propagate through feedback.

\subsection{Geometry of Slice-Space Updates}
\label{app:srpa-theory}

We analyze the slice-space updates in
Section~\ref{sec:method-srpa}, separating aggregation from inter-slice interactions.
We first characterize the no-attention slice--deslice kernel, then establish
conditional bounds on pairwise differences and centered slice variation.
Both results concern a single head before output projection, with attention
dropout disabled.
The attention matrices are evaluated at a given input and may depend on it.

\paragraph{No-Attention Slice--Deslice Kernel}

\begin{proposition}[Conditional slice--deslice kernel]
  \label{prop:noattention-kernel}
  Fix a nonnegative routing matrix $\mathbf{P}\in\mathbb{R}^{N\times M}$ whose
  rows sum to one, as in Equation~\ref{eq:physical-slicing}.
  Let $\mathbf{D}=\operatorname{diag}(\mathbf{P}^{\top}\mathbf{1}_N
  +\varepsilon\mathbf{1}_M)$ with $\varepsilon>0$; removing cross-slice
  attention gives the following slice--deslice output:
  \begin{equation}
    \mathbf{O}_0=\mathbf{K}_{\mathrm{sd}}\mathbf{F},
    \qquad
    \mathbf{K}_{\mathrm{sd}}
    =\mathbf{P}\mathbf{D}^{-1}\mathbf{P}^{\top}.
    \label{eq:appendix-noattention-kernel}
  \end{equation}
  The matrix $\mathbf{K}_{\mathrm{sd}}$ is symmetric positive semidefinite and
  $\operatorname{rank}(\mathbf{K}_{\mathrm{sd}})\leq M$.
\end{proposition}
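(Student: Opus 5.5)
The plan is to derive the formula directly from the definitions, then read off symmetry, positive semidefiniteness and the rank bound from the factored form $\mathbf{P}\mathbf{D}^{-1}\mathbf{P}^{\top}$.

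\textbf{Step 1: the formula.} Removing cross-slice attention means the slice-space update is the identity, $\mathbf{Z}^{+}=\mathbf{Z}$. This is the case $\gamma_\ell=0$ in Equation~\ref{eq:srpa-update}, or equivalently the ``w/o Slice Attention'' row. By Equation~\ref{eq:physical-slicing}, $\mathbf{Z}=\mathbf{D}^{-1}\mathbf{P}^{\top}\mathbf{F}$. The single-head deslicing in Equation~\ref{eq:srpa-deslicing}, taken before the output projection, then gives
\[
\mathbf{O}_0=\mathbf{P}\mathbf{Z}=\mathbf{P}\mathbf{D}^{-1}\mathbf{P}^{\top}\mathbf{F},
\]
which is Equation~\ref{eq:appendix-noattention-kernel}. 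Here $\mathbf{D}$ is well defined and invertible: $\mathbf{P}$ is nonnegative, so each diagonal entry of $\mathbf{D}$ is at least $\varepsilon>0$.

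\textbf{Step 2: symmetry and PSD.} Since $\mathbf{D}$ is a positive diagonal matrix, it has a real square root, and we can write $\mathbf{K}_{\mathrm{sd}}=\mathbf{B}\mathbf{B}^{\top}$ with $\mathbf{B}=\mathbf{P}\mathbf{D}^{-1/2}\in\mathbb{R}^{N\times M}$.
\begin{itemize}
\item Symmetry: $(\mathbf{P}\mathbf{D}^{-1}\mathbf{P}^{\top})^{\top}=\mathbf{P}\mathbf{D}^{-\top}\mathbf{P}^{\top}=\mathbf{P}\mathbf{D}^{-1}\mathbf{P}^{\top}$, because $\mathbf{D}$ is diagonal.
\item Positive semidefiniteness: for any $\mathbf{x}\in\mathbb{R}^N$,
\[
\mathbf{x}^{\top}\mathbf{K}_{\mathrm{sd}}\mathbf{x}=\|\mathbf{B}^{\top}\mathbf{x}\|_2^2=\sum_{m}\frac{(\mathbf{P}^{\top}\mathbf{x})_m^2}{D_{mm}}\geq 0.
\]
\end{itemize}

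\textbf{Step 3: rank.} The rank bound follows from submultiplicativity of rank:
\[
\operatorname{rank}(\mathbf{K}_{\mathrm{sd}})\leq\operatorname{rank}(\mathbf{P})\leq\min(N,M)\leq M.
\]
Equivalently, the column space of $\mathbf{K}_{\mathrm{sd}}$ lies in the column space of $\mathbf{P}$, which is spanned by $M$ columns.

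There is no real obstacle here. The only point needing care is stating precisely what ``removing cross-slice attention'' means (setting $\mathbf{Z}^{+}=\mathbf{Z}$), and noting that the row-stochastic property of $\mathbf{P}$ is not needed for this proposition. It will only matter for later kernel properties, such as $\mathbf{K}_{\mathrm{sd}}\mathbf{1}_N\approx\mathbf{1}_N$ up to $\varepsilon$.
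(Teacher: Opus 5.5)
Your proposal is correct and follows essentially the same route as the paper. Like the paper, you take the identity slice update to get $\mathbf{O}_0=\mathbf{P}\mathbf{Z}=\mathbf{K}_{\mathrm{sd}}\mathbf{F}$, obtain symmetry from the diagonal $\mathbf{D}$, prove positive semidefiniteness via $\mathbf{x}^{\top}\mathbf{K}_{\mathrm{sd}}\mathbf{x}=\|\mathbf{D}^{-1/2}\mathbf{P}^{\top}\mathbf{x}\|_2^2\ge 0$, and bound the rank by $\operatorname{rank}(\mathbf{P})\le M$. Your added remarks, that $\mathbf{D}$ is invertible because each diagonal entry is at least $\varepsilon$ and that row-stochasticity of $\mathbf{P}$ is not used, are both accurate.
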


\begin{proof}
  Equation~\ref{eq:physical-slicing} gives
  $\mathbf{Z}=\mathbf{D}^{-1}\mathbf{P}^{\top}\mathbf{F}$. The no-attention
  slice-space update is the identity, so deslicing yields
  $\mathbf{O}_0=\mathbf{P}\mathbf{Z}=\mathbf{K}_{\mathrm{sd}}\mathbf{F}$.
  Because $\mathbf{D}$ is positive diagonal,
  $\mathbf{K}_{\mathrm{sd}}^{\top}=\mathbf{K}_{\mathrm{sd}}$. For any
  $\mathbf{x}\in\mathbb{R}^{N}$,
  \begin{equation}
    \mathbf{x}^{\top}\mathbf{K}_{\mathrm{sd}}\mathbf{x}
    =\left\lVert
    \mathbf{D}^{-1/2}\mathbf{P}^{\top}\mathbf{x}
    \right\rVert_2^2\geq 0,
    \label{eq:appendix-noattention-psd}
  \end{equation}
  which proves positive semidefiniteness. Finally,
  $\operatorname{rank}(\mathbf{K}_{\mathrm{sd}})
  \leq\operatorname{rank}(\mathbf{P})\leq M$.
\end{proof}

For fixed routing, the kernel combines mesh features through shared slice
memberships without interactions between distinct slice tokens.
The routing matrix $\mathbf{P}$ is itself learned from the input.
The complete no-attention operator therefore remains input-conditioned,
rather than reducing to a globally fixed linear kernel.
This result concerns the shared-routing factorization of Transolver, not
LinearNO's more general parameterization with independent queries and keys.
The kernel need not be an identity or an orthogonal projection.
Writing $s_m=\sum_n p_{nm}$ also gives
$\mathbf K_{\mathrm{sd}}\mathbf1_N
=\mathbf P(s_m/(s_m+\varepsilon))_{m=1}^M$:
the stabilizer prevents exact constant preservation in general.

\paragraph{Conditional Mixing Pressure and Residual Control}

For comparison, the three slice-space updates have the following forms:
\begin{equation}
  \begin{alignedat}{1}
    &\mathcal{T}_{\mathrm{van}}(\mathbf{Z})
    =\mathbf{A}\mathbf{Z}\mathbf{W}_v,\\
    &\mathcal{T}_{\mathrm{noattn}}(\mathbf{Z})
    =\mathbf{Z},\\
    &\mathcal{T}_{\mathrm{SRPA}}(\mathbf{Z})
    =\mathbf{Z}+\gamma\mathbf{A}\mathbf{Z}\mathbf{W}_v.
    \label{eq:appendix-identity-interaction}
  \end{alignedat}
\end{equation}
The vanilla update provides interaction without an explicit identity path;
the no-attention variant retains states without separate cross-slice refinement.
SRPA combines both paths, with the residual scalar controlling the added
interaction response $\Delta\mathbf{Z}=\mathbf{A}\mathbf{Z}\mathbf{W}_v$
defined in Equation~\ref{eq:srpa-update}.
At $\gamma=0$, deslicing still produces the slice--deslice output
$\mathbf{P}\mathbf{Z}$ characterized above, before the output projection.
The slice-space identity therefore preserves a different representation
from the outer residual path carrying $\mathbf{H}_{\mathrm{phy}}$.
In particular, $\gamma=1$ does not recover vanilla attention, since the
$\mathbf Z$ term remains. We next quantify how these updates affect
differences between slices.

\begin{proposition}[Pairwise refinement and diameter control]
  \label{prop:srpa-mixing}
  For rows $\mathbf{z}_i$ of $\mathbf{Z}$, define
  $\operatorname{diam}(\mathbf{Z})=\max_{i,j}
  \lVert\mathbf{z}_i-\mathbf{z}_j\rVert_2$.
  Let $\mathbf{A}$ be a row-stochastic matrix and define
  $\eta(\mathbf{A})=\frac{1}{2}\max_{i,j}\sum_k|a_{ik}-a_{jk}|$.
  Set $c=\eta(\mathbf{A})\lVert\mathbf{W}_v\rVert_2$,
  $[q]_+=\max(q,0)$, and
  $\mathcal{T}_{\gamma}(\mathbf{Z})
  =\mathbf{Z}+\gamma\mathbf{A}\mathbf{Z}\mathbf{W}_v$.
  For $\eta_{ij}=\tfrac12\|\mathbf a_i-\mathbf a_j\|_1$ and
  $\Delta\mathbf Z=\mathbf A\mathbf Z\mathbf W_v$, each pair satisfies
  \begin{equation}
    \begin{alignedat}{1}
    &\|\Delta\mathbf z_i-\Delta\mathbf z_j\|_2
    \le\eta_{ij}\|\mathbf W_v\|_2\operatorname{diam}(\mathbf Z),\\
    &\|(\mathbf z_i^+-\mathbf z_j^+)-(\mathbf z_i-\mathbf z_j)\|_2
    \le|\gamma|\eta_{ij}\|\mathbf W_v\|_2\operatorname{diam}(\mathbf Z).
    \end{alignedat}
    \label{eq:srpa-pairwise-control}
  \end{equation}
  Consequently, the output diameters obey
  \begin{equation}
    \begin{alignedat}{1}
      &\operatorname{diam}(\mathbf{A}\mathbf{Z}\mathbf{W}_v)
      \leq c\operatorname{diam}(\mathbf{Z}),\\
      &\operatorname{diam}(\mathcal{T}_{\gamma}(\mathbf{Z}))
      \geq[1-|\gamma|c]_+\operatorname{diam}(\mathbf{Z}),\\
      &\operatorname{diam}(\mathcal{T}_{\gamma}(\mathbf{Z}))
      \leq(1+|\gamma|c)\operatorname{diam}(\mathbf{Z}).
      \label{eq:srpa-diameter-bounds}
    \end{alignedat}
  \end{equation}
  For each fixed attention matrix with finite unmasked softmax logits,
  $\eta(\mathbf{A})<1$.
\end{proposition}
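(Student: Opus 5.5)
The argument rests on a Dobrushin-type contraction estimate for row-stochastic matrices; every other claim follows from it by the triangle inequality. Write $\Delta\mathbf z_i-\Delta\mathbf z_j=(\mathbf a_i-\mathbf a_j)^{\top}\mathbf Z\mathbf W_v$ and set $\mathbf u=\mathbf a_i-\mathbf a_j$. Both $\mathbf a_i$ and $\mathbf a_j$ are probability vectors, so $\sum_k u_k=0$. Hence $\mathbf u$ splits as $\mathbf u=\mathbf u^+-\mathbf u^-$ with nonnegative parts satisfying $\sum_k u^+_k=\sum_k u^-_k=\tfrac12\|\mathbf u\|_1=\eta_{ij}$. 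If $\eta_{ij}=0$ the bound is trivial. Otherwise $\mathbf u^{\pm}/\eta_{ij}$ are probability vectors, and
\[
\mathbf u^{\top}\mathbf Z=\eta_{ij}\Bigl(\sum_k \tfrac{u^+_k}{\eta_{ij}}\mathbf z_k-\sum_l \tfrac{u^-_l}{\eta_{ij}}\mathbf z_l\Bigr)
=\eta_{ij}\sum_{k,l}\tfrac{u^+_ku^-_l}{\eta_{ij}^2}(\mathbf z_k-\mathbf z_l).
\]
This is $\eta_{ij}$ times a convex combination of row differences. By convexity of the norm, $\|\mathbf u^{\top}\mathbf Z\|_2\le\eta_{ij}\operatorname{diam}(\mathbf Z)$. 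Right-multiplying by $\mathbf W_v$ costs at most a factor $\|\mathbf W_v\|_2$ on row vectors, since $\|\mathbf x^{\top}\mathbf W_v\|_2\le\|\mathbf W_v\|_2\|\mathbf x\|_2$. This gives the first line of \eqref{eq:srpa-pairwise-control}.

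For the second line, note that $(\mathbf z_i^+-\mathbf z_j^+)-(\mathbf z_i-\mathbf z_j)=\gamma(\Delta\mathbf z_i-\Delta\mathbf z_j)$, so the bound follows immediately with the factor $|\gamma|$. For the diameter bounds \eqref{eq:srpa-diameter-bounds}, use $\eta_{ij}\le\eta(\mathbf A)$ and maximize over pairs; this yields $\operatorname{diam}(\mathbf A\mathbf Z\mathbf W_v)\le c\operatorname{diam}(\mathbf Z)$. For the upper bound on $\mathcal T_\gamma$, apply the triangle inequality to each pair: $\|\mathbf z_i^+-\mathbf z_j^+\|\le\|\mathbf z_i-\mathbf z_j\|+|\gamma|c\operatorname{diam}(\mathbf Z)\le(1+|\gamma|c)\operatorname{diam}(\mathbf Z)$.

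The lower bound needs one extra step. Choose the pair $(i,j)$ that attains $\operatorname{diam}(\mathbf Z)$; then the reverse triangle inequality gives $\|\mathbf z_i^+-\mathbf z_j^+\|\ge(1-|\gamma|c)\operatorname{diam}(\mathbf Z)$. Since diameters are nonnegative anyway, the bound can be written with the positive part $[\cdot]_+$.

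For the final claim, finite softmax logits give every entry $a_{ik}>0$. Then
\[
\tfrac12\|\mathbf a_i-\mathbf a_j\|_1=1-\sum_k\min(a_{ik},a_{jk}),
\]
and the sum on the right is strictly positive. Hence $\eta_{ij}<1$ for every pair. Because $M$ is finite, the maximum over pairs is still strictly less than one, so $\eta(\mathbf A)<1$.

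The identity $\tfrac12\|\mathbf p-\mathbf q\|_1=1-\sum\min(p_k,q_k)$ is routine: it follows from $|p-q|=p+q-2\min(p,q)$. I expect the only real obstacle to be the convex-combination rewriting in the first paragraph, and specifically the coupling of $\mathbf u^+$ with $\mathbf u^-$ through the product weights. Once that is set up correctly, everything else is direct.
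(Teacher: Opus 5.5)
Your proof is correct and follows the paper's argument essentially step for step. Both use the same coupling of the positive and negative parts of $\mathbf a_i-\mathbf a_j$ into a convex combination of row differences, the triangle and reverse-triangle inequalities (the latter at a diameter-attaining pair) for the residual bounds, and the overlap identity $\eta(\mathbf A)=1-\min_{i,j}\sum_k\min(a_{ik},a_{jk})$ for the final claim; the only cosmetic difference is that you apply the coupling to the rows of $\mathbf Z$ and then pay $\|\mathbf W_v\|_2$, whereas the paper applies it to $\mathbf Y=\mathbf Z\mathbf W_v$ and then bounds $\operatorname{diam}(\mathbf Y)\le\|\mathbf W_v\|_2\operatorname{diam}(\mathbf Z)$.
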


\begin{proof}[Proof of Proposition~\ref{prop:srpa-mixing}]
  Let $\mathbf{Y}=\mathbf{Z}\mathbf{W}_v$. For two rows $i$ and $j$ of
  $\mathbf{A}$, set $\mathbf{r}=\mathbf{a}_i-\mathbf{a}_j$. Since
  $\mathbf{A}$ is row-stochastic, $\sum_k r_k=0$. Define
  $\alpha=\frac{1}{2}\lVert\mathbf{r}\rVert_1$. If $\alpha=0$, the two output
  rows are equal. Otherwise, the normalized positive and negative parts of
  $\mathbf{r}$ define probability vectors $\mathbf{p}$ and $\mathbf{q}$, and
  \begin{equation}
    \begin{aligned}
      (\mathbf{A}\mathbf{Y})_i-(\mathbf{A}\mathbf{Y})_j
      &=\alpha\!\left(\sum_k p_k\mathbf{y}_k-
      \sum_m q_m\mathbf{y}_m\right)\\
      &=\alpha\sum_{k,m}p_kq_m(\mathbf{y}_k-\mathbf{y}_m).
      \label{eq:appendix-dobrushin-decomposition}
    \end{aligned}
  \end{equation}
  The triangle inequality gives
  \begin{equation}
    \lVert(\mathbf{A}\mathbf{Y})_i-(\mathbf{A}\mathbf{Y})_j\rVert_2
    \leq\alpha\operatorname{diam}(\mathbf{Y})
    \leq\alpha\lVert\mathbf{W}_v\rVert_2
    \operatorname{diam}(\mathbf{Z}).
    \label{eq:appendix-dobrushin-bound}
  \end{equation}
  Since $\alpha=\eta_{ij}$, this proves the first pairwise bound; the
  residual difference changes by $\gamma(\Delta\mathbf z_i-\Delta\mathbf z_j)$,
  proving the second. Maximizing over $i,j$ proves the first inequality in
  Equation~\ref{eq:srpa-diameter-bounds}.

  For the upper residual bound, apply the triangle inequality to every pair of
  rows:
  \begin{equation}
    \begin{aligned}
      \lVert(\mathbf{z}_i-\mathbf{z}_j)
      +\gamma[(\mathbf{A}\mathbf{Y})_i-(\mathbf{A}\mathbf{Y})_j]\rVert_2
      \leq(1+|\gamma|c)\operatorname{diam}(\mathbf{Z}).
      \label{eq:appendix-residual-upper}
    \end{aligned}
  \end{equation}
  For the lower bound, choose a pair $(i^*,j^*)$ that attains
  $\operatorname{diam}(\mathbf{Z})$ and apply the reverse triangle inequality.
  This yields $(1-|\gamma|c)\operatorname{diam}(\mathbf{Z})$ when the factor is
  nonnegative; the diameter is always nonnegative otherwise. This proves the
  lower residual bound in Equation~\ref{eq:srpa-diameter-bounds}.

  Finally, finite softmax logits give $a_{ik}>0$ for all $i,k$. The equivalent
  identity
  \begin{equation}
    \eta(\mathbf{A})
    =1-\min_{i,j}\sum_k\min(a_{ik},a_{jk})
    \label{eq:appendix-dobrushin-overlap}
  \end{equation}
  then implies $\eta(\mathbf{A})<1$ for the fixed matrix because every pair of
  rows has positive overlap. A uniform contraction factor over all inputs would
  require a uniform lower bound on that overlap. Softmax positivity alone does
  not provide such a bound.
\end{proof}

\paragraph{Identical attention rows}
If every row of $\mathbf A$ equals $\mathbf a^\top$, then
$\Delta\mathbf Z=\mathbf1_M\mathbf c^\top$ for
$\mathbf c^\top=\mathbf a^\top\mathbf Z\mathbf W_v$.
Vanilla attention makes all output slices identical, whereas
$\mathbf z_i^+-\mathbf z_j^+=\mathbf z_i-\mathbf z_j$ for every pair.
Thus the same interaction that removes all slice differences when used
as a replacement preserves them exactly when added as a residual.

\paragraph{Centered slice variation}
Diameter concerns the most separated pair. To measure all pairs together,
let $\mathbf\Pi=\mathbf I_M-M^{-1}\mathbf1_M\mathbf1_M^\top$ and define
$V(\mathbf Z)=\|\mathbf\Pi\mathbf Z\|_F$. Then
\begin{equation}
  V(\mathbf Z)^2=\frac{1}{2M}\sum_{i,j=1}^M
  \|\mathbf z_i-\mathbf z_j\|_2^2.
  \label{eq:centered-pairwise-energy}
\end{equation}
This measures total pairwise variation, rather than rank or a lower bound
on each individual distance.
\begin{proposition}[Centered-variation control]
\label{prop:srpa-centered}
For row-stochastic $\mathbf A$, put
$\kappa=\|\mathbf\Pi\mathbf A\mathbf\Pi\|_2\|\mathbf W_v\|_2$.
Then
\begin{equation}
\begin{aligned}
 V(\Delta\mathbf Z)&\le\kappa V(\mathbf Z),\\
 (1-|\gamma|\kappa)_+V(\mathbf Z)
 &\le V(\mathbf Z+\gamma\Delta\mathbf Z)
 \le(1+|\gamma|\kappa)V(\mathbf Z),
\end{aligned}
\label{eq:srpa-centered-bounds}
\end{equation}
where $(x)_+=\max\{x,0\}$.
\end{proposition}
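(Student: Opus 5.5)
The key observation is that a row-stochastic $\mathbf A$ satisfies $\mathbf A\mathbf 1_M=\mathbf 1_M$. Together with $\mathbf\Pi\mathbf 1_M=\mathbf 0$, this gives
\[
\mathbf\Pi\mathbf A=\mathbf\Pi\mathbf A\mathbf\Pi+\mathbf\Pi\mathbf A(\mathbf I_M-\mathbf\Pi)
=\mathbf\Pi\mathbf A\mathbf\Pi+M^{-1}\mathbf\Pi\mathbf A\mathbf 1_M\mathbf 1_M^\top
=\mathbf\Pi\mathbf A\mathbf\Pi,
\]
since $\mathbf\Pi\mathbf A\mathbf 1_M=\mathbf\Pi\mathbf 1_M=\mathbf 0$. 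In words, the attention operator maps the mean component to a constant, and centering removes constants. We do not need $\mathbf A$ to preserve the centered subspace on the left. The identity above is all that is required, and it uses only row-stochasticity.

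With this in hand, the first bound follows directly:
\[
V(\Delta\mathbf Z)=\|\mathbf\Pi\mathbf A\mathbf Z\mathbf W_v\|_F=\|(\mathbf\Pi\mathbf A\mathbf\Pi)(\mathbf\Pi\mathbf Z)\mathbf W_v\|_F,
\]
using $\mathbf\Pi^2=\mathbf\Pi$. We then apply the standard mixed-norm inequality $\|\mathbf B\mathbf X\mathbf C\|_F\le\|\mathbf B\|_2\|\mathbf X\|_F\|\mathbf C\|_2$. This inequality follows from applying $\|\mathbf B\mathbf x\|_2\le\|\mathbf B\|_2\|\mathbf x\|_2$ column by column, and then the same bound on rows for right multiplication. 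The result is $V(\Delta\mathbf Z)\le\kappa V(\mathbf Z)$.

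For the residual bounds, note that $V$ is a seminorm: it is the Frobenius norm composed with the linear map $\mathbf Z\mapsto\mathbf\Pi\mathbf Z$. Linearity gives
\[
\mathbf\Pi(\mathbf Z+\gamma\Delta\mathbf Z)=\mathbf\Pi\mathbf Z+\gamma\mathbf\Pi\Delta\mathbf Z.
\]
The triangle inequality then yields $V(\mathbf Z+\gamma\Delta\mathbf Z)\le V(\mathbf Z)+|\gamma|V(\Delta\mathbf Z)\le(1+|\gamma|\kappa)V(\mathbf Z)$. The reverse triangle inequality yields $V(\mathbf Z+\gamma\Delta\mathbf Z)\ge V(\mathbf Z)-|\gamma|\kappa V(\mathbf Z)$. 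Combining this with $V\ge 0$ gives the $(\cdot)_+$ form of the lower bound, in the same way as the diameter argument of Proposition~\ref{prop:srpa-mixing}.

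The pairwise identity in Equation~\ref{eq:centered-pairwise-energy} is not needed for the proof itself. If desired, it can be checked by expanding $\|\mathbf\Pi\mathbf Z\|_F^2=\sum_i\|\mathbf z_i-\bar{\mathbf z}\|^2$. The main step, and the only nonroutine one, is the commutation identity $\mathbf\Pi\mathbf A=\mathbf\Pi\mathbf A\mathbf\Pi$. Everything else reduces to norm submultiplicativity and the triangle inequality.
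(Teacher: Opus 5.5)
Your proposal is correct and matches the paper's proof. The paper likewise uses $\mathbf A\mathbf 1_M=\mathbf 1_M$ to obtain $\mathbf\Pi\mathbf A\mathbf Z=\mathbf\Pi\mathbf A\mathbf\Pi\mathbf Z$, applies submultiplicativity for the first bound, and uses the triangle and reverse triangle inequalities for the residual bounds; you also spell out the mixed Frobenius--operator norm inequality and the clipping via $V\ge 0$, which the paper leaves implicit.
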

\begin{proof}
Row stochasticity gives $\mathbf A\mathbf1_M=\mathbf1_M$ and hence
$\mathbf\Pi\mathbf A\mathbf Z
=\mathbf\Pi\mathbf A\mathbf\Pi\mathbf Z$.
Submultiplicativity proves the first bound; the triangle and reverse
triangle inequalities prove the other two.
\end{proof}
Row stochasticity alone does not imply $\kappa<1$.
Similarly, strictly positive softmax weights imply $\eta(\mathbf A)<1$,
but not $\eta(\mathbf A)\|\mathbf W_v\|_2<1$.
The coefficient $\eta$ is the finite-state Dobrushin coefficient;
its diameter interpretation is consistent with the general contraction
framework of Gaubert and Qu.\footnote{S.~Gaubert and Z.~Qu,
\emph{Dobrushin's ergodicity coefficient for Markov operators on cones},
Integral Equations and Operator Theory, 81, 2015,
\url{https://doi.org/10.1007/s00020-014-2193-2}.}
For a given input with $V(\mathbf Z)>0$, the realized centered ratio
$|\gamma|\|\mathbf\Pi\Delta\mathbf Z\|_F/V(\mathbf Z)$ directly bounds
the relative change in $V$. This differs from the uncentered correction
ratios reported in Appendix~\ref{app:srpa-layerwise-corrections}.
The latter quantify total update magnitude, including common-mode changes.

\paragraph{Successive updates}
For an idealized recursion containing only slice-space updates, define
separate coefficients on each trajectory $r\in\{\mathrm{van},\mathrm{res}\}$:
\[
\begin{aligned}
 c_\ell^{r}
 &=\eta(\mathbf A_\ell^{r})\|\mathbf W_{v,\ell}^{r}\|_2,\\
 \kappa_\ell^{r}
 &=\|\mathbf\Pi\mathbf A_\ell^{r}\mathbf\Pi\|_2
   \|\mathbf W_{v,\ell}^{r}\|_2.
\end{aligned}
\]
Starting from the same $\mathbf Z_0$, repeated application of
Propositions~\ref{prop:srpa-mixing} and~\ref{prop:srpa-centered} gives
\begin{equation}
\begin{aligned}
 \operatorname{diam}(\mathbf Z_L^{\mathrm{van}})
 &\le\left(\prod_{\ell=0}^{L-1}c_\ell^{\mathrm{van}}\right)
       \operatorname{diam}(\mathbf Z_0),\\
 \operatorname{diam}(\mathbf Z_L^{\mathrm{res}})
 &\ge\left(\prod_{\ell=0}^{L-1}
       (1-|\gamma_\ell|c_\ell^{\mathrm{res}})_+\right)
       \operatorname{diam}(\mathbf Z_0),
\end{aligned}
\label{eq:appendix-depth-bound}
\end{equation}
and, for total centered variation,
\begin{equation}
\begin{aligned}
 V(\mathbf Z_L^{\mathrm{van}})
 &\le\left(\prod_{\ell=0}^{L-1}\kappa_\ell^{\mathrm{van}}\right)
       V(\mathbf Z_0),\\
 V(\mathbf Z_L^{\mathrm{res}})
 &\ge\left(\prod_{\ell=0}^{L-1}
       (1-|\gamma_\ell|\kappa_\ell^{\mathrm{res}})_+\right)
       V(\mathbf Z_0).
\end{aligned}
\label{eq:appendix-centered-depth-bound}
\end{equation}
These bounds follow by induction and do not identify the coefficients
across the two trajectories.
Uniformly $c_\ell^{\mathrm{van}}\le c<1$ gives geometric diameter
contraction; uniformly $\kappa_\ell^{\mathrm{van}}\le\kappa<1$
gives geometric contraction of $V$.
For residual updates, assume $\operatorname{diam}(\mathbf Z_0)>0$
(equivalently $V(\mathbf Z_0)>0$).
The respective finite-depth lower bound is strictly positive when
$|\gamma_\ell|c_\ell^{\mathrm{res}}<1$, or
$|\gamma_\ell|\kappa_\ell^{\mathrm{res}}<1$, at every layer.
In each case, if these nonnegative step coefficients are also summable,
the infinite product is positive, giving a depth-independent lower bound.
Small steps alone do not imply this summability condition.

In the complete architecture, routing, deslicing, projections, and
feature mixing intervene between slice updates.
For either measure $\mathcal V=\operatorname{diam}$ or $\mathcal V=V$,
an extension along the actual residual trajectory requires
\[
 \mathcal V(\mathbf Z_{\ell+1})
 \ge b_\ell\mathcal V(\mathbf Z_\ell^+),\qquad b_\ell\ge0,
\]
where $\mathbf Z_\ell^+$ is the updated state and
$\mathbf Z_{\ell+1}$ the next incoming slice representation.
The corresponding residual product then acquires factors $b_\ell$;
a positive finite-depth lower bound requires a nonzero initial measure
and strictly positive residual and transition factors.
This is a condition on the realized representations, allowing routing
and other branches to contribute to the next state.
It does not assume that $\mathbf Z_\ell^+$ alone determines
$\mathbf Z_{\ell+1}$.
Zero initialization preserves the incoming state inside SRPA at
initialization; the learned scales control subsequent corrections.

\subsection{When Residual Interaction Improves Prediction}
\label{app:srpa-utility}
\begingroup
\setlength{\abovedisplayskip}{5pt plus 1pt minus 1pt}
\setlength{\belowdisplayskip}{5pt plus 1pt minus 1pt}
\setlength{\abovedisplayshortskip}{3pt plus 1pt minus 1pt}
\setlength{\belowdisplayshortskip}{4pt plus 1pt minus 1pt}
\setlength{\jot}{3pt}
The preceding bounds explain preservation of slice differences.
To connect preservation to accuracy, we now state conditions under which
an interaction hurts as a replacement but helps as a residual.
These are sufficient conditions, not assumptions on every possible task.

\paragraph{Shared-gate comparison}
Fix $S$ inputs and all upstream parameters at one block. Collect the
slice states of all $n_h$ heads in $\mathbf Z=(\mathbf Z_{s,r})$ and their
responses in $\Delta\mathbf Z=(\mathbf A_{s,r}\mathbf Z_{s,r}\mathbf W_{v,r})$.
Use the cohort inner product
\[
 \langle\mathbf X,\mathbf Y\rangle_{\mathcal D}
 =S^{-1}\sum_{s=1}^S\sum_{r=1}^{n_h}
 \langle\mathbf X_{s,r},\mathbf Y_{s,r}\rangle_F
\]
and its induced norm. The projection $\mathbf\Pi$ acts on each head.
The three alternatives are $\mathbf Z$ (no slice attention),
$\Delta\mathbf Z$ (replacement), and
$\mathbf Z+\gamma\Delta\mathbf Z$ (SRPA), with a single real $\gamma$
shared across all inputs and heads, as in one SRPA block.
The learnable block gate $\gamma_\ell\in\mathbb R$ is unconstrained
and initialized to zero; the analysis permits either sign.
Suppose
\begin{equation}
 s_Z=\|\mathbf\Pi\mathbf Z\|_{\mathcal D}>0,\qquad
 \|\mathbf\Pi\Delta\mathbf Z\|_{\mathcal D}\le q s_Z,\qquad 0\le q<1.
 \label{eq:utility-contraction}
\end{equation}
For example, Proposition~\ref{prop:srpa-centered} supplies this condition
if every head has $\kappa_{s,r}\le q$.

\begin{theorem}[Useful refinement of an informative state]
\label{thm:latent-refinement}
Let an ideal slice target be $\mathbf T=\mathbf Z+\mathbf E$, with
\[
\begin{aligned}
 &0<e=\|\mathbf E\|_{\mathcal D}<\tfrac12(1-q)s_Z,
   \qquad \Delta\mathbf Z\ne0,\\
 &|\langle\mathbf E,\Delta\mathbf Z\rangle_{\mathcal D}|
   \ge\alpha e\|\Delta\mathbf Z\|_{\mathcal D},
   \qquad 0<\alpha\le1.
\end{aligned}
\]
Under \eqref{eq:utility-contraction}, define
$\gamma_*=\langle\mathbf E,\Delta\mathbf Z\rangle_{\mathcal D}/
\|\Delta\mathbf Z\|_{\mathcal D}^2$ and choose
$\gamma=\theta\gamma_*$, $0<\theta\le1$.
For squared slice-target loss, the three alternatives satisfy
\begin{equation}
\begin{aligned}
 &\mathcal L_{\mathrm{SRPA}}
 \le[1-\theta(2-\theta)\alpha^2]e^2
 <\mathcal L_{\mathrm{no}}=e^2<\mathcal L_{\mathrm{van}},\\
 &\|\gamma\Delta\mathbf Z\|_{\mathcal D}\le\theta e,\\
 &\|\mathbf\Pi(\mathbf Z+\gamma\Delta\mathbf Z)\|_{\mathcal D}
 \ge s_Z-\theta e>q s_Z
 \ge\|\mathbf\Pi\Delta\mathbf Z\|_{\mathcal D}.
\end{aligned}
\label{eq:latent-loss-order}
\end{equation}
\end{theorem}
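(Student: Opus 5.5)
The plan is to work entirely in the Hilbert space induced by $\langle\cdot,\cdot\rangle_{\mathcal D}$ and compute the three losses $\mathcal L(\mathbf X)=\|\mathbf T-\mathbf X\|_{\mathcal D}^2$ directly. For the no-attention alternative, $\mathcal L_{\mathrm{no}}=\|\mathbf E\|_{\mathcal D}^2=e^2$ is immediate. For SRPA, expand
\[
\mathcal L_{\mathrm{SRPA}}=\|\mathbf E-\gamma\Delta\mathbf Z\|_{\mathcal D}^2
=e^2-2\gamma\langle\mathbf E,\Delta\mathbf Z\rangle_{\mathcal D}+\gamma^2\|\Delta\mathbf Z\|_{\mathcal D}^2 .
\]
Substituting $\gamma=\theta\gamma_*$ gives $e^2-\theta(2-\theta)\langle\mathbf E,\Delta\mathbf Z\rangle_{\mathcal D}^2/\|\Delta\mathbf Z\|_{\mathcal D}^2$. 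The alignment hypothesis $|\langle\mathbf E,\Delta\mathbf Z\rangle_{\mathcal D}|\ge\alpha e\|\Delta\mathbf Z\|_{\mathcal D}$ then yields the bound $[1-\theta(2-\theta)\alpha^2]e^2$. This is strictly below $e^2$ because $\theta\in(0,1]$ gives $\theta(2-\theta)>0$, and $\alpha>0$, $e>0$.

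The step-size bound uses Cauchy--Schwarz: $|\gamma_*|\|\Delta\mathbf Z\|_{\mathcal D}=|\langle\mathbf E,\Delta\mathbf Z\rangle_{\mathcal D}|/\|\Delta\mathbf Z\|_{\mathcal D}\le e$, so $\|\gamma\Delta\mathbf Z\|_{\mathcal D}\le\theta e$. For the variation statement, $\mathbf\Pi$ is an orthogonal projection on each head, hence nonexpansive in the cohort norm. The reverse triangle inequality then gives
\[
\|\mathbf\Pi(\mathbf Z+\gamma\Delta\mathbf Z)\|_{\mathcal D}\ge s_Z-\theta e .
\]
Since $\theta e\le e<\tfrac12(1-q)s_Z<(1-q)s_Z$, we get $s_Z-\theta e>qs_Z$. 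The final inequality $qs_Z\ge\|\mathbf\Pi\Delta\mathbf Z\|_{\mathcal D}$ is exactly \eqref{eq:utility-contraction}.

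The remaining strict inequality $e^2<\mathcal L_{\mathrm{van}}=\|\mathbf Z+\mathbf E-\Delta\mathbf Z\|_{\mathcal D}^2$ is the only step that needs care, and it is where the factor $\tfrac12$ in the hypothesis on $e$ is spent. I would lower-bound the full norm by its centered part, using nonexpansiveness of $\mathbf\Pi$ again:
\[
\|\mathbf Z+\mathbf E-\Delta\mathbf Z\|_{\mathcal D}\ge\|\mathbf\Pi\mathbf Z\|_{\mathcal D}-\|\mathbf\Pi\Delta\mathbf Z\|_{\mathcal D}-\|\mathbf\Pi\mathbf E\|_{\mathcal D}\ge s_Z-qs_Z-e=(1-q)s_Z-e .
\]
The assumption $e<\tfrac12(1-q)s_Z$ means $(1-q)s_Z>2e$, so the right side exceeds $e$. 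Squaring is legitimate because both sides are positive, and this gives $\mathcal L_{\mathrm{van}}>e^2$. The hypothesis $\Delta\mathbf Z\neq0$ is needed only so that $\gamma_*$ is well defined. Together these steps give the full chain in \eqref{eq:latent-loss-order}.
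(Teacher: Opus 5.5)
Your proposal is correct and follows the paper's proof essentially step for step. It uses the same quadratic expansion of $\|\gamma\Delta\mathbf Z-\mathbf E\|_{\mathcal D}^2$ at $\gamma=\theta\gamma_*$, Cauchy--Schwarz for the correction norm, and the reverse triangle inequality after projecting with $\mathbf\Pi$, both for the vanilla bound $(1-q)s_Z-e>e$ and for the centered-variation bound. The only difference is cosmetic: in the vanilla bound you project $\mathbf E$ together with everything else, whereas the paper subtracts $\|\mathbf E\|_{\mathcal D}=e$ before projecting $\Delta\mathbf Z-\mathbf Z$, and both give the same estimate.
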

\begin{proof}
Orthogonal projection and the reverse triangle inequality give
\[
 \|\Delta\mathbf Z-\mathbf Z\|_{\mathcal D}
 \ge\|\mathbf\Pi(\Delta\mathbf Z-\mathbf Z)\|_{\mathcal D}
 \ge(1-q)s_Z.
\]
Thus
\[
 \sqrt{\mathcal L_{\mathrm{van}}}
 =\|\Delta\mathbf Z-\mathbf Z-\mathbf E\|_{\mathcal D}
 \ge(1-q)s_Z-e>e.
\]
Writing $c=\langle\mathbf E,\Delta\mathbf Z\rangle_{\mathcal D}$ and
$d=\|\Delta\mathbf Z\|_{\mathcal D}>0$ gives
\[
 \mathcal L_{\mathrm{SRPA}}
 =\|\gamma\Delta\mathbf Z-\mathbf E\|_{\mathcal D}^2
 =e^2-\theta(2-\theta)c^2/d^2.
\]
The alignment assumption proves the strict loss improvement.
Cauchy--Schwarz gives $|\gamma|d=\theta|c|/d\le\theta e$.
Projecting this bound and applying the reverse triangle inequality
proves the centered-variation statement.
\end{proof}
Here the no-attention state already contains task-relevant differences,
and the interaction has a component along its remaining target error.
Replacement discards too much of that state; the shared residual scale
extracts a useful correction whose norm is bounded by the remaining error.
The ideal latent target is a modeling assumption. The next proposition states
an analogous result directly for observed output targets.

\paragraph{A nonempty example}
Consider one input and one head with two channels:
\[
 \mathbf Z=
 \begin{pmatrix}-1&0\\1&0\end{pmatrix},\qquad
 \mathbf A=
 \begin{pmatrix}0.75&0.25\\0.25&0.75\end{pmatrix},\qquad
 \mathbf W_v=\mathbf I_2,\qquad
 \mathbf T=
 \begin{pmatrix}-1.1&-0.1\\1.1&0.1\end{pmatrix}.
\]
Here $\Delta\mathbf Z=\tfrac12\mathbf Z$ has nonzero centered variation.
The conditions of Theorem~\ref{thm:latent-refinement} hold with
$q=\tfrac12$, $s_Z=\sqrt2$, $e=0.2$, and $\alpha=1/\sqrt2$.
The shared optimum $\gamma_*=0.2$ gives
\[
\begin{aligned}
 \mathcal L_{\mathrm{no}}&=\|\mathbf Z-\mathbf T\|_F^2=0.04,\\
 \mathcal L_{\mathrm{van}}&=\|\Delta\mathbf Z-\mathbf T\|_F^2=0.74,\\
 \mathcal L_{\mathrm{SRPA}}
 &=\|\mathbf Z+0.2\Delta\mathbf Z-\mathbf T\|_F^2=0.02.
\end{aligned}
\]
The nonuniform interaction corrects the first-channel discrepancy
while leaving the orthogonal second-channel error.
Thus useful residual refinement need not recover the entire target:
it reduces loss while retaining centered variation
$1.1\sqrt2$, compared with $\tfrac12\sqrt2$ after replacement.
Replicating the example across inputs and heads preserves the same
shared gate and the strict ordering.

\begin{proposition}[Output-space extension under a fixed downstream network]
\label{thm:output-refinement}
Retain \eqref{eq:utility-contraction}. For each input, let $G_s$ denote
the fixed network downstream of the selected slice update, including
deslicing, projections, residual paths, and subsequent blocks.
All other inputs to this network are held fixed; its intermediate
activations are recomputed when the slice input changes.
Let $G$ collect the outputs and $\mathbf Y$ their targets, with the
output cohort norm defined by
\[
 \|\mathbf V\|_{\mathcal D}
 =(S^{-1}\sum_{s=1}^S\|\mathbf V_s\|_F^2)^{1/2}.
\]
Assume, for some $m>0$,
\begin{equation}
\begin{aligned}
 &\|G(\Delta\mathbf Z)-G(\mathbf Z)\|_{\mathcal D}
 \ge m\|\mathbf\Pi(\Delta\mathbf Z-\mathbf Z)\|_{\mathcal D},\\
 &2\varepsilon_0<m(1-q)s_Z,\qquad
 \varepsilon_0=\|G(\mathbf Z)-\mathbf Y\|_{\mathcal D}.
\end{aligned}
\label{eq:output-observability}
\end{equation}
Define $\phi(\gamma)=
\|G(\mathbf Z+\gamma\Delta\mathbf Z)-\mathbf Y\|_{\mathcal D}^2$.
Suppose $\phi$ is differentiable on $[-r_0,r_0]$, its derivative is
$\beta$-Lipschitz there, with $r_0,\beta>0$, and $g=\phi'(0)\ne0$.
The latter implies $d=\|\Delta\mathbf Z\|_{\mathcal D}>0$.
For any $0<\xi<1$, choose
\[
 0<t<\min\{r_0,2|g|/\beta,\xi s_Z/d\},\qquad tq<1-q.
\]
Then the shared gate $\gamma=-t\,\operatorname{sign}(g)$ satisfies
\begin{equation}
\begin{aligned}
 &\phi(\gamma)<\phi(0)
 <\|G(\Delta\mathbf Z)-\mathbf Y\|_{\mathcal D}^2,\\
 &\|\gamma\mathbf\Pi\Delta\mathbf Z\|_{\mathcal D}
 \le\|\gamma\Delta\mathbf Z\|_{\mathcal D}<\xi s_Z,\\
 &\|\mathbf\Pi(\mathbf Z+\gamma\Delta\mathbf Z)\|_{\mathcal D}
 >q s_Z\ge\|\mathbf\Pi\Delta\mathbf Z\|_{\mathcal D}.
\end{aligned}
\label{eq:output-loss-order}
\end{equation}
\end{proposition}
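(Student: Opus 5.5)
The argument has three parts, matching the three lines of \eqref{eq:output-loss-order}: a descent-lemma estimate for $\phi$ near $0$, a replacement lower bound in the style of Theorem~\ref{thm:latent-refinement} pushed through $G$ via \eqref{eq:output-observability}, and a centered-variation bound as in the proof of Theorem~\ref{thm:latent-refinement}.

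\emph{First inequality.} Since $\phi'$ is $\beta$-Lipschitz on $[-r_0,r_0]$ and $|\gamma|=t\le r_0$, the descent lemma gives
\[
 \phi(\gamma)\le\phi(0)+g\gamma+\tfrac{\beta}{2}\gamma^2
 =\phi(0)-t|g|+\tfrac{\beta}{2}t^2 .
\]
The condition $t<2|g|/\beta$ makes this strictly below $\phi(0)$, so $\phi(\gamma)<\phi(0)$.

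\emph{Nonvanishing of $d$.} If $\Delta\mathbf Z=0$, then $\phi$ is constant in $\gamma$, which would force $g=0$. Hence $g\ne0$ gives $d>0$.

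\emph{Second inequality.} Orthogonal projection and the reverse triangle inequality, exactly as in the proof of Theorem~\ref{thm:latent-refinement}, give
\[
 \|\mathbf\Pi(\Delta\mathbf Z-\mathbf Z)\|_{\mathcal D}\ge(1-q)s_Z .
\]
The observability assumption \eqref{eq:output-observability} then yields
\[
 \|G(\Delta\mathbf Z)-G(\mathbf Z)\|_{\mathcal D}\ge m(1-q)s_Z .
\]
By the triangle inequality,
\[
 \|G(\Delta\mathbf Z)-\mathbf Y\|_{\mathcal D}\ge m(1-q)s_Z-\varepsilon_0>\varepsilon_0=\sqrt{\phi(0)},
\]
where the strict step uses $2\varepsilon_0<m(1-q)s_Z$. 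Squaring gives $\phi(0)<\|G(\Delta\mathbf Z)-\mathbf Y\|_{\mathcal D}^2$.

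\emph{Second line of \eqref{eq:output-loss-order}.} Because $\mathbf\Pi$ is an orthogonal projection acting headwise, $\|\gamma\mathbf\Pi\Delta\mathbf Z\|_{\mathcal D}\le\|\gamma\Delta\mathbf Z\|_{\mathcal D}=td$. The choice $t<\xi s_Z/d$ gives $td<\xi s_Z$.

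\emph{Third line of \eqref{eq:output-loss-order}.} This is the step I expect to need care, because $\xi$ is arbitrary in $(0,1)$ and so $\xi s_Z$ alone does not beat $(1-q)s_Z$. I would therefore use the sharper bound from \eqref{eq:utility-contraction} instead:
\[
 \|\gamma\mathbf\Pi\Delta\mathbf Z\|_{\mathcal D}\le t\,q\,s_Z .
\]
The reverse triangle inequality then gives
\[
 \|\mathbf\Pi(\mathbf Z+\gamma\Delta\mathbf Z)\|_{\mathcal D}\ge s_Z-tqs_Z .
\]
The hypothesis $tq<1-q$ makes the right side strictly larger than $qs_Z$, and $qs_Z\ge\|\mathbf\Pi\Delta\mathbf Z\|_{\mathcal D}$ by \eqref{eq:utility-contraction}. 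If this route failed, the fallback would be the weaker bound $s_Z-\xi s_Z$, which suffices only when $\xi\le1-q$; the $tq<1-q$ condition avoids needing that restriction.
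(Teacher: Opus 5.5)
Your proposal is correct and follows essentially the same route as the paper's proof. You use the descent bound $\phi(\gamma)\le\phi(0)-t|g|+\beta t^2/2$, the replacement lower bound $\|G(\Delta\mathbf Z)-\mathbf Y\|_{\mathcal D}\ge m(1-q)s_Z-\varepsilon_0>\varepsilon_0$ obtained through the observability assumption, and, for the last line, the contraction hypothesis $\|\mathbf\Pi\Delta\mathbf Z\|_{\mathcal D}\le q s_Z$ (rather than the $\xi$ bound) to get $(1-tq)s_Z>q s_Z$. Your explicit argument that $g\neq0$ forces $d>0$, since $\phi$ would be constant if $\Delta\mathbf Z=0$, supplies a step the paper only asserts in the statement.
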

\begin{proof}
The reverse triangle inequality and \eqref{eq:output-observability} give
\[
 \|G(\Delta\mathbf Z)-\mathbf Y\|_{\mathcal D}
 \ge m(1-q)s_Z-\varepsilon_0>\varepsilon_0.
\]
Integrating the Lipschitz derivative along the gate interval gives
$\phi(\gamma)\le\phi(0)+g\gamma+\beta\gamma^2/2$.
The chosen sign and step yield
$\phi(\gamma)\le\phi(0)-t|g|+\beta t^2/2<\phi(0)$.
The additional step restriction gives
$\|\gamma\Delta\mathbf Z\|_{\mathcal D}=td<\xi s_Z$;
orthogonal projection cannot increase this norm.
Since $s_Z\le\|\mathbf Z\|_{\mathcal D}$, it also follows that
$\|\gamma\Delta\mathbf Z\|_{\mathcal D}<\xi\|\mathbf Z\|_{\mathcal D}$,
so the total correction is small relative to the incoming state.
Finally, \eqref{eq:utility-contraction} gives
$\|\mathbf\Pi(\mathbf Z+\gamma\Delta\mathbf Z)\|_{\mathcal D}
\ge(1-tq)s_Z>q s_Z$.
The admissible interval is nonempty, including when $q=0$.
\end{proof}

\paragraph{Isolating the effect of slice differences}
A stronger sufficient condition separates centered changes from
common-mode changes. Define
\[
 \mathbf Z_c=\mathbf Z+\mathbf\Pi(\Delta\mathbf Z-\mathbf Z).
\]
This replaces the centered part of each head while retaining its
incoming slice mean. Under the remaining assumptions of
Proposition~\ref{thm:output-refinement}, its conclusions also hold if
the two inequalities in \eqref{eq:output-observability} are replaced by
\begin{equation}
\begin{aligned}
 &\|G(\mathbf Z_c)-G(\mathbf Z)\|_{\mathcal D}
 \ge m_c\|\mathbf\Pi(\Delta\mathbf Z-\mathbf Z)\|_{\mathcal D},\\
 &\|G(\Delta\mathbf Z)-G(\mathbf Z_c)\|_{\mathcal D}
 \le b_{\mathrm{cm}},\\
 &m_c(1-q)s_Z>2\varepsilon_0+b_{\mathrm{cm}},
 \qquad m_c>0,\quad b_{\mathrm{cm}}\ge0.
\end{aligned}
\label{eq:centered-output-sufficient-condition}
\end{equation}
Indeed, the triangle and reverse triangle inequalities give
\[
\begin{aligned}
 \|G(\Delta\mathbf Z)-\mathbf Y\|_{\mathcal D}
 &\ge\|G(\mathbf Z_c)-G(\mathbf Z)\|_{\mathcal D}\\
 &\quad-\|G(\Delta\mathbf Z)-G(\mathbf Z_c)\|_{\mathcal D}
     -\varepsilon_0\\
 &\ge m_c(1-q)s_Z-b_{\mathrm{cm}}-\varepsilon_0
 >\varepsilon_0.
\end{aligned}
\]
The shared-gate descent and correction bounds then follow from the
same proof. The first condition now requires an output response to
changing slice differences at fixed means; $b_{\mathrm{cm}}$ limits
how much the remaining common-mode change can cancel that response.
A readout depending only on slice means cannot satisfy this condition,
since $(1-q)s_Z>0$ but $G(\mathbf Z_c)=G(\mathbf Z)$.
For the nonuniform-attention example with identity readout,
$\mathbf Z_c=\Delta\mathbf Z$, and $m_c=1$, $b_{\mathrm{cm}}=0$
satisfy all three conditions.

\paragraph{Meaning of the conditions}
The first inequality in \eqref{eq:output-observability} requires the
complete replacement displacement to have a sufficiently large effect
on the prediction output, relative to its centered magnitude.
The output response may involve both centered and common-mode components;
the condition does not attribute it exclusively to lost slice differences.
It concerns one comparison direction rather than global invertibility. For a linear prediction map, a positive
minimum singular value on the subspace containing
$\Delta\mathbf Z-\mathbf Z$ is sufficient.
When $G$ is differentiable,
$g=2\langle G(\mathbf Z)-\mathbf Y,
DG(\mathbf Z)[\Delta\mathbf Z]\rangle_{\mathcal D}$;
a nonzero value identifies a useful signed direction for the shared gate.
For the abstract identity readout $G(\mathbf X)=\mathbf X$ and
$\mathbf Y=\mathbf T$ in the example above, one may take
$m=1$, $\varepsilon_0=0.2$, $g=-0.2$, and $\beta=1$.
Taking $r_0=1$, $\xi=0.2$, and $t=0.2$ satisfies all step restrictions,
so the output-space conditions can hold simultaneously.
Both the alignment and the derivative are aggregated across the cohort:
contributions from different inputs or heads can cancel.
The improvement is in the cohort loss and allows trade-offs between
individual inputs; it does not require each input to improve.

Theorem~\ref{thm:latent-refinement} and
Proposition~\ref{thm:output-refinement}
establish the ordering of no-attention, replacement, and residual
interaction under stated task conditions. They explain how preserving
an informative state and using the same interaction as a correction
can change attention's contribution from harmful to beneficial.
The guarantee is existence of a shared improving gate for a fixed
comparison cohort and fixed surrounding parameters; learning and
generalization determine whether it is realized after training.
At the hypothesis-class level, setting every SRPA gate to zero recovers
the corresponding no-slice-attention network. Consequently, the infimum
of empirical loss over the SRPA class is no larger than that over the
no-slice-attention class, whether or not either infimum is attained.
Strict improvement requires additional conditions such as those above.
The no-slice-attention control in
Table~\ref{tab:slice-attention-analysis} follows this update rule;
LinearNO has a different parameterization.
The theoretical comparison holds the responses and surrounding parameters
fixed, whereas the table reports independently trained models with
potentially different routing, attention, and prediction parameters.
The analysis uses squared cohort loss as an analytic surrogate, while
the table reports relative $L_2$ error. The established loss ordering
does not directly imply the same ordering under every benchmark
aggregation metric. The experiments therefore offer a complementary
comparison of trained models, rather than numerical instances of the
fixed-parameter loss inequalities.
\endgroup

\subsection{Architecture-Agnostic Finite-Horizon Error Propagation}
\label{app:error-propagation}

The following architecture-agnostic analysis separates approximation error
from propagation of errors already present in an autoregressive input.
Here $\|\cdot\|$ denotes the Euclidean norm of a vectorized state or output
block, and matrix norms are its induced operator norm.
For $T_{\mathrm{in}}=T_{\mathrm{out}}=1$, let
$\mathbf{u}_n=\operatorname{vec}(\mathbf{U}_n)
\in\mathbb{R}^{NC_{\mathrm{out}}}$ denote the vectorized exact state.
Write $\mathbf{f}_n:\mathbb{R}^{NC_{\mathrm{out}}}
\rightarrow\mathbb{R}^{NC_{\mathrm{out}}}$ for the learned state update,
with coordinates and supplied exogenous conditions held fixed.
Thus, $\widehat{\mathbf{u}}_{n+h+1}
=\mathbf{f}_{n+h}(\widehat{\mathbf{u}}_{n+h})$ defines the vectorized
autoregressive prediction at each successive step.
Define the prediction error and the local approximation error along the
exact trajectory as
\begin{equation}
  \begin{alignedat}{1}
    &\mathbf{e}_{n+h}
    =\widehat{\mathbf{u}}_{n+h}-\mathbf{u}_{n+h},\\
    &\boldsymbol{\delta}_{n+h}
    =\mathbf{f}_{n+h}(\mathbf{u}_{n+h})-\mathbf{u}_{n+h+1}.
    \label{eq:rollout-definition}
  \end{alignedat}
\end{equation}
Subtracting the true next state gives the exact decomposition
\begin{equation}
 \mathbf e_{n+h+1}
 =\boldsymbol\delta_{n+h}
 +\mathbf f_{n+h}(\mathbf u_{n+h}+\mathbf e_{n+h})
 -\mathbf f_{n+h}(\mathbf u_{n+h}).
 \label{eq:rollout-exact-identity}
\end{equation}
Assume $\mathbf f_{n+h}$ is continuously differentiable on a neighborhood
of the segment joining the true and predicted inputs. Define
\[
 \overline{\mathbf J}_{n+h}
 =\int_0^1D\mathbf f_{n+h}
   (\mathbf u_{n+h}+t\mathbf e_{n+h})\,dt.
\]
Then \eqref{eq:rollout-exact-identity} equals
$\mathbf e_{n+h+1}=\boldsymbol\delta_{n+h}
+\overline{\mathbf J}_{n+h}\mathbf e_{n+h}$.
If the Jacobian is $L_{n+h}$-Lipschitz on that neighborhood, expansion
at the true input also gives
\begin{equation}
\begin{aligned}
 \mathbf e_{n+h+1}
 &=\boldsymbol\delta_{n+h}
   +D\mathbf f_{n+h}(\mathbf u_{n+h})\mathbf e_{n+h}
   +\mathbf r_{n+h},\\
 \|\mathbf r_{n+h}\|&\le\tfrac12L_{n+h}\|\mathbf e_{n+h}\|^2.
\end{aligned}
\label{eq:error-propagation}
\end{equation}
For any bounds $a_h\ge\|\overline{\mathbf J}_{n+h}\|$, repeated
substitution and the triangle inequality yield the finite-horizon bound
\begin{equation}
 \|\mathbf e_{n+H}\|
 \le\left(\prod_{j=0}^{H-1}a_j\right)\|\mathbf e_n\|
 +\sum_{k=0}^{H-1}
   \left(\prod_{j=k+1}^{H-1}a_j\right)\|\boldsymbol\delta_{n+k}\|,
 \label{eq:rollout-finite-bound}
\end{equation}
where empty products equal one. This follows by induction from
$\|\mathbf e_{n+h+1}\|\le
\|\boldsymbol\delta_{n+h}\|+a_h\|\mathbf e_{n+h}\|$.
With an exact initial history, the first term vanishes.
Both local prediction defects and subsequent amplification therefore
determine rollout error; small one-step defects alone need not imply
small errors at a longer horizon.

For general history and output-block lengths, let $q$ denote the number of
predicted physical variables per frame, so that $C_{\mathrm{out}}=T_{\mathrm{out}}q$ under
the notation in \hyperref[sec:method-setup]{Problem Setup}.
For histories of these variables, let
$\mathbf{s}_k\in\mathbb{R}^{T_{\mathrm{in}}Nq}$ vectorize the exact
$T_{\mathrm{in}}$-frame history at model call $k$.
The complete deployed update predicts $T_{\mathrm{out}}$ frames using
$\mathcal{F}_{\theta}$, appends them to the history, and retains the most
recent $T_{\mathrm{in}}$ frames.
With supplied conditions held fixed at each call, this defines a
history-update map
$\mathbf{f}^{\mathrm{hist}}_k:\mathbb{R}^{T_{\mathrm{in}}Nq}
\rightarrow\mathbb{R}^{T_{\mathrm{in}}Nq}$, distinct from the map that
outputs only the new frames.
Equations~\ref{eq:rollout-exact-identity}--\ref{eq:rollout-finite-bound}
apply to this complete map under their respective smoothness assumptions,
using errors in the history state and its exact next history.
Here, $k$ counts model calls, each advancing $T_{\mathrm{out}}$ forecast frames, rather
than individual physical prediction leads.
Framewise errors are evaluated from the corresponding predicted block,
including frames not retained in history when $T_{\mathrm{out}}>T_{\mathrm{in}}$.
To bound the entire block, write $\mathcal B_k(\mathbf s)$ for the
$T_{\mathrm{out}}$-frame output map and $\mathbf y_k$ for its exact target.
If $\mathcal B_k$ is $L_k^{\mathrm{out}}$-Lipschitz on the segment between
$\mathbf s_k$ and the predicted history $\widehat{\mathbf s}_k$, then
\begin{equation}
 \|\mathcal B_k(\widehat{\mathbf s}_k)-\mathbf y_k\|
 \le \|\mathcal B_k(\mathbf s_k)-\mathbf y_k\|
    +L_k^{\mathrm{out}}\|\widehat{\mathbf s}_k-\mathbf s_k\|.
 \label{eq:rollout-block-output-bound}
\end{equation}
This follows by adding and subtracting $\mathcal B_k(\mathbf s_k)$.
Combined with \eqref{eq:rollout-finite-bound} for the history state,
it bounds errors in all $T_{\mathrm{out}}$ output frames, including those discarded
before the next model call.

The propagation term depends on both the model and its accumulated input
error, rather than identifying a Jacobian norm in isolation.
This architecture-independent relation motivates evaluating repeated deployment in addition to
predictions from exact inputs.
It does not assign local error or propagation sensitivity exclusively to
either branch of Transolver-$\sigma$.
Throughout the paper, rollout stability denotes empirical robustness over
the evaluated finite horizon, not a formal numerical or asymptotic guarantee.
Appendix~\ref{app:exact-error-decomposition} evaluates an exact finite-difference
decomposition on stored checkpoints, avoiding the small-error approximation
in Equation~\ref{eq:error-propagation}.

\subsection{Conditional Spectral--Physical Complementarity in Rollouts}
\label{app:spectral-physical-complementarity}

Sections~\ref{app:srpa-theory}--\ref{app:srpa-utility} analyze the slice
update; here we study why combining two representation families can help
recursive prediction. The mechanism is directional: a small one-step
error can omit a weak but persistent dynamical effect, whereas a larger
error can decay under repeated deployment. We first identify structural
properties of the two operator cores, then prove a ranking reversal in a
finite-capacity model and give conditions under which approximate learned
recomposition preserves a joint advantage. All results are conditional;
Fourier parameterization alone does not impose contraction.

\begingroup
\setlength{\abovedisplayskip}{3pt plus 1pt minus 1pt}
\setlength{\belowdisplayskip}{3pt plus 1pt minus 1pt}
\setlength{\abovedisplayshortskip}{2pt}
\setlength{\belowdisplayshortskip}{2pt}
\setlength{\jot}{3pt}
\newtheorem*{complementcorollary}{Corollary}

\paragraph{What the operator structures imply}
The norm below is Euclidean after vectorization, with induced operator
norm $\|\cdot\|_2$ and Frobenius norm $\|\cdot\|_F$ for matrices.
For the axis-factorized linear spectral core $\mathcal S$, orthonormal
Fourier transforms, truncation, and mode-wise channel matrices give
\begin{equation}
 \|D\mathcal S\|_2=\|\mathcal S\|_2
 \le \sum_{a=1}^{d}\max_{0\le k<K_a}\|\mathbf W_a(k)\|_2.
 \label{eq:complement-spectral-gain}
\end{equation}
Indeed, each axis transform is unitary in its full Fourier representation,
truncation is an orthogonal projection, and each multiplier is block
diagonal in that basis; apply the triangle inequality to the axis sum.
For real FFTs this argument uses the conjugate-symmetric full spectrum
and the real projection at self-conjugate frequencies. The bound concerns
the spectral core, before normalization, residual paths, and nonlinear
recomposition. It neither bounds the full network by itself nor implies
that its multipliers are contractive.

For the physical core, consider one head and let $\mathcal O$ denote
the linear part of its pointwise output projection. We omit the constant
output bias, which does not affect the derivative:
\[
 \Psi(\mathbf Y)
 =\mathcal O\!\left(\mathbf P(\mathbf Y)
                   \mathcal G(\mathbf Z(\mathbf Y))\right).
\]
Here $\mathcal G$ maps the $M\times d_h$ slice tensor to an equally sized
token response, including the identity and scaled attention for SRPA.
Assume differentiability at the input and that each routing row depends
only on a fixed local neighborhood of that point, as in the convolutions
in Appendix~\ref{app:srpa-implementation}. The chain rule gives
\begin{equation}
 D\Psi(\mathbf Y)[\mathbf V]
 =\mathcal O\!\left(
 \begin{aligned}
  &(D\mathbf P(\mathbf Y)[\mathbf V])\mathcal G(\mathbf Z)\\
  &{}+\mathbf P\,D\mathcal G(\mathbf Z)[D\mathbf Z(\mathbf Y)[\mathbf V]]
 \end{aligned}
 \right).
 \label{eq:complement-routing-derivative}
\end{equation}
The first term is local in its input perturbation. After vectorization,
the second factors through an $Md_h$-dimensional token space and has rank
at most $Md_h$; across heads the corresponding bound is $MC_{\mathrm{phy}}$.
This retains routing derivatives rather than freezing the adaptive
weights. Their coefficients depend on the state, so this structural
factorization supplies no automatic small-gain bound. It also does not
make the complete, multilayer physical-only model a rank-$M$ operator.
These facts motivate a controlled comparison of structured propagation
and limited interaction capacity.

\paragraph{A representation model with complementary omissions}
Work on an $n$-dimensional real resolved space. Let $\mathbf B_F$ be a fixed
orthogonal real Fourier transform matrix, and suppose the exact one-call
evolution is
\begin{equation}
\begin{alignedat}{2}
 &\mathbf u_{k+1}=\mathbf T\mathbf u_k,\qquad
 &\mathbf T&=\mathbf I-h\mathbf L,\qquad
             \mathbf L=\mathbf L_0+\mathbf R,\\
 &\mathbf L_0=\mathbf B_F^\top\operatorname{diag}(\mathbf d)\mathbf B_F,\qquad
 &\mathbf R&=\mathbf U\mathbf C\mathbf U^\top.
\end{alignedat}
 \label{eq:complement-dynamics}
\end{equation}
Assume $\mathbf d\ge0$, $\mathbf C\succeq0$,
$\operatorname{rank}(\mathbf R)\le r<n$, and
$0<\mu\mathbf I\preceq\mathbf L\preceq\Lambda\mathbf I$.
Choose $0<h\le\Lambda^{-1}$ and put $\chi=1-h\mu<1$.
Then $0\preceq\mathbf T\preceq\chi\mathbf I$.
The Fourier-diagonal term describes dissipation in the prescribed Fourier basis;
a low-rank positive interaction can describe additional damping on a few
spatial structures and generally couples Fourier modes.
With the common local path fixed to the identity, define
\begin{equation}
\begin{aligned}
 \mathcal C_{\mathrm{phy}}
   &=\{\mathbf I+\mathbf K:\operatorname{rank}(\mathbf K)\le r\},\\
 \mathcal C_{\mathrm{spec}}
   &=\{\mathbf B_F^\top\operatorname{diag}(\mathbf a)\mathbf B_F:
                                      \mathbf a\in\mathbb R^n\},\\
 \mathcal C_{\mathrm{joint}}
   &=\{\mathbf S+\mathbf K:\mathbf S\in\mathcal C_{\mathrm{spec}},
                                    \operatorname{rank}(\mathbf K)\le r\}.
\end{aligned}
\label{eq:complement-model-classes}
\end{equation}
These classes isolate the two core structures. The spectral class allows
independent gains on all resolved Fourier coordinates; it does not impose
the axis-additive symbol or retained-mode budget of the implementation.
The physical class fixes the common local path to the identity and limits
the rank of its correction. Neither is the full hypothesis class of its
deep single-branch counterpart. The joint class allocates both types of
capacity; this comparison does not assert equal parameter counts or that
two full-width baselines fit into two half-width branches.

\begin{theorem}[Complementary approximation and rollout ranking reversal]
\label{thm:complement-ranking-reversal}
Let the initial state be zero-mean with covariance $\mathbf I$, and train
each matrix class in \eqref{eq:complement-model-classes} by minimizing
$\mathbb E\|\widehat{\mathbf T}\mathbf u_0-\mathbf T\mathbf u_0\|^2$.
Write $\lambda_1\ge\cdots\ge\lambda_n>0$ and $\mathbf v_i$ for an
orthonormal eigensystem of $\mathbf L$. One physical minimizer and the
unique spectral minimizer are
\begin{equation}
\begin{aligned}
 \mathbf T_{\mathrm{phy}}
   &=\mathbf I-h\sum_{i=1}^{r}\lambda_i\mathbf v_i\mathbf v_i^\top,\\
 \mathbf T_{\mathrm{spec}}
   &=\mathbf B_F^\top\operatorname{diag}
                  (\operatorname{diag}(\mathbf B_F\mathbf T\mathbf B_F^\top))
                  \mathbf B_F.
\end{aligned}
\label{eq:complement-optimal-cores}
\end{equation}
Define $E_m(k)^2=\mathbb E\|\mathbf T_m^k\mathbf u_0-
\mathbf T^k\mathbf u_0\|^2$ for $m\in\{\mathrm{phy},\mathrm{spec}\}$.
Then
\begin{equation}
\begin{aligned}
 E_{\mathrm{phy}}(1)^2
   &=h^2\sum_{i>r}\lambda_i^2,\\
 E_{\mathrm{spec}}(1)^2
   &=h^2\|\operatorname{offdiag}(\mathbf B_F\mathbf R\mathbf B_F^\top)\|_F^2,\\[2pt]
 E_{\mathrm{phy}}(k)^2
   &=\sum_{i>r}[1-(1-h\lambda_i)^k]^2\longrightarrow n-r,\\
 E_{\mathrm{spec}}(k)^2
   &\le4n\chi^{2k}\longrightarrow0.
\end{aligned}
\label{eq:complement-risks}
\end{equation}
Consequently, if
$\sum_{i>r}\lambda_i^2<
\|\operatorname{offdiag}(\mathbf B_F\mathbf R\mathbf B_F^\top)\|_F^2$,
the physical minimizer is strictly better at one step, but the spectral
minimizer is strictly better at every sufficiently large finite $k$.
Moreover, $\mathbf T\in\mathcal C_{\mathrm{joint}}$.
\end{theorem}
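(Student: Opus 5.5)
The plan rests on one reduction. Because $\mathbf u_0$ has zero mean and covariance $\mathbf I$, we have $\mathbb E\|\mathbf M\mathbf u_0\|^2=\operatorname{tr}(\mathbf M\mathbf M^\top)=\|\mathbf M\|_F^2$ for any matrix $\mathbf M$. So every risk in the statement is a Frobenius distance: training reduces to $\min_{\widehat{\mathbf T}\in\mathcal C_m}\|\widehat{\mathbf T}-\mathbf T\|_F^2$, and $E_m(k)^2=\|\mathbf T_m^k-\mathbf T^k\|_F^2$. I will then handle each class with a standard Frobenius-approximation fact. Throughout I use that $\mathbf T=\mathbf I-h\mathbf L$ has eigenpairs $(1-h\lambda_i,\mathbf v_i)$. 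Since $h\le\Lambda^{-1}$ and $\lambda_i\ge\mu$, these eigenvalues satisfy $0\le 1-h\lambda_i\le\chi$.

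\emph{Physical class.} Writing $\widehat{\mathbf T}=\mathbf I+\mathbf K$, the objective is $\|\mathbf K+h\mathbf L\|_F^2$ with $\operatorname{rank}\mathbf K\le r$. By Eckart--Young applied to the symmetric matrix $-h\mathbf L$, a minimizer is $\mathbf K=-h\sum_{i\le r}\lambda_i\mathbf v_i\mathbf v_i^\top$, and the residual is $h^2\sum_{i>r}\lambda_i^2$. This gives $\mathbf T_{\mathrm{phy}}$ and $E_{\mathrm{phy}}(1)^2$. The minimizer is not unique when $\lambda_r=\lambda_{r+1}$, which is why the statement says ``one physical minimizer''. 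For $k$ steps, $\mathbf T_{\mathrm{phy}}$ is diagonal in the eigenbasis $\{\mathbf v_i\}$, with eigenvalue $1$ for $i>r$ and $1-h\lambda_i$ for $i\le r$, so it commutes with $\mathbf T$. Hence $\mathbf T_{\mathrm{phy}}^k-\mathbf T^k=\sum_{i>r}[1-(1-h\lambda_i)^k]\mathbf v_i\mathbf v_i^\top$, which yields the stated formula. Each bracket tends to $1$ because $0\le 1-h\lambda_i\le\chi<1$, so $E_{\mathrm{phy}}(k)^2\to n-r$.

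\emph{Spectral class.} Orthogonality of $\mathbf B_F$ preserves the Frobenius norm, so the objective becomes $\|\operatorname{diag}(\mathbf a)-\mathbf B_F\mathbf T\mathbf B_F^\top\|_F^2$. This is minimized uniquely by matching the diagonal, and the residual is the off-diagonal part of $\mathbf B_F\mathbf T\mathbf B_F^\top$. Now $\mathbf I$ and $\mathbf L_0$ are diagonal in the Fourier basis, so this off-diagonal part equals $-h\operatorname{offdiag}(\mathbf B_F\mathbf R\mathbf B_F^\top)$, which gives $E_{\mathrm{spec}}(1)^2$. For the $k$-step bound, each diagonal entry of $\mathbf T_{\mathrm{spec}}$ is a Rayleigh quotient $\mathbf b^\top\mathbf T\mathbf b$ for a unit Fourier row $\mathbf b$, so it lies in $[0,\chi]$. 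Therefore $\|\mathbf T_{\mathrm{spec}}\|_2\le\chi$, and also $\|\mathbf T\|_2\le\chi$. Then
\[
\|\mathbf T_{\mathrm{spec}}^k-\mathbf T^k\|_F\le\sqrt n\,\|\mathbf T_{\mathrm{spec}}^k-\mathbf T^k\|_2\le 2\sqrt n\,\chi^k ,
\]
and squaring gives $E_{\mathrm{spec}}(k)^2\le 4n\chi^{2k}\to0$.

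\emph{Ranking reversal and the joint class.} The one-step ordering is immediate from the two closed forms and the hypothesis. For large $k$, the physical error tends to $n-r\ge1$ because $r<n$, while the spectral error tends to $0$. Hence there is a $k_0$ beyond which $E_{\mathrm{spec}}(k)<E_{\mathrm{phy}}(k)$ strictly. Finally, decompose
\[
\mathbf T=\mathbf B_F^\top\operatorname{diag}(\mathbf 1-h\mathbf d)\mathbf B_F+(-h\mathbf R).
\]
The first term lies in $\mathcal C_{\mathrm{spec}}$ and $\operatorname{rank}(-h\mathbf R)\le r$, so $\mathbf T\in\mathcal C_{\mathrm{joint}}$.

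The main step to get right is the spectral $k$-step bound. $\mathbf T_{\mathrm{spec}}$ does not commute with $\mathbf T$, so no exact formula is available, and the argument must go through the operator-norm bound $\|\mathbf T_{\mathrm{spec}}\|_2\le\chi$ via Rayleigh quotients. This is also the place where $h\le\Lambda^{-1}$ is needed: it ensures the entries are nonnegative, so the bound is $\chi$ and not $\max(\chi,|1-h\Lambda|)$.
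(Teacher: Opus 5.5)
Your proposal is correct and follows the paper's proof essentially step for step. The shared steps are the Frobenius reduction under isotropic covariance, a rank-$r$ lower bound attained by the truncated eigen-expansion of $h\mathbf L$, orthogonal projection onto Fourier-diagonal matrices, the bound $\|\mathbf T_{\mathrm{spec}}\|_2\le\chi$ from Fourier-coordinate diagonal entries lying in $[0,\chi]$, the $2\sqrt n\,\chi^k$ triangle-inequality estimate, and the decomposition $\mathbf T=(\mathbf I-h\mathbf L_0)-h\mathbf R$. The differences are cosmetic: you cite Eckart--Young where the paper proves the rank-$r$ lower bound directly via the row-space projection $\mathbf P_K$, and the Rayleigh quotients $\mathbf b^\top\mathbf T\mathbf b$ you use are the diagonal entries of $\mathbf B_F\mathbf T\mathbf B_F^\top$ (the eigenvalues of $\mathbf T_{\mathrm{spec}}$), not the diagonal entries of $\mathbf T_{\mathrm{spec}}$ itself as you wrote.
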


\begin{proof}
Isotropic covariance turns each expected squared error into a squared
Frobenius norm. For any rank-at-most-$r$ matrix $\mathbf K$, let
$\mathbf P_K$ project onto its row space. Since $\mathbf K(\mathbf I-\mathbf P_K)=0$,
\[
 \|\mathbf K+h\mathbf L\|_F^2
 \ge h^2\operatorname{tr}(\mathbf L^2(\mathbf I-\mathbf P_K))
 \ge h^2\sum_{i>r}\lambda_i^2.
\]
For the last inequality, in the eigenbasis of $\mathbf L$ the diagonal
entries of $\mathbf P_K$ lie in $[0,1]$ and sum to at most $r$; their
weighted sum with weights $\lambda_i^2$ is at most
$\sum_{i=1}^r\lambda_i^2$. The displayed physical minimizer attains the
bound. Orthogonal projection onto the Fourier-diagonal matrices yields
the spectral minimizer and its off-diagonal residual.
Since every diagonal entry of $\mathbf B_F\mathbf L\mathbf B_F^\top$ lies
in $[\mu,\Lambda]$, $\|\mathbf T_{\mathrm{spec}}\|_2\le\chi$.
The physical minimizer commutes with $\mathbf T$ and acts as the identity
on the omitted eigenvectors, giving its exact $k$-step error.
The triangle inequality gives
$\|\mathbf T_{\mathrm{spec}}^k-\mathbf T^k\|_F\le2\sqrt n\chi^k$,
without assuming these two matrices commute. Their limits establish the
strict eventual reversal. Finally,
$\mathbf T=(\mathbf I-h\mathbf L_0)-h\mathbf R$ belongs to the joint class.
\end{proof}

The theorem also gives a computable sufficient horizon. Since
$0\le1-h\lambda_i\le\chi$, \eqref{eq:complement-risks} implies
\[
\begin{alignedat}{2}
 E_{\mathrm{phy}}(k)&\ge\sqrt{n-r}(1-\chi^k),\qquad
 &E_{\mathrm{spec}}(k)&\le2\sqrt n\,\chi^k.
\end{alignedat}
\]
Thus $E_{\mathrm{spec}}(k)<E_{\mathrm{phy}}(k)$ whenever
\[
 \chi^k<\frac{\sqrt{n-r}}{2\sqrt n+\sqrt{n-r}}.
\]
This is a sufficient bound, not the earliest crossing time.

The identity-plus-low-rank fit captures the strongest directions but
omits weaker damping: its multiplier is $1$ rather than $1-h\lambda_i$
on each omitted direction. The spectral fit retains damping but misses
cross-mode coupling; the joint class represents both. Fixing the identity
path is essential. For example, allowing a scalar local decay admits
\[
 \widetilde{\mathbf T}_{\mathrm{phy}}
 =(1-h\mu)\mathbf I
 -h\sum_{i=1}^r(\lambda_i-\mu)\mathbf v_i\mathbf v_i^\top,
\]
whose omitted-direction multiplier is $\chi<1$. The persistent identity
modes therefore follow from the stated class constraint, rather than
from physical-state modeling in general.

\paragraph{Persistent excitation}
The unforced result isolates dissipative omission. Its vanishing spectral
error is an absolute-error statement with decaying reference states,
not a claim of vanishing relative error. The same omission has a distinct
consequence under a sustained external input.

\begin{complementcorollary}[Omitted damping under a common forcing]
Keep the matrices from Theorem~\ref{thm:complement-ranking-reversal},
initialize all states at zero, and add the same known forcing
$\sigma\mathbf v_i$, $\sigma>0$, $i>r$, at every call.
Writing $E_m^{\mathrm{f}}(k)=\|\mathbf u_k^m-\mathbf u_k\|$, one has
\begin{equation}
\begin{aligned}
 E_{\mathrm{phy}}^{\mathrm{f}}(k)
   &=\sigma\left[k-\frac{1-(1-h\lambda_i)^k}{h\lambda_i}\right]
     \longrightarrow\infty,\\
 E_{\mathrm{spec}}^{\mathrm{f}}(k)&\le\frac{2\sigma}{1-\chi}.
\end{aligned}
\label{eq:complement-forced-omission}
\end{equation}
Consequently the spectral predictor has smaller error for all sufficiently
large finite $k$; the exact joint representation reproduces the forced
trajectory.
\end{complementcorollary}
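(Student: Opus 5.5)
The plan is to solve both forced recursions in closed form, using the structure of the minimizers from Theorem~\ref{thm:complement-ranking-reversal}. With zero initial state and the common forcing $\sigma\mathbf v_i$ at every call, each predictor $m$ evolves by $\mathbf u_{k+1}^m=\mathbf T_m\mathbf u_k^m+\sigma\mathbf v_i$, and the true state by $\mathbf u_{k+1}=\mathbf T\mathbf u_k+\sigma\mathbf v_i$. Unrolling gives
\[
\mathbf u_k^m=\sigma\sum_{j=0}^{k-1}\mathbf T_m^j\mathbf v_i,
\qquad
\mathbf u_k=\sigma\sum_{j=0}^{k-1}\mathbf T^j\mathbf v_i .
\]
Since $\mathbf v_i$ is an eigenvector of $\mathbf L$, we have $\mathbf T\mathbf v_i=(1-h\lambda_i)\mathbf v_i$. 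With $h\lambda_i\in(0,1]$, the exact trajectory is therefore
\[
\mathbf u_k=\sigma\,\frac{1-(1-h\lambda_i)^k}{h\lambda_i}\,\mathbf v_i .
\]

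\textbf{Physical predictor.} The orthonormality of the eigensystem and $i>r$ give $\mathbf T_{\mathrm{phy}}\mathbf v_i=\mathbf v_i$, because $\mathbf T_{\mathrm{phy}}=\mathbf I-h\sum_{j\le r}\lambda_j\mathbf v_j\mathbf v_j^\top$ annihilates none of $\mathbf v_i$. Hence $\mathbf u_k^{\mathrm{phy}}=k\sigma\mathbf v_i$. Subtracting the exact trajectory and taking the norm along the unit vector $\mathbf v_i$ yields the displayed expression for $E_{\mathrm{phy}}^{\mathrm f}(k)$. The bracket is nonnegative, since $1-(1-h\lambda_i)^k\le kh\lambda_i$ by Bernoulli's inequality. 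The subtracted term is bounded by $1/(h\lambda_i)$, so the error diverges linearly in $k$.

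\textbf{Spectral predictor.} Here I avoid any commutation assumption, since $\mathbf T_{\mathrm{spec}}$ and $\mathbf T$ need not commute. The proof of Theorem~\ref{thm:complement-ranking-reversal} gives $\|\mathbf T_{\mathrm{spec}}\|_2\le\chi<1$. A geometric series then bounds
\[
\|\mathbf u_k^{\mathrm{spec}}\|\le\sigma\sum_{j\ge0}\chi^j=\frac{\sigma}{1-\chi}.
\]
For the exact state, $\lambda_i\ge\mu$ gives
\[
\|\mathbf u_k\|\le\frac{\sigma}{h\lambda_i}\le\frac{\sigma}{h\mu}=\frac{\sigma}{1-\chi}.
\]
The triangle inequality then yields $E_{\mathrm{spec}}^{\mathrm f}(k)\le 2\sigma/(1-\chi)$, uniformly in $k$.

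\textbf{Conclusion.} The ranking claim follows directly: the physical error tends to infinity while the spectral error stays bounded, so for all $k$ beyond some finite threshold the spectral error is smaller. Explicitly, it suffices that $\sigma(k-1/(h\lambda_i))>2\sigma/(1-\chi)$. Finally, Theorem~\ref{thm:complement-ranking-reversal} shows $\mathbf T\in\mathcal C_{\mathrm{joint}}$, so the joint predictor equal to $\mathbf T$ produces exactly the recursion of the true system and reproduces the forced trajectory.

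The only delicate point is the spectral bound, where one must resist diagonalizing jointly and instead bound the two trajectories separately. The check $1/(h\lambda_i)\le 1/(1-\chi)$ uses $\chi=1-h\mu$ and $\lambda_i\ge\mu$. Repeated eigenvalues cause no issue, because the statement fixes one orthonormal eigensystem consistent with the chosen physical minimizer.
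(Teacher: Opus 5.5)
Your proof is correct and follows the paper's argument: the closed-form responses $k\sigma\mathbf v_i$ and $\sigma[1-(1-h\lambda_i)^k](h\lambda_i)^{-1}\mathbf v_i$ on the omitted mode, a uniform $\sigma/(1-\chi)$ bound on both the spectral and exact states combined with the triangle inequality, and $\mathbf T\in\mathcal C_{\mathrm{joint}}$ for the joint claim. The differences are only minor refinements. You bound the exact state through its closed form and $\lambda_i\ge\mu$ rather than through $\|\mathbf T\|_2\le\chi$. You also check via Bernoulli's inequality that the bracket is nonnegative, which the paper's equality for a norm tacitly requires, and you give an explicit crossing threshold.
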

\begin{proof}
On $\mathbf v_i$, the physical and exact responses are respectively
$k\sigma\mathbf v_i$ and
$\sigma[1-(1-h\lambda_i)^k](h\lambda_i)^{-1}\mathbf v_i$.
Their difference gives the first equality. Contractivity bounds the exact
and spectral state norms by $\sigma/(1-\chi)$, so their error is at most
twice this value. The diverging physical error eventually exceeds that
bound. Using the exact matrix and the same forcing gives the joint claim.
\end{proof}

This forced comparison establishes eventual superiority over the omitted
damping mode; it does not assert a strict one-call reversal from the zero
initial state, where all predictors output the same forcing.

\paragraph{Approximate joint representation}
Exact representability is not needed for controlled accumulation.
If $\|\widehat{\mathbf T}_{\mathrm{joint}}-\mathbf T\|_2\le\epsilon$
and $\epsilon<1-\chi$, then
$\|\widehat{\mathbf T}_{\mathrm{joint}}\|_2\le\chi+\epsilon<1$.
For exact states bounded by $R_0$, a common initial state, and identical
known additive forcing in both updates, subtraction gives
\begin{equation}
\begin{alignedat}{2}
 &\|\mathbf e_k^{\mathrm{joint}}\|
   \le\epsilon R_0 G_k(\chi+\epsilon),\qquad
 &G_k(a)&=\sum_{j=0}^{k-1}a^j.
\end{alignedat}
 \label{eq:complement-approximate-joint}
\end{equation}
Indeed, $\mathbf e_{k+1}=\widehat{\mathbf T}_{\mathrm{joint}}\mathbf e_k+
(\widehat{\mathbf T}_{\mathrm{joint}}-\mathbf T)\mathbf u_k$; induction
proves the bound and the uniform error envelope
\[
 \sup_{k\ge0}\|\mathbf e_k^{\mathrm{joint}}\|
 \le\frac{\epsilon R_0}{1-\chi-\epsilon}.
\]
This does not require convergence of the error sequence. Spectral
truncation, axis factorization, finite branch width, and imperfect fitting
must all be included in $\epsilon$ when using this approximation model.

A concrete nonzero-error instance is given in real Fourier coordinates by
\[
\begin{aligned}
 \mathbf L_0&=\operatorname{diag}(0.08,0.10,0.12,0.14),\\
 \mathbf U&=\tfrac12(1,1,1,1)^\top,\qquad
 \mathbf C=0.4,\quad h=1,\quad r=1.
\end{aligned}
\]
\begingroup
\predisplaypenalty=0
Use the minimizers in \eqref{eq:complement-optimal-cores} and
$\widehat{\mathbf T}_{\mathrm{joint}}
=\mathbf I-\mathbf L_0-0.39\mathbf U\mathbf U^\top$.
Here $\epsilon=0.01$ and one may choose $\mu=0.08$, $\Lambda=0.54$,
so the contraction condition holds. Direct matrix powers yield the
following expected squared errors:
\[
\begin{array}{@{}crrr@{}}
 \toprule
 k&\mathrm{physical}&\mathrm{spectral}&\mathrm{joint}\\\midrule
 1&0.0370&0.1200&1.00\times10^{-4}\\
 5&0.5853&0.2701&9.93\times10^{-6}\\
 10&1.4030&0.1611&4.51\times10^{-7}\\
 \bottomrule
\end{array}
\]
\endgroup
These are analytical example values, not benchmark measurements. Both
single-branch predictors minimize their own one-step objective, the joint
error is nonzero, and the spectral and exact matrices do not commute.
The first reversal occurs at call $k=4$: the physical and spectral squared
errors are $0.4183$ and $0.2818$, respectively. Thus the ordering changes
within a short finite horizon, well before the asymptotic limit.

\paragraph{Transfer to nonlinear learned recomposition}
Theorem~\ref{thm:complement-ranking-reversal} establishes complementary
representation in a controlled matrix model. We next quantify how much
approximation error an ideal joint advantage can tolerate. Let
$f_{\star,k}$ be a specified ideal joint map; in the linear construction,
it is $f_\star(u)=\mathbf T u$, or $\mathbf T u$ plus the common forcing.
The actual architecture approximates a joint map through latent-channel
recomposition inside each block, rather than through an output ensemble.

\begin{proposition}[Finite-horizon tolerance to recomposition error]
\label{prop:complement-realization}
Suppose $f_{\star,k}$ is $a$-Lipschitz, $a\ge0$, on a common
rollout domain containing the ideal and actual joint states through
$K$ calls. Suppose also that the actual joint map satisfies
$\|f_{\mathrm{joint},k}(u)-f_{\star,k}(u)\|\le\eta$
on that domain, for every $0\le k<K$.
All trajectories start from the same exact state and use the same
supplied conditions. Let $E_m(k)$ denote the actual state-error norm
for $m\in\{\mathrm{phy},\mathrm{spec},\star,\mathrm{joint}\}$,
and put
\[
 \Delta_k=\min\{E_{\mathrm{phy}}(k),E_{\mathrm{spec}}(k)\}
                 -E_\star(k).
\]
For any prescribed set of calls $\mathcal I\subseteq\{1,\ldots,K\}$,
if $\Delta_k>\eta G_k(a)$ for all $k\in\mathcal I$, then
\begin{equation}
 E_{\mathrm{joint}}(k)
 <\min\{E_{\mathrm{phy}}(k),E_{\mathrm{spec}}(k)\},
 \qquad k\in\mathcal I.
 \label{eq:complement-strict-joint}
\end{equation}
The same statement holds for root-mean-square errors over a common
initial-state distribution when the assumptions hold uniformly, the
errors have finite second moments, and $E_m(k)$ and $\Delta_k$ use
that root-mean-square norm.
\end{proposition}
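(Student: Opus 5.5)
The plan is to compare the actual joint trajectory with the ideal joint trajectory $f_{\star,k}$ directly. The error with respect to the exact trajectory then follows from the triangle inequality. Write $u^{\mathrm{joint}}_k$ and $u^\star_k$ for the two trajectories, both started from the common exact state $u_0$ with the same supplied conditions. Put $d_k=\|u^{\mathrm{joint}}_k-u^\star_k\|$, so that $d_0=0$. The key intermediate bound I will aim for is
\[
 d_k\le\eta\,G_k(a),\qquad 0\le k\le K.
\]
This is the same geometric accumulation that appears in \eqref{eq:rollout-finite-bound} and \eqref{eq:complement-approximate-joint}.

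To obtain it, I will split one step by adding and subtracting $f_{\star,k}(u^{\mathrm{joint}}_k)$:
\[
 u^{\mathrm{joint}}_{k+1}-u^\star_{k+1}
 =\bigl[f_{\mathrm{joint},k}(u^{\mathrm{joint}}_k)-f_{\star,k}(u^{\mathrm{joint}}_k)\bigr]
 +\bigl[f_{\star,k}(u^{\mathrm{joint}}_k)-f_{\star,k}(u^\star_k)\bigr].
\]
The first bracket is bounded by $\eta$ by the uniform approximation hypothesis. The second is bounded by $a\,d_k$ by the Lipschitz hypothesis. Both hypotheses apply because the common rollout domain is assumed to contain both the ideal and actual joint states through $K$ calls. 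This yields $d_{k+1}\le\eta+a\,d_k$. Induction from $d_0=0$, together with $G_{k+1}(a)=1+aG_k(a)$, then gives the intermediate bound.

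Next I pass to errors against the exact state $u_k$. The triangle inequality gives
\[
 E_{\mathrm{joint}}(k)\le E_\star(k)+d_k\le E_\star(k)+\eta G_k(a).
\]
For every $k\in\mathcal I$, the hypothesis $\Delta_k>\eta G_k(a)$ makes the right-hand side strictly smaller than $E_\star(k)+\Delta_k=\min\{E_{\mathrm{phy}}(k),E_{\mathrm{spec}}(k)\}$. This proves \eqref{eq:complement-strict-joint}. The single-branch errors enter only through $\Delta_k$, so no structural assumption on those models is needed.

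For the root-mean-square version, I will repeat the pathwise argument for each initial state. This uses the uniformity of $\eta$, $a$ and the domain over the initial-state distribution. Pathwise we get $d_k(u_0)\le\eta G_k(a)$, hence $\|d_k\|_{L^2}\le\eta G_k(a)$. Minkowski's inequality in $L^2$, valid because the second moments are finite, replaces the triangle inequality and gives $E^{\mathrm{rms}}_{\mathrm{joint}}(k)\le E^{\mathrm{rms}}_\star(k)+\eta G_k(a)$. The conclusion then follows exactly as before. The only delicate point is bookkeeping rather than analysis: the hypotheses must be invoked only at points in the common domain and at call indices $0\le k<K$, so that the induction stays within its stated range. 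No step requires a genuinely new estimate.
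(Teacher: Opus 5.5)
Your proposal is correct and matches the paper's proof essentially step for step. Both arguments add and subtract $f_{\star,k}$ at the actual joint input to get $d_{k+1}\le\eta+a\,d_k$ with $d_0=0$, induct to $d_k\le\eta G_k(a)$, apply the triangle inequality and the margin hypothesis for the strict ordering, and use Minkowski's inequality for the root-mean-square version. Your explicit use of $G_{k+1}(a)=1+aG_k(a)$ and $a\ge0$ in the induction simply spells out what the paper leaves implicit.
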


\begin{proof}
Let $d_k$ be the distance between the joint and ideal trajectories.
Adding and subtracting $f_{\star,k}$ at the actual joint input gives
$d_{k+1}\le\eta+a d_k$, with $d_0=0$.
Hence $d_k\le\eta G_k(a)$ and
$E_{\mathrm{joint}}(k)\le E_\star(k)+\eta G_k(a)$.
The margin assumption proves the strict ordering; Minkowski's inequality
gives the distributional version. The argument uses only the ideal
map's Lipschitz constant, not a derivative bound on the approximation
residual.
\end{proof}

In the linear construction, $E_\star(k)=0$, and the representation classes
establish realizability independently of the comparison margin. On a
bounded rollout domain, sufficiently small nonzero realization error
retains any positive finite collection of ideal margins. For the actual
network, this approximation condition must hold at the allocated
half-width branch budgets and across visited states. No contraction or
derivative bound on the realization residual is needed for this transfer
result, although large $G_k(a)$ reduces its tolerance.

Another possible ideal map selects complementary state components:
\[
 f_{\star,k}(u)=\mathbf Q_{\mathrm{mix}}f_{\mathrm{spec},k}(u)
       +(\mathbf I-\mathbf Q_{\mathrm{mix}})f_{\mathrm{phy},k}(u),
\]
where $\mathbf Q_{\mathrm{mix}}$ is a fixed orthogonal projection chosen
from the dynamics or representation structure before comparing errors.
Its true-input defect selects corresponding components of the branch
defects. Subsequent propagation still determines whether the required
rollout margin is positive. This projection acts on the predicted state,
not the latent channel halves, and is an optional comparison construction;
it need not equal the diagonal-plus-low-rank map in the theorem.

\paragraph{How a physical contribution can reduce injection without excessive gain}
A complementary description uses a fixed-parameter intervention: let
$\mathcal S_k$ be the same joint network with every physical-operator
response $\Phi_{\mathrm{phy}}^{(\ell)}(\cdot)$ set to zero, retaining the
outer identity paths, recomposition layers, and all other parameters.
Define $\mathcal C_k=f_{\mathrm{joint},k}-\mathcal S_k$.
This is an exact functional decomposition through the complete nonlinear
network; $\mathcal S_k$ is not the separately trained Specsolver.
For the exact target $y_k$ at a true input $u_k$, put
$r_k=y_k-\mathcal S_k(u_k)$ and $c_k=\mathcal C_k(u_k)$.
Then
\begin{equation}
 \|f_{\mathrm{joint},k}(u_k)-y_k\|^2
 =\|r_k\|^2-2\langle r_k,c_k\rangle+\|c_k\|^2.
 \label{eq:complement-marginal-alignment}
\end{equation}
Thus $2\langle r_k,c_k\rangle>\|c_k\|^2$ is exactly the condition
for reducing that input's squared defect relative to the intervention.
If, on a common convex rollout domain containing the exact and joint
trajectories, the maps are continuously differentiable and
$\|D\mathcal S_k\|_2\le a_0$, $\|D\mathcal C_k\|_2\le a_c$,
then the joint gain is at most $a_0+a_c$.
Together with a uniform true-input defect bound $\delta$, this yields
$E_{\mathrm{joint}}(k)\le\delta G_k(a_0+a_c)$ from exact initialization.
It identifies two separate requirements: useful correction alignment and
controlled sensitivity of that correction. Comparing this upper bound
with another model's upper bound does not prove actual superiority;
\eqref{eq:complement-strict-joint} instead uses a genuine error margin.
Nor need a more accurate joint model have a smaller Jacobian norm than
the spectral model: reducing defects in persistent directions can suffice.

\paragraph{Cross-component propagation and multi-frame deployment}
The directional form of Section~\ref{app:error-propagation} is obtained
by unrolling its exact integrated-Jacobian recurrence:
\[
 e_k=\sum_{j=0}^{k-1}
       (\overline J_{k-1}\cdots\overline J_{j+1})\delta_j,
 \qquad e_0=0.
\]
Empty products are identities, and the product order is chronological
from right to left. For any fixed orthogonal state-space projection
$\mathbf Q_{\mathrm{mix}}$, put $\mathbf Q_1=\mathbf Q_{\mathrm{mix}}$ and
$\mathbf Q_2=\mathbf I-\mathbf Q_{\mathrm{mix}}$. Suppose
$\|\mathbf Q_i\overline J_k\mathbf Q_j\|_2\le A_{ij}$ and
$\|\mathbf Q_i\delta_k\|\le\epsilon_i$ uniformly along the compared
trajectory segments. Set
$x_k=(\|\mathbf Q_1e_k\|,\|\mathbf Q_2e_k\|)^\top$,
$\boldsymbol\epsilon=(\epsilon_1,\epsilon_2)^\top$, and
$\mathbf A=(A_{ij})\ge0$. Applying the triangle inequality to each
projected recurrence and then induction gives the componentwise bound
\begin{equation}
 x_k\le\sum_{j=0}^{k-1}\mathbf A^j\boldsymbol\epsilon.
 \label{eq:complement-component-bound}
\end{equation}
This finite-horizon result permits amplification and cross-component
transfer. Under the stronger conditions
\[
 A_{11}<1,\qquad A_{22}<1,\qquad
 A_{12}A_{21}<(1-A_{11})(1-A_{22}),
\]
the nonnegative $2\times2$ matrix has spectral radius below one:
its largest eigenvalue is
\[
 \tfrac12(A_{11}+A_{22}+
 \sqrt{(A_{11}-A_{22})^2+4A_{12}A_{21}})<1.
\]
Consequently $x_k\le(\mathbf I-\mathbf A)^{-1}\boldsymbol\epsilon$
for all $k$ if the bounds persist. This controls an error envelope, not a
claim that the nonlinear error trajectory converges.

For $T_{\mathrm{in}}>1$ or $T_{\mathrm{out}}>1$, use the complete
history-update maps defined in Section~\ref{app:error-propagation}.
Here $k$ counts model calls, not forecast frames. The nonlinear transfer
and component bounds then control history error; omitted output frames
require the separate block-output bound
\eqref{eq:rollout-block-output-bound}, or its corresponding joint--ideal
comparison. The single-frame dissipative matrix model is not assumed
for that full history shift.

These results separate representational complementarity from guaranteed
training outcomes. The linear model derives distinct omissions and a
strict reversal; the realization margin quantifies how much imperfect
recomposition that advantage can tolerate. The alignment and component
bounds specify sufficient conditions on the complete learned maps.
They can be examined using matched true inputs, prescribed perturbations,
and fixed-parameter branch interventions. The existing decomposition in
Appendix~\ref{app:exact-error-decomposition} motivates this analysis but
does not measure all of these conditions. All new errors here are
Euclidean or root-mean-square quantities, not an assertion of identical
ordering under every relative-$L_2$ benchmark aggregation.
\endgroup

\section{Implementation Details}
\label{app:experiments}

This appendix complements Section~\ref{sec:experimental-setup} with task
protocols, metric definitions, available training settings, and result sources.
Shared benchmark conventions are separated from the measurement protocols
for individual analyses in Appendices~\ref{app:full-ablations},
\ref{app:supplementary-results}, and~\ref{app:efficiency-measurements}.
We distinguish the experimental groups below to keep each configuration
within its evaluated setting.

\paragraph{Experimental groups and comparison scope}
Table~\ref{tab:pde-main-results} reports the main benchmark comparison,
including three-run means and standard deviations for SpecTransolver.
Table~\ref{tab:ablation-components} studies component interventions within
the proposed architecture; Table~\ref{tab:slice-attention-analysis} instead
compares slice-update rules in Transolver and an independent LinearNO baseline.
Both ablation tables report means over three independently trained runs
per configuration, using the final checkpoint from each run.
Their results do not represent repeated evaluations of the same checkpoint.

Figure~\ref{fig:rollout-comparison} evaluates finite-horizon prediction
curves, while Appendix~\ref{app:exact-error-decomposition} separates
error injection and propagation on the specified evaluation samples.
Appendix~\ref{app:efficiency-measurements} measures training costs under
dedicated workloads and configurations.
Neither diagnostic sample variation nor runtime repetitions supply the
training-run standard deviations in Table~\ref{tab:pde-main-results}.

\subsection{Benchmarks}
\label{app:experiment-protocols}

We evaluate Transolver-$\sigma$ on eleven tasks spanning canonical PDEs,
experimental measurements from RealPDEBench, and coupled reactive flows from REALM.
Table~\ref{tab:benchmark-overview} summarizes their physical fields and spatial
representations; Table~\ref{tab:forecast-protocols} lists the canonical data
splits and forecasting settings.
Darcy is a steady-state prediction task, while the remaining benchmarks
involve time-dependent fields.
For forecasting, $T_{\mathrm{in}}$ denotes the input history length,
$T_{\mathrm{out}}$ the number of frames predicted per model call,
and $H$ the total prediction horizon, all measured in frames.
Details follow for each benchmark.

\begin{table}[!b]
  \centering
  \caption{\textbf{Overview of the evaluated benchmarks.}
  Dimensions are spatial; canonical forecasting configurations are summarized
  in Table~\ref{tab:forecast-protocols}.}
  \label{tab:benchmark-overview}
  \small
  \setlength{\tabcolsep}{4pt}
  \renewcommand{\arraystretch}{1.14}
  \begin{tabular}{@{}llllll@{}}
    \toprule
    Benchmark & Data type & Spatial dim. & Grid & Input & Output \\
    \midrule
    \multicolumn{6}{@{}l}{\textit{Canonical PDE benchmarks}} \\
    Darcy & Simulation & 2D & $85^2$ & Coefficient $a$ & Pressure $p$ \\
    Navier--Stokes & Simulation & 2D & $64^2$ & $\omega$ history & Vorticity $\omega$ \\
    Kolmogorov flow & Simulation & 2D & $128^2$ & $\omega$ history & Vorticity $\omega$ \\
    Isotropic turbulence & Simulation & 3D & $60^3$ & $(\mathbf{u},p)$ history & $\mathbf{u},p$ \\
    Smoke buoyancy & Simulation & 3D & $64^3$ & $(\mathbf{u},d)$ history & $\mathbf{u},d$ \\
    \midrule
    \multicolumn{6}{@{}l}{\textit{RealPDEBench: experimental measurements}} \\
    Controlled Cylinder & Measurement & 2D & $128\!\times\!256$ & Velocity history & $u,v$ \\
    FSI & Measurement & 2D & $128^2$ & Velocity history & $u,v$ \\
    Foil & Measurement & 2D & $128\!\times\!256$ & Velocity history & $u,v$ \\
    Combustion & Measurement & 2D & $128^2$ & Intensity history & Intensity $I$ \\
    \midrule
    \multicolumn{6}{@{}l}{\textit{REALM: coupled multiphysics simulations}} \\
    IgnitHIT & Simulation & 2D & $128^2$ & Field history & Coupled fields \\
    EvolveJet & Simulation & 2D & $256^2$ & Field history & Coupled fields \\
    \bottomrule
  \end{tabular}
\end{table}

\begin{table}[!b]
  \centering
  \caption{\textbf{Canonical PDE benchmarks and forecasting configurations.}
  Train and test sizes count source sequences for time-dependent tasks and individual samples for
  Darcy, rather than extracted windows.
  $T_{\mathrm{in}}$, $T_{\mathrm{out}}$, and $H$ denote the input history length, frames predicted per model call, and total forecasting horizon, respectively, all measured in frames.
  $p$, $\omega$, $\mathbf{u}$, and $d$ denote pressure, vorticity, velocity, and smoke density.}
  \label{tab:forecast-protocols}
  \small
  \setlength{\tabcolsep}{3pt}
  \renewcommand{\arraystretch}{1.15}
  \begin{tabular*}{\linewidth}{@{\extracolsep{\fill}}llccccccc@{}}
    \toprule
    Task & Temporal type & Grid & Target fields & Train & Test
    & $T_{\mathrm{in}}$ & $T_{\mathrm{out}}$ & $H$ \\
    \midrule
    Darcy
    & Steady-state & $85^2$ & $p$
    & 1,000 & 200 & -- & -- & -- \\
    Navier--Stokes
    & Time-dependent & $64^2$ & $\omega$
    & 1,000 & 200 & 10 & 1 & 10 \\
    Kolmogorov flow
    & Time-dependent & $128^2$ & $\omega$
    & 100 & 20 & 10 & 4 & 16 \\
    Isotropic turbulence
    & Time-dependent & $60^3$ & $\mathbf{u},p$
    & 1,000 & 100 & 10 & 2 & 10 \\
    Smoke buoyancy
    & Time-dependent & $64^3$ & $\mathbf{u},d$
    & 2,000 & 200 & 4 & 4 & 16 \\
    \bottomrule
  \end{tabular*}
\end{table}

\paragraph{Darcy}
This benchmark predicts steady fluid pressure from the permeability of a
porous medium on the unit square~\citep{li2021fno}.
The underlying elliptic equation has a fixed forcing term and zero pressure
on the boundary; permeability varies between samples.
We use the data generated on a $421\times421$ grid and subsample it to
$85\times85$ for the main experiments.
The input and output are the permeability and pressure at each grid point,
with $1{,}000$ samples for training and $200$ for testing.

\paragraph{Navier--Stokes (NS2D)}
This task models two-dimensional incompressible flow on a periodic unit
square, using the vorticity formulation of the Navier--Stokes
equations~\citep{li2021fno}.
The viscosity is $10^{-5}$, and trajectories differ in their initial
vorticity fields.
Each field is represented on a $64\times64$ grid, and ten observed frames
are used to predict the following ten frames.
We use $1{,}000$ training and $200$ test trajectories, without normalizing
the input or output vorticity fields.

\paragraph{Kolmogorov Flow (KF2D)}
This benchmark describes periodically forced, two-dimensional turbulence
at Reynolds number $1000$, with periodic spatial
boundaries~\citep{li2023factformer}.
The prediction target is vorticity on a $128\times128$ grid.
We use $100$ training and $20$ test trajectories, each containing
$160$ frames over ten seconds.
Given ten observed frames, the model predicts four frames per call and
is evaluated over a sixteen-frame horizon.

\paragraph{Isotropic Turbulence (IT3D)}
This task considers three-dimensional incompressible turbulence with
periodic boundaries and Taylor Reynolds number $84$~\citep{li2023factformer}.
Each $60\times60\times60$ grid stores three velocity components and pressure.
The dataset contains $1{,}000$ training and $100$ test trajectories,
with twenty frames spanning one second.
The model receives ten frames and forecasts the next ten in two-frame blocks.

\paragraph{Smoke Buoyancy (Smoke3D)}
This benchmark couples incompressible flow with smoke transport, where
the density field produces an upward buoyancy force~\citep{li2023factformer}.
Each $64\times64\times64$ grid records velocity and smoke density.
We use $2{,}000$ training and $200$ test trajectories, with twenty frames
over fifteen seconds.
Four input frames yield sixteen forecast frames in four-frame blocks.

\paragraph{Controlled Cylinder}
This RealPDEBench task measures flow around a cylinder under periodic
external control~\citep{hu2026realpdebench}.
Reynolds number and control frequency vary across cases.
From measured velocity histories on a $128\times256$ grid, the model
predicts the subsequent two-component velocity field and its response to control.

\paragraph{Fluid--Structure Interaction (FSI)}
FSI records the coupling between fluid forces and the motion of a
vibrating cylinder~\citep{hu2026realpdebench}.
The cases vary Reynolds number, mass ratio, and structural damping.
The task forecasts both velocity components on a $128\times128$ grid
from observed histories.

\paragraph{Foil}
This benchmark provides measured cross-sections of three-dimensional foil
flows under different angles of attack and Reynolds
numbers~\citep{hu2026realpdebench}.
The planar observations contain wake structures influenced by three-dimensional
flow dynamics.
We forecast the two measured velocity components on a $128\times256$ grid.

\paragraph{Combustion}
This task uses chemiluminescence images of swirl-stabilized
ammonia/methane/air flames from RealPDEBench~\citep{hu2026realpdebench}.
The images provide a scalar intensity field describing the evolving flame.
We predict subsequent $128\times128$ intensity fields from observed images.

\paragraph{IgnitHIT}
IgnitHIT models ignition and flame growth in two-dimensional homogeneous
isotropic turbulence, using a premixed hydrogen--oxygen
mixture~\citep{mao2025benchmarking}.
Ignition geometry and turbulence vary across trajectories.
We forecast the coupled physical fields on a $128\times128$ grid.
The dataset contains thirty frames per trajectory, with
a $26/5/5$ training/validation/test split.

\paragraph{EvolveJet}
EvolveJet models premixed methane--oxygen jet flames under
strong shear that deforms and mixes reacting layers~\citep{mao2025benchmarking}.
We predict coupled fields on a $256\times256$ grid.
Each trajectory contains forty frames; the training/validation/test split is
fixed at $24/3/3$.

\paragraph{Data Preparation}
Dataset-level normalization parameters for Darcy, KF2D, IT3D, and Smoke3D are derived exclusively from their training splits and reused unchanged during testing. Predictions are mapped back to the corresponding target-field scale before error evaluation. For NS2D, the loader uses stored vorticity tensors without additional dataset-level field normalization.

RealPDEBench experiments follow its Real-world Training protocol, preserving
the published parameter-aware splits and Gaussian normalization.
The input history and output block each contain ten frames for Controlled
Cylinder and twenty for FSI, Foil, and Combustion.
Our results in Table~\ref{tab:realpdebench-results} evaluate one model call
over the complete output block, giving prediction horizons
$H=10,20,20,20$, respectively.
Predictions are not fed back for subsequent output blocks.
Sliding-window strides are $20$, $10$, $20$, and $1$ for Controlled Cylinder,
FSI, Foil, and Combustion, respectively.
Predictions are inverse-normalized before the benchmark metrics are computed.

For REALM, species mass fractions undergo a Box--Cox transform with exponent
$0.1$, followed by channelwise standardization using training statistics.

\subsection{Metrics}
\label{app:experiment-metrics}

We evaluate field prediction using relative $L^2$ error, supplemented by
the benchmark-specific error and correlation measures for RealPDEBench and REALM.
The definitions and reporting conventions are summarized below.

\paragraph{Relative $L^2$ Error for Physical Fields}
Let $\mathbf{y}_i$ and $\widehat{\mathbf{y}}_i$ denote the reference and
predicted fields for test sample $i$.
The reported relative error averages the sample-wise norm ratios:
\[
 E_{\mathrm{rel}}=\frac{1}{n}\sum_{i=1}^{n}
 \frac{\|\widehat{\mathbf{y}}_i-\mathbf{y}_i\|_2}{\|\mathbf{y}_i\|_2},
\]
where $n$ is the number of evaluated samples.
For Darcy, each vector contains a spatial pressure field.
For the dynamic canonical tasks, it contains one field component over
all spatial points and forecast frames.
Thus, NS2D is evaluated over each complete ten-frame prediction before
averaging across samples; the ``avg.'' columns likewise use the full forecast.
The ``final'' column instead evaluates only the last predicted frame.
For IT3D and Smoke3D, velocity error averages the three component-wise
relative errors; pressure and density are evaluated separately.
Canonical errors are evaluated in physical units after inverse normalization
where applicable. They are reported as percentages, with lower values
indicating more accurate predictions.

\paragraph{RMSE and Fourier-Space Error}
RealPDEBench additionally reports root mean squared error (RMSE) and
Fourier-space RMSE (fRMSE)~\citep{hu2026realpdebench}.
RMSE measures the magnitude of field differences, whereas fRMSE measures
discrepancies in their frequency representation under the benchmark convention.
Its relative $L^2$ error combines the evaluated channels and forecast
frames within each sample before averaging across samples.
All three metrics in Table~\ref{tab:realpdebench-results} are multiplied by
$100$; only relative $L^2$ thereby represents a percentage.

\paragraph{REALM Error}
We follow the official REALM evaluation protocol in the normalized field
space~\citep{mao2025benchmarking}.
Squared errors are averaged within each available physical group and
summed across groups, including species, temperature, density, velocity,
and pressure where present.
The train, validation, and test errors retain the benchmark's native scale.

\paragraph{Pearson Correlation for Physical Fields}
REALM also assesses spatial agreement using the Pearson correlation between
predicted and reference physical fields~\citep{mao2025benchmarking}:
\[
 r=\frac{\sum_j(y_j-\bar y)(\widehat y_j-\overline{\widehat y})}
 {\sqrt{\sum_j(y_j-\bar y)^2}
  \sqrt{\sum_j(\widehat y_j-\overline{\widehat y})^2}},
\]
where $j$ indexes spatial points and the bars denote spatial means.
Correlations are evaluated on decoded physical fields and averaged over
channels and rollout steps.
Table~\ref{tab:realm-results} reports $100r$, with higher values indicating
closer spatial correspondence.

\paragraph{Autoregressive Rollout Evaluation}
\label{app:rollout-comparison}
Figure~\ref{fig:rollout-comparison} evaluates physical-only, spectral-only,
and joint models on identical test inputs within each task.
It uses $200$ NS2D trajectories, $400$ KF2D windows from $20$ sequences,
and $20$ trajectories each for IT3D and Smoke3D.
Block lengths and horizons follow Table~\ref{tab:forecast-protocols}.
After the observed history, complete predicted blocks are fed back
without access to future ground truth.
The horizontal axis counts forecast frames, rather than model calls.

At each lead, we average sample-wise relative $L^2$ errors to obtain
the plotted error curve.
Reported rollout reductions compare the temporal mean of this curve
against the lower temporal mean of the two single-domain references.
This average of per-frame errors differs from the whole-sequence norm
used for the main benchmark scores.

\subsection{Implementations}
\label{app:implementations}

\paragraph{Model and Training Configurations}
\label{app:experiment-implementation}
Transolver-$\sigma$ is implemented in PyTorch, and the main benchmark
experiments are conducted on a single NVIDIA A100 GPU. 
Table~\ref{tab:training-settings} summarizes the training budgets and model
configurations for the canonical and external benchmarks. The Darcy, KF2D, IT3D, and Smoke3D epoch budgets match the released
FactFormer configurations~\citep{li2023factformer}; NS2D uses $500$ epochs.
For REALM, the matched FFNO settings use two-step training rollouts,
with gradients through the final step, and a OneCycle schedule with maximum
learning rate $10^{-3}$. Batch sizes are listed in Table~\ref{tab:training-settings}.
For the main results of our model, we train with three independent seeds,
$42$, $43$, and $44$, and evaluate the final checkpoint from each run.
The component and slice-update comparisons also use final checkpoints and
report three-run means. Appendix~\ref{app:standard-deviations} provides the
means and standard deviations for the main benchmark results.

\begin{table}[!htbp]
  \centering
  \caption{\textbf{Training and model configurations of Transolver-$\sigma$.}
  Canonical training configurations like the AdamW optimizer~\citep{loshchilov2019iclr-decoupled} are shared by the baselines trained in this study.
  $C$, $L$, $h$, and $M$ denote latent width, depth, attention heads,
  and physical slices, respectively. The spectral and physical subspaces each use
  half of $C$. $K$ denotes the retained Fourier modes.}
  \label{tab:training-settings}
  \label{app:external-configurations}
  \small
  \setlength{\tabcolsep}{3pt}
  \renewcommand{\arraystretch}{1.12}
  \begin{tabular}{@{}l|cccc|ccccc@{}}
    \toprule
    \multirow{3}{*}{Benchmark} & \multicolumn{4}{c|}{Training Configuration} & \multicolumn{5}{c}{Model Configuration} \\
    \cmidrule(lr){2-5}\cmidrule(lr){6-10}
    & \multirow{2}{*}{Budget} & \multirow{2}{*}{Batch} & \multirow{2}{*}{Peak LR} & \multirow{2}{*}{Optimizer} & Width & Depth & Heads & Slices & Modes \\
    & & & & & $C$ & $L$ & $h$ & $M$ & $K$ \\
    \midrule
    \multicolumn{10}{@{}l}{\textit{Canonical PDE benchmarks}} \\
    Darcy & 500 & 4 & $10^{-3}$ & AdamW & 128 & 8 & 8 & 32 & 4 \\
    NS2D & 500 & 2 & $10^{-3}$ & AdamW & 256 & 8 & 8 & 32 & 4 \\
    KF2D & 50 & 8 & $10^{-3}$ & AdamW & 128 & 8 & 8 & 32 & 8 \\
    IT3D & 160 & 2 & $10^{-3}$ & AdamW & 128 & 8 & 8 & 32 & 6 \\
    Smoke3D & 100 & 1 & $10^{-3}$ & AdamW & 128 & 8 & 8 & 32 & 6 \\
    \midrule
    \multicolumn{10}{@{}l}{\textit{RealPDEBench}} \\
    Controlled Cylinder & 20k iters & 4 & $10^{-3}$ & AdamW & 128 & 8 & 8 & 32 & 8 \\
    FSI & 20k iters & 4 & $10^{-3}$ & AdamW & 128 & 8 & 8 & 32 & 8 \\
    Foil & 20k iters & 4 & $10^{-3}$ & AdamW & 128 & 8 & 8 & 32 & 8 \\
    Combustion & 20k iters & 4 & $10^{-3}$ & AdamW & 128 & 8 & 8 & 32 & 8 \\
    \midrule
    \multicolumn{10}{@{}l}{\textit{REALM}} \\
    IgnitHIT & 20k iters & 26 & $10^{-3}$ & AdamW & 128 & 4 & 8 & 32 & 32 \\
    EvolveJet & 20k iters & 12 & $10^{-3}$ & AdamW & 256 & 6 & 8 & 32 & 48 \\
    \bottomrule
  \end{tabular}
  \par\smallskip
  \begin{minipage}{\linewidth}
    \footnotesize
    Canonical budgets are measured in epochs. RealPDEBench uses 20,000 optimizer updates;
    REALM budgets are reported in iterations.
    REALM follows the dataset-specific FFNO-M/FFNO-L configurations in
    \citet{mao2025benchmarking}, Tables S.7 and S.9; Peak LR denotes the maximum learning rate of the OneCycle schedule.
  \end{minipage}
\end{table}

\paragraph{Baseline Sources}
\label{app:experiment-provenance}
For Table~\ref{tab:pde-main-results}, the FNO and F-FNO results on
Darcy and NS2D follow the comparison in \citet{wu2024Transolver};
those on KF2D, IT3D, and Smoke3D are taken from \citet{li2023factformer}. We train all remaining baselines, including
FactFormer, using the same task-specific training configurations as our
model. The imported FNO and F-FNO results retain their source protocols.
For RealPDEBench (Table~\ref{tab:realpdebench-results}) and REALM
(Table~\ref{tab:realm-results}), we retain the baseline results from the
official benchmark comparisons and add our model's results.
Specifically, the RealPDEBench entries come from the Real-world Training
category of \citet{hu2026realpdebench}, excluding the aggregate
``ML Average'' row; the REALM entries for IgnitHIT and EvolveJet come from
\citet{mao2025benchmarking}.
The suffix FT denotes the imported fine-tuned DPOT variants.

\paragraph{Architecture and Operator Details}
\label{app:method-implementation}

This appendix expands the architecture in
Section~\ref{sec:method-decomposition}, the physical update in
Section~\ref{sec:method-srpa}, and the spectral operator in
Section~\ref{sec:method-spectral}.
Batch dimensions are omitted throughout the equations, and spatial
convolutions operate on grid-shaped features before returning to pointwise
notation.

\paragraph{Input Encoding and Recomposition}
\label{app:encoding-recomposition}
The input encoder concatenates field values with the configured positional
representation, then applies a pointwise multilayer perceptron.
A learned channel offset is broadcast to all grid points, and a time
embedding is added when time conditioning is enabled.
Within each block, CPE uses a depthwise convolution of kernel size three
and padding one, preserving the spatial shape.
Each channel has its own convolution kernel.
The shared CPE stage supplies spatial context before splitting and leaves
channel mixing to the subsequent operators and recomposition.
The spectral channels precede the physical channels in both the split and
concatenation operations.
SwiGLU applies two affine maps to the normalized concatenated features,
followed by elementwise gating and an output map:
\begin{equation}
  \operatorname{SwiGLU}(\mathbf{H})
  =\left[
  \operatorname{SiLU}(\mathbf{H}\mathbf{W}_g+\mathbf{b}_g)
  \odot(\mathbf{H}\mathbf{W}_u+\mathbf{b}_u)
  \right]\mathbf{W}_d+\mathbf{b}_d.
  \label{eq:swiglu-implementation}
\end{equation}
Here, biases are broadcast over points, $\odot$ is elementwise multiplication,
and the intermediate width is
$C_f=\lfloor 2\rho C/3\rfloor$ for the configured expansion ratio $\rho$.
The input matrices $\mathbf{W}_g,\mathbf{W}_u\in\mathbb{R}^{C\times C_f}$
and output matrix $\mathbf{W}_d\in\mathbb{R}^{C_f\times C}$ mix channels at
each grid point.

\paragraph{SRPA Implementation}
\label{app:srpa-implementation}
The total hidden width $C$ is divisible by $2n_h$, giving head width
$d_h=C_{\mathrm{phy}}/n_h$ with $C_{\mathrm{phy}}=C/2$.
The routing and feature projections use separate depthwise convolutions of
kernel size five and padding two, each followed by a pointwise projection
of kernel size one.
Both projections act on the full normalized physical subspace
$\mathbf{Y}_{\mathrm{phy}}$ before their outputs are partitioned into heads.
The routing affine map and the query, key, and value projections share
parameters across heads within each block.
In Equation~\ref{eq:physical-slicing},
$\mathbf{W}_s\in\mathbb{R}^{d_h\times M}$ and
$\mathbf{b}_s\in\mathbb{R}^{M}$ parameterize the routing scores.
Here, $\mathbf{1}_q$ denotes a length-$q$ vector of ones, so
$\mathbf{1}_N\mathbf{b}_s^{\top}$ broadcasts the bias across points.
The routing map includes a bias, while the query, key, and value maps are
bias-free; the output projection includes a bias.
The effective temperature for head $r$ is
$\tau_r=\operatorname{clip}(\widetilde{\tau}_r,0.1,5)$, with
$\widetilde{\tau}_r$ initialized to $0.5$ and learned independently for each
head.
We use $\varepsilon=10^{-5}$ in the slice normalization to avoid dividing
by vanishing slice mass.
Each block's residual scalar $\gamma_\ell$ is initialized to zero and
remains unconstrained during training, with the same learned value applied
to every input and all heads and channels in that block.
The unscaled attention response is $\Delta\mathbf{Z}$, and the residual
correction is $\gamma_\ell\Delta\mathbf{Z}$, as in
Equation~\ref{eq:srpa-update}.
This update adds one scalar and $\mathcal{O}(MC_{\mathrm{phy}})$ arithmetic per block,
leaving the asymptotic attention cost unchanged.
The main equations omit optional dropout after the attention softmax and
output projection.
All conditional mixing bounds concern the deterministic attention matrix
before dropout, or equivalently the attention used during inference.

\paragraph{Axis-Factorized Spectral Operator}
\label{app:spectral-operator}

The operator in Section~\ref{sec:method-spectral} applies parallel
one-dimensional transforms to a shared input.
For a $d$-dimensional domain discretized into $N=\prod_{a=1}^{d} n_a$ points,
the normalized spectral features
$\mathbf{Y}_{\mathrm{spec}}=\operatorname{LN}_{\mathrm{spec}}(\mathbf{H}_{\mathrm{spec}})\in\mathbb{R}^{N\times C_{\mathrm{spec}}}$
are reshaped into
$\mathbf{V}\in\mathbb{R}^{n_1\times\cdots\times n_d\times C_{\mathrm{spec}}}$.

Each axis uses the same normalized spectral input.
For each spatial axis $a\in\{1,\ldots,d\}$, we apply an orthonormally
normalized one-dimensional real Fourier transform:
\begin{equation}
  \widehat{\mathbf{V}}_a
  =
  \operatorname{rFFT}_a(\mathbf{V}).
  \label{eq:spectral-axis-transform}
\end{equation}
Only the lowest $K_a$ modes are retained and transformed using learnable complex channel-mixing matrices
$\mathbf{W}_a(k)\in\mathbb{C}^{C_{\mathrm{spec}}\times C_{\mathrm{spec}}}$:
\begin{equation}
  \widehat{\mathbf{R}}_a(k)
  =
  \begin{cases}
    \widehat{\mathbf{V}}_a(k)\mathbf{W}_a(k),
    & 0\leq k<K_a,\\
    \mathbf{0},
    & \text{otherwise}.
  \end{cases}
  \label{eq:spectral-mode-filter}
\end{equation}
The same channel matrix $\mathbf{W}_a(k)$ is applied at every location
along the remaining spatial coordinates.
The effective retained count is capped at $\lfloor n_a/2\rfloor+1$
when the configured mode budget exceeds the available one-sided spectrum.
The filtered spectrum is mapped back using orthonormally normalized irFFT,
with the inverse transform restoring the original length $n_a$:
\begin{equation}
  \mathbf{R}_a
  =
  \operatorname{irFFT}_a(\widehat{\mathbf{R}}_a).
  \label{eq:spectral-axis-inverse}
\end{equation}
Finally, the axis-wise responses are aggregated and flattened to recover the pointwise representation:
\begin{equation}
  \Phi_{\mathrm{spec}}(\mathbf{Y}_{\mathrm{spec}})
  =
  \operatorname{Flatten}\!\left(
  \sum_{a=1}^{d}\mathbf{R}_a
  \right)
  \in\mathbb{R}^{N\times C_{\mathrm{spec}}}.
  \label{eq:spectral-aggregation}
\end{equation}

The truncation level $K_a$ controls the number of retained frequency modes along each axis, with all
higher-frequency coefficients set to zero. Unless otherwise specified, we use the same mode budget
across axes, i.e., $K_a=K$. Compared with a full $d$-dimensional Fourier operator with parameter
complexity $\mathcal{O}(C_{\mathrm{spec}}^2K^d)$, axis factorization reduces the complexity to
$\mathcal{O}(dC_{\mathrm{spec}}^2K)$ per layer, making repeated spectral updates practical in deep Transolver-$\sigma$
backbones.
The current implementation operates on structured grids, as required by
its convolutions and axis-wise FFTs.
Mode truncation specifies the retained spectral representation; it does
not constrain learned multipliers to be contractive.
The branch residual, physical update, and nonlinear recomposition must also
be considered when interpreting complete-model rollout behavior.

\section{Full Ablations}
\label{app:full-ablations}

\paragraph{Component Ablations}
\label{app:component-ablations}
Table~\ref{tab:ablation-components} reports the component ablations.
All reported errors are relative $L^2$ errors multiplied by $100$;
the evaluated outputs are Darcy pressure, NS2D vorticity, and Smoke3D
velocity and density.
For each task, the variants are constructed by changing the designated
components of our model while keeping the data split, training budget,
optimizer settings, and checkpoint-selection rule fixed.
Each entry is the arithmetic mean of three training runs, evaluated at their
final checkpoints with the same task-specific error metric.
The interventions test different aspects of block organization, as follows.

\begin{itemize}
\item \textbf{Branch removal.} The physical or spectral update is replaced
by an identity mapping in the corresponding channel subspace.
The remaining backbone is retained; this differs from training a standalone
full-width physical-only or spectral-only solver.
\item \textbf{Sequential hybrids.} Physical-then-spectral and
spectral-then-physical updates replace the parallel organization,
testing the order and placement of the two operator types.
\item \textbf{Shared spatial context.} Removing the shared CPE tests its
contribution before subspace-specific updates.
\item \textbf{Cross-branch mixing.} Two separate grouped SwiGLU mappings
replace full-channel recomposition, retaining within-branch transformations
while preventing that FFN from mixing the two channel groups.
\item \textbf{Slice update.} Replacing SRPA with Physics-Attention tests
the slice-state update within the complete joint architecture.
\end{itemize}

The full model's Smoke3D velocity error is $17.43\%$, compared with
$20.75\%$ for the stronger sequential hybrid, a $16.0\%$ relative reduction.
For NS2D, restricting the FFN to grouped transformations changes the error
from $2.79\%$ to $4.32\%$.
These comparisons support the tested organization and information exchange.
Removing trainable operators can also change capacity; these interventions
are not, by definition, an equal-parameter sweep.

\paragraph{Update comparison}
Table~\ref{tab:slice-attention-analysis} is a separate experiment comparing vanilla Transolver,
its no-slice-attention and SRPA variants, and LinearNO.
Within each task, the four models use the same training settings and are
evaluated at the final checkpoint of each of three independent training runs.
The table reports the arithmetic mean of these three scores.
Evaluation uses the same relative $L^2$ metric, percentage scaling, and
spatial, temporal, and field aggregation as the corresponding Darcy or
NS2D entry in Table~\ref{tab:pde-main-results}.
Thus, NS2D follows the main benchmark's forecasting evaluation rather than
introducing a separate single-step metric for this comparison.
The no-attention variant retains slice aggregation and deslicing, while
removing the separate attention operation between slice tokens.
LinearNO~\citep{hu2026linearno} is an independent linear-attention
architecture, not an equivalent implementation of this Transolver variant.
Both alternatives outperform vanilla Transolver, but SRPA achieves the
lowest errors on both tasks.
Relative to the no-attention variant, SRPA reduces the reported Darcy and
NS2D errors by $9.5\%$ and $13.5\%$, respectively.
These reductions are calculated as $(e_{\mathrm{noattn}}-e_{\mathrm{SRPA}})
/e_{\mathrm{noattn}}$ using the values in the table.
Thus, the comparisons support changing the interaction rule rather than
discarding cross-slice attention altogether.
The SRPA row modifies Transolver's slice update and does not denote
the complete two-subspace SpecTransolver model.
Its results can therefore differ from
Table~\ref{tab:pde-main-results} and Table~\ref{tab:ablation-components}.
This comparison combines the identity path and learned scale, without
isolating zero initialization from an unscaled residual update.

\paragraph{Relation to the conditional analysis}
Equation~\ref{eq:appendix-identity-interaction} and
Proposition~\ref{prop:srpa-mixing} in Appendix~\ref{app:srpa-theory}
formalize replacement and residual refinement through conditional diameter bounds.
These bounds concern slice-space updates, not the complete network.
The bounds depend on the learned scale, attention weights, and value
projection; zero initialization does not ensure small corrections after training.

\paragraph{Number of Slices $M$}
\label{app:ablation-slice-count}
The reported architecture uses a fixed slice count for each configuration.
The component comparisons establish neither slice-count sensitivity nor an optimal count.

\section{Additional Visualizations}
\label{app:supplementary-results}

The following visualizations support the main model analyses in
Section~\ref{sec:experiments}. Each analysis retains its own evaluated
samples, configurations, and measurement scope.

\subsection{Learned Slices}
\label{app:slice-attention-analysis}
Learned routing and slice interactions complement the
update comparison in Appendix~\ref{app:full-ablations}.

\paragraph{Visualization setup}
Figure~\ref{fig:efficiency-representations}(c,d) compares Transolver and
Transolver-$\sigma$ on the same held-out NS2D case.
Both models receive ten consecutive vorticity frames on a $64\times64$ grid,
and we visualize their final physics-attention block during inference.
Both use eight blocks, total hidden width $256$, eight attention heads,
and $32$ slices, following Table~\ref{tab:training-settings}.
Transolver-$\sigma$ divides its hidden channels equally between physical
and spectral subspaces and retains four Fourier modes per axis.

\paragraph{Routing and attention maps}
We extract the routing and slice-attention tensors defined in
Section~\ref{sec:method-srpa} from the final block of each trained model.
For head $r$, these tensors are
$\mathbf{P}^{(r)}\in\mathbb{R}^{4096\times32}$ and
$\mathbf{A}^{(r)}\in\mathbb{R}^{32\times32}$, respectively.
The figure displays their averages over the $n_h=8$ heads,
$\overline{\mathbf{P}}=n_h^{-1}\sum_{r=1}^{n_h}\mathbf{P}^{(r)}$ and
$\overline{\mathbf{A}}=n_h^{-1}\sum_{r=1}^{n_h}\mathbf{A}^{(r)}$.
Each routing column is reshaped to the spatial grid, with S0--S31 indexing
the slices learned independently by each model.
Attention rows correspond to query slices and columns to key slices,
so vertical bands indicate repeated access to particular keys.
Each attention matrix is scaled to its own observed minimum and maximum,
emphasizing the distribution of key access within each model.
These matrices show attention weights; the realized SRPA corrections
also include the value projection and learned residual scale, as analyzed below.

\paragraph{Observed organization and interpretation}
In the visualized case, Transolver concentrates routing and key access
on a small subset of slices.
Transolver-$\sigma$ exhibits spatially structured routing across more slices,
with key access distributed more broadly across the attention matrix.
These patterns are consistent with our design of retaining distinct slice
states while allowing cross-slice interaction to refine their representations.
Together with the error reductions in Table~\ref{tab:slice-attention-analysis},
these visualizations illustrate learned slice organization and effective residual refinement.

\Needspace{6\baselineskip}
\paragraph{Layer-wise SRPA Corrections}
\label{app:srpa-layerwise-corrections}

\paragraph{Measurement protocol}
We analyze one trained Transolver-$\sigma$ checkpoint per task using the
final eight-block configurations in Table~\ref{tab:srpa-profile-settings},
consistent with Table~\ref{tab:training-settings}.
Each block's signed residual scale $\gamma_\ell$ is read directly from the
checkpoint and is independent of the evaluated input.
Correction ratios are measured on three fixed held-out cases per task,
each processed separately with batch size one.
Single-step evaluation uses the training normalization and input window,
without updating model parameters.
Read-only hooks capture slice states and attention responses before
deslicing and output projection; enabling the hooks leaves all evaluated
predictions unchanged.

\begin{table}[!htbp]
  \centering
  \caption{\textbf{Checkpoint configurations for layer-wise SRPA analysis.}
  $L$, $C$, $n_h$, $M$, and $K$ denote block count, total hidden
  width, attention heads, slices, and Fourier modes per axis, respectively.
  IT3D denotes isotropic turbulence.}
  \label{tab:srpa-profile-settings}
  \small
  \setlength{\tabcolsep}{7pt}
  \renewcommand{\arraystretch}{1.1}
  \begin{tabular*}{\linewidth}{@{\extracolsep{\fill}}lcccccc@{}}
    \toprule
    Task & Grid & $L$ & $C$ & $n_h$ & $M$ & $K$ \\
    \midrule
    Darcy & $85^2$ & 8 & 128 & 8 & 32 & 4 \\
    NS2D & $64^2$ & 8 & 256 & 8 & 32 & 4 \\
    KF2D & $128^2$ & 8 & 128 & 8 & 32 & 8 \\
    IT3D & $60^3$ & 8 & 128 & 8 & 32 & 6 \\
    Smoke3D & $64^3$ & 8 & 128 & 8 & 32 & 6 \\
    \bottomrule
  \end{tabular*}
\end{table}

\begin{figure}[!t]
  \centering
  \includegraphics[width=\textwidth]{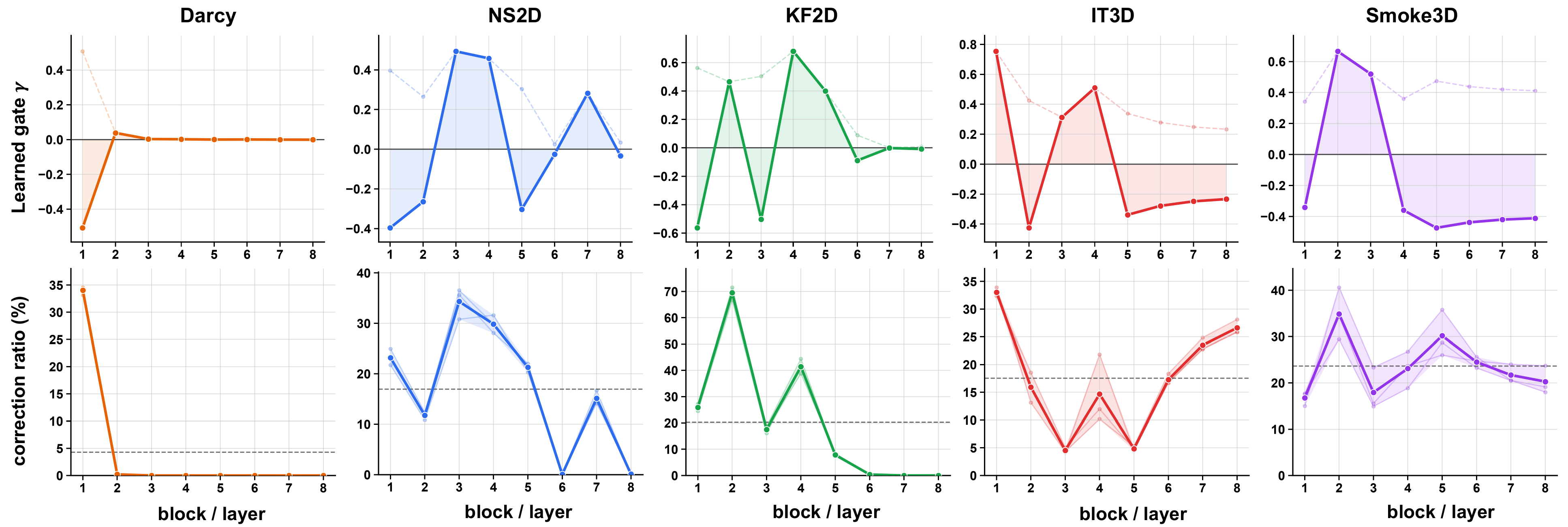}
  \caption{\textbf{Learned residual scales and realized SRPA corrections.}
  Columns show eight-layer profiles for Darcy, NS2D, KF2D, isotropic turbulence (IT3D), and Smoke3D.
  Top: solid curves show signed $\gamma_\ell$;
  pale dashed curves show $|\gamma_\ell|$, with shading between the signed
  curve and zero.
  Bottom: correction ratios in slice-token space before deslicing and
  output projection, expressed as percentages.
  Thin curves show three fixed held-out cases; thick curves show
  their means, with shading spanning their range.
  Horizontal dashed lines indicate means across all cases and layers.
  Ratios use joint norms over heads, slices, and channels before case
  averaging.
  Layers are one-based; vertical scales vary across tasks.
  Neither shaded region is a confidence interval.}
  \label{fig:srpa-layerwise-corrections}
\end{figure}

\paragraph{Joint-head correction ratio}
We retain the zero-based block index $\ell$ from
Section~\ref{sec:method-decomposition}; the figure labels this block as
Layer $\ell+1$.
For case $c$ and head $r$, let
$\mathbf{Z}_{\ell,c}^{(r)}\in\mathbb{R}^{M\times d_h}$ denote the incoming
slice states, with $d_h=C/(2n_h)$.
The unscaled attention response from Equation~\ref{eq:srpa-update} is
$\Delta\mathbf{Z}_{\ell,c}^{(r)}$;
the actual residual correction is $\gamma_\ell\Delta\mathbf{Z}_{\ell,c}^{(r)}$.
For nonzero input-state norm, the measured ratio is
\begin{subequations}
  \label{eq:srpa-correction-measurement}
  \begin{alignat}{1}
    &\Delta\mathbf{Z}_{\ell,c}^{(r)}
    =\mathbf{A}_{\ell,c}^{(r)}\mathbf{Z}_{\ell,c}^{(r)}\mathbf{W}_{v,\ell},
    \label{eq:srpa-ungated-response}\\
    &\rho_{\ell,c}=|\gamma_\ell|
    \frac{\left(\displaystyle\sum_{r=1}^{n_h}\sum_{m=1}^{M}\sum_{j=1}^{d_h}
    [\Delta\mathbf{Z}_{\ell,c}^{(r)}]_{mj}^{2}\right)^{1/2}}
    {\left(\displaystyle\sum_{r=1}^{n_h}\sum_{m=1}^{M}\sum_{j=1}^{d_h}
    [\mathbf{Z}_{\ell,c}^{(r)}]_{mj}^{2}\right)^{1/2}}.
    \label{eq:srpa-joint-ratio}
  \end{alignat}
\end{subequations}
This is a single vectorized $L_2$ norm ratio over all heads, slices, and
channels, not an arithmetic mean of per-head ratios.
Unlike the routing and attention maps in
Figure~\ref{fig:efficiency-representations}(c,d), the tensors are not
averaged over heads before measurement.
We then average per-case ratios over cases and layers:
\begin{equation}
  \begin{alignedat}{1}
    &\overline{\rho}_{\ell}=\frac{1}{3}\sum_{c=1}^{3}\rho_{\ell,c},\\
    &\overline{\rho}=\frac{1}{L}
    \sum_{\ell=0}^{L-1}\overline{\rho}_{\ell}.
  \end{alignedat}
  \label{eq:srpa-profile-means}
\end{equation}
Figure~\ref{fig:srpa-layerwise-corrections} reports percentage ratios;
ranges reflect variation across inputs rather than training seeds.

\paragraph{Observed correction profiles}
Darcy's mean correction ratio falls from $34.03\%$ in Layer $1$ to $0.20\%$
in Layer $2$, with all later layers at approximately $0.004\%$ or less.
By contrast, Smoke3D maintains $16.75\%$--$34.86\%$ across all eight layers.
NS2D and KF2D exhibit selective multi-layer corrections with near-zero
responses in some later layers, while IT3D retains measurable corrections
throughout all eight layers, with stronger corrections in its final two layers.
Scale magnitude alone does not determine correction strength.
In KF2D, Layer $4$ has larger $|\gamma|$ than Layer $2$ ($0.679$ vs.\ $0.465$),
but a smaller correction ratio ($41.44\%$ vs.\ $69.51\%$).
Together, the profiles reveal how the trained model distributes residual
interactions across tasks and layers.
Darcy concentrates slice refinement near the network input, whereas Smoke3D
sustains substantial corrections throughout all eight layers.
The correction ratio captures the joint contribution of the learned scale
and attention response, while the sign of $\gamma_\ell$ determines the correction's orientation.
These measurements complement Table~\ref{tab:slice-attention-analysis} by
locating and quantifying residual slice interactions.

\makeatletter
\setlength{\@fptop}{0pt}
\setlength{\@fpsep}{8pt}
\setlength{\@fpbot}{0pt plus 1fil}
\makeatother

\subsection{Showcases}
\label{app:rollout-case-studies}

These cases extend the main-text comparison in
Figure~\ref{fig:case-study-baselines} with successive rollout leads for
physical-only, spectral-only, and joint models.
They complement the aggregate curves in Figure~\ref{fig:rollout-comparison};
each visualization concerns one case, not a dataset-wide statistic.

\paragraph{Error-map conventions}
In Figure~\ref{fig:case-study-baselines}, each task uses the same test
sample, prediction time, and physical variable or channel set across models.
Scalar error maps show absolute prediction errors; maps combining multiple
channels use the channel-wise $L^2$ norm at each spatial location.
Spatial views, colormaps, and error limits are shared across models
within each task.
Reference fields use their own value scales, while brighter error-map
colors indicate larger deviations from the reference.

\paragraph{Autoregressive cases}
Figures~\ref{fig:case-study-ns2d}--\ref{fig:case-study-smoke3d} show one
case each for NS2D, KF2D, isotropic turbulence (IT3D), and Smoke3D.
Rows correspond to Physolver, Specsolver, and Transolver-$\sigma$ from top
to bottom; columns indicate successive physical prediction leads.
Predictions are fed back without future observations, following the
rollout protocol in Appendix~\ref{app:rollout-comparison}.
The displayed fields are vorticity $\omega$ in NS2D and KF2D,
$x$-velocity $u_x$ in IT3D, and smoke density $d$ in Smoke3D.
Each panel shows the pointwise absolute error in the displayed field.
Within each dataset, all models and leads share one error scale,
with its upper limit set by the $99.5$th error percentile.

\Needspace{6\baselineskip}
\paragraph{Three-dimensional visualization}
For IT3D and Smoke3D, we apply light three-dimensional Gaussian smoothing
to the error volume, with a standard deviation of $0.75$ grid cells.
The smoothed volume is projected along each spatial direction onto
three outer cube faces, with selected high-error interior voxels retained.
Smoothing and color saturation affect only the visualization;
the quantitative metrics are computed from unsmoothed arrays.

\begin{figure}[H]
  \centering
  \includegraphics[width=\textwidth]{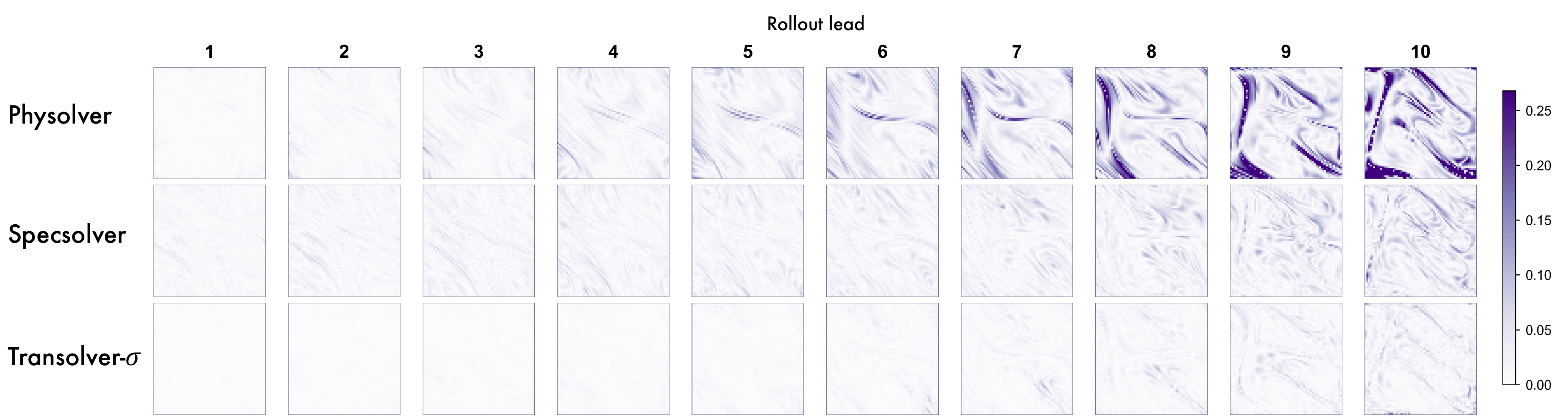}
  \caption{\textbf{Autoregressive error maps on NS2D.}
  Columns show leads $1$--$10$ for the same test case.
  Rows show Physolver, Specsolver, and Transolver-$\sigma$ from top to bottom,
  using a shared error scale.
  Physolver develops pronounced error bands at later leads;
  Transolver-$\sigma$ retains weaker spatial errors across the displayed sequence.}
  \label{fig:case-study-ns2d}
\end{figure}

\begin{figure}[H]
  \centering
  \includegraphics[width=\textwidth]{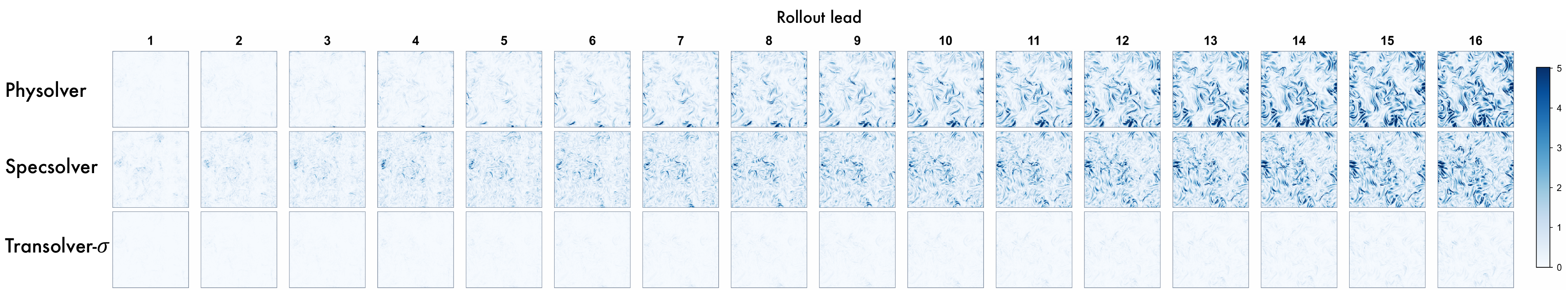}
  \caption{\textbf{Autoregressive error maps on KF2D.}
  Columns show leads $1$--$16$ for the same test case.
  Rows show Physolver, Specsolver, and Transolver-$\sigma$ from top to bottom,
  using a shared error scale.
  Errors are more prominent in both single-domain models
  than in Transolver-$\sigma$.}
  \label{fig:case-study-kf2d}
\end{figure}

\begin{figure}[H]
  \centering
  \includegraphics[width=\textwidth]{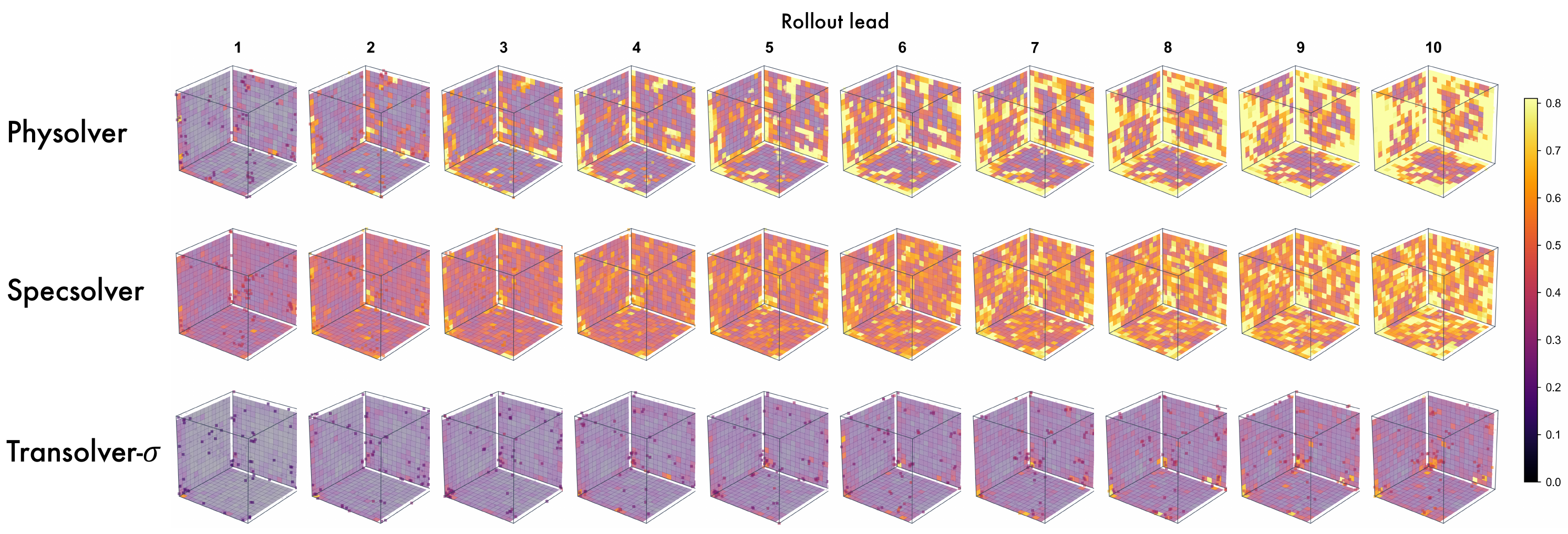}
  \caption{\textbf{Autoregressive error maps on isotropic turbulence (IT3D).}
  Columns show leads $1$--$10$ for the same test case.
  Rows show Physolver, Specsolver, and Transolver-$\sigma$ from top to bottom,
  using a shared error scale.
  The single-domain models develop more extensive high-error regions
  at later leads than Transolver-$\sigma$.}
  \label{fig:case-study-it3d}
\end{figure}

\begin{figure}[H]
  \centering
  \includegraphics[width=\textwidth]{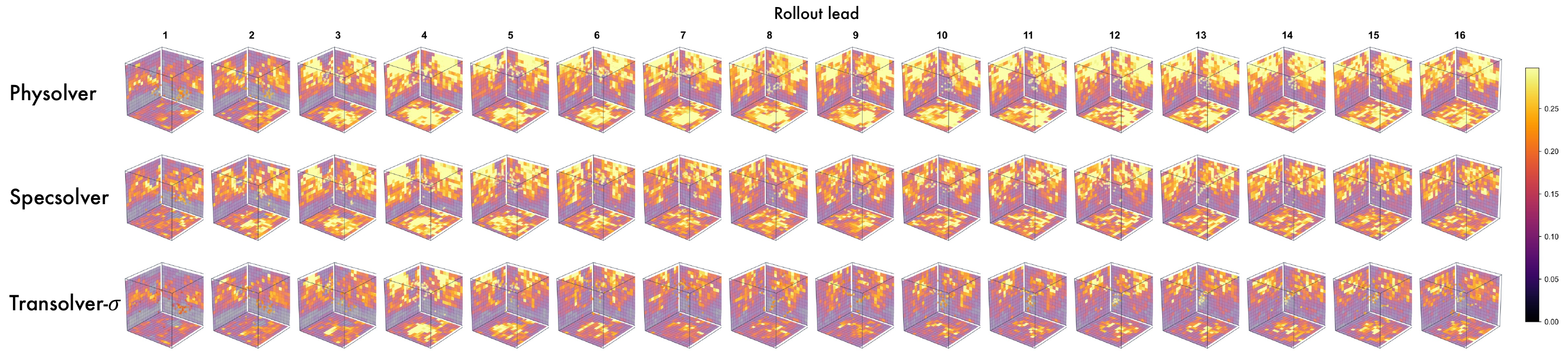}
  \caption{\textbf{Autoregressive error maps on Smoke3D.}
  Columns show leads $1$--$16$ for the same test case.
  Rows (top to bottom): Physolver, Specsolver, and Transolver-$\sigma$,
  all sharing one error scale.
  Transolver-$\sigma$ has weaker localized errors at later leads;
  spatial patterns evolve non-monotonically.}
  \label{fig:case-study-smoke3d}
\end{figure}

\section{Additional Experiments}
\label{app:additional-experiments}

\paragraph{Additional REALM Results}
\label{app:realm-training}

Table~\ref{tab:realm-results} reports the complete results on
IgnitHIT and EvolveJet, including correlation and training,
validation, and test errors.
Metric scales and baseline sources are specified in
Appendices~\ref{app:experiment-metrics} and~\ref{app:experiment-provenance};
Table~\ref{tab:training-settings} summarizes the matched experimental configurations.

\begin{table*}[!htbp]
  \centering
  \caption{\textbf{Results on REALM.}
  Correlation ($\times100$, higher is better) and training, validation,
  and test errors (lower is better) on IgnitHIT and EvolveJet.
  Errors follow the original benchmark scale, and baseline results are
  taken from the official comparison.
  Our results are averaged over three training runs evaluated at their
  final checkpoints; standard deviations are provided in
  Appendix~\ref{app:standard-deviations}.
  Best and second-best results are bold and underlined, respectively.}
  \label{tab:realm-results}
  \small
  \setlength{\tabcolsep}{5pt}
  \renewcommand{\arraystretch}{1.08}
  \begin{tabular}{l|cccc|cccc}
    \toprule
    \multirow{2}{*}{\textsc{Models}} & \multicolumn{4}{c}{\textsc{IgnitHIT}} & \multicolumn{4}{c}{\textsc{EvolveJet}} \\
    \cmidrule(lr){2-5}\cmidrule(lr){6-9}
    & Corr & train & val & test & Corr & train & val & test \\
    \midrule
    CNext & 96.09 & 2.47 & 3.17 & 2.71 & 92.56 & 1.06 & 4.25  & 2.59  \\
    CROP & 94.91 & \textbf{0.20} & 5.24 & 4.37 & 84.95 & 1.66 & 3.70  & 3.41  \\
    DPOT & 94.19 & 1.47 & 6.15 & 5.90 & 87.71 & 2.10 & 5.32  & 3.36  \\
    DeepONet & 68.60 & 39.63 & 48.54 & 48.25 & 62.20 & 7.85 & 28.79  & 18.33  \\
    F-FNO & \underline{97.36} & 0.52 & \underline{1.86} & \underline{1.87} & \underline{95.08} & \underline{0.54} & 2.50  & \underline{0.98}  \\
    FNO & 92.49 & 5.50 & 8.45 & 6.91 & 90.40 & 0.96 & 7.21  & 2.97  \\
    FactFormer & 95.70 & 0.62 & 2.74 & 2.91 & 92.76 & 0.57 & \underline{2.15}  & 1.54  \\
    GNOT & 92.71 & 8.18 & 12.94 & 13.38 & 85.81 & 5.03 & 12.57  & 7.47  \\
    ONO & 78.76 & 38.21 & 46.32 & 39.60 & 37.45 & 75.29 & 88.59  & 70.96  \\
    Transolver & 86.85 & 14.86 & 23.25 & 13.93 & 50.62 & 71.50 & 77.78  & 56.10  \\
    U-NO & 93.94 & 1.03 & 4.88 & 5.59 & 86.09 & 4.16 & 5.31  & 4.09  \\
    \midrule
    \textbf{Transolver-$\sigma$} & \textbf{98.49} & \underline{0.47} & \textbf{1.50} & \textbf{1.49} & \textbf{95.93} & \textbf{0.03} & \textbf{1.31} & \textbf{0.63}  \\
    \bottomrule
  \end{tabular}
\end{table*}

\subsection{Model Scalability}
\label{app:model-scalability}
We study scalability on Darcy flow along three axes: training-set size,
spatial resolution, and model capacity controlled by hidden width.
Figure~\ref{fig:darcy-scaling} shows decreasing prediction errors along
all three axes, with progressively smaller gains toward the largest settings.

\begin{figure}[!htbp]
  \centering
  \includegraphics[width=\linewidth]{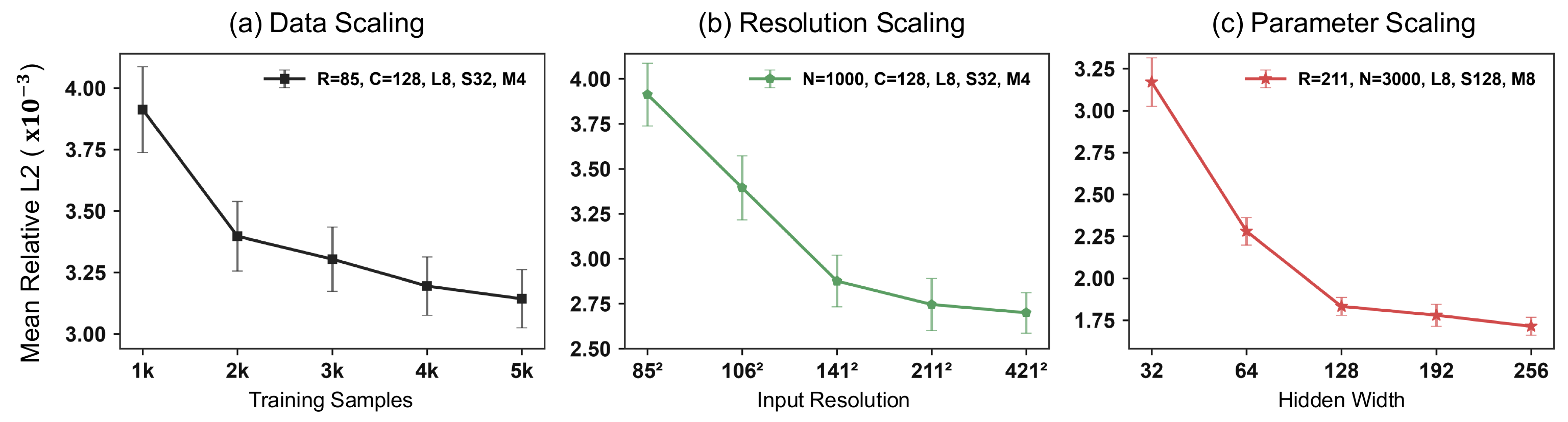}
  \caption{\textbf{Scalability on Darcy flow.}
  Relative $L^2$ error as a function of \textbf{(a)} training-set size,
  \textbf{(b)} grid resolution, and \textbf{(c)} hidden width.
  Each point is the mean over the same $200$ test cases;
  error bars show $1.96s/\sqrt{200}$, where $s$ is the standard deviation
  of the per-case errors.
  The legends use $S$ for slices and $M$ for Fourier modes,
  corresponding to $M$ and $K$ in the manuscript.}
  \label{fig:darcy-scaling}
\end{figure}

\paragraph{Experimental settings}
All configurations use eight blocks, eight attention heads, MLP ratio two,
and no dropout, with a global batch size of four.
We train with AdamW and a OneCycle schedule, using peak learning rate
$10^{-3}$, weight decay $10^{-5}$, and warmup fraction $0.3$.
The objective combines relative $L^2$ loss with derivative loss weighted by
$0.1$, and evaluation uses the final checkpoint.
Table~\ref{tab:darcy-data-scaling} summarizes the three sweeps, each with
its own fixed training settings.

\paragraph{Data scaling}
We increase the training set from $1{,}000$ to $5{,}000$ cases while fixing
resolution $85^2$, width $128$, $32$ slices, and four Fourier modes.
Each configuration uses $200$ epochs and $1{,}245{,}289$ trainable parameters.
The initial training set is extended using the same Darcy generation
process, while the $200$ test cases remain fixed.
Mean relative error decreases from $0.003912$ to $0.003143$, a $19.7\%$
reduction, with the largest gain between $1{,}000$ and $2{,}000$ training cases.

\paragraph{Resolution scaling}
We train at resolutions $85^2$, $106^2$, $141^2$, $211^2$, and $421^2$,
obtained from the original $421^2$ fields with strides $5$, $4$, $3$, $2$, and $1$.
All resolutions use $1{,}000$ training cases, width $128$, $32$ slices,
four Fourier modes, and $200$ epochs.
The parameter count remains fixed at $1{,}245{,}289$, and each model is
evaluated on its corresponding grid.
Increasing resolution reduces mean error from $0.003912$ to $0.002699$,
a $31.0\%$ improvement, showing the benefit of finer spatial discretization.

\paragraph{Parameter scaling}
We increase hidden width from $32$ to $256$, using $3{,}000$ training cases,
resolution $211^2$, $128$ slices, eight Fourier modes, and $100$ epochs.
The parameter count increases from $125{,}161$ to $6{,}964{,}889$, while
mean error decreases from $0.003170$ to $0.001713$, a $46.0\%$ reduction.
The strongest gains occur up to width $128$, followed by smaller improvements
at widths $192$ and $256$.
Together, these results show that the model benefits from additional data,
finer grids, and increased capacity within the evaluated Darcy settings.

\begin{table}[!htbp]
 \centering
 \caption{\textbf{Darcy scaling results.}
 Relative $L^2$ errors ($\times10^{-3}$) are mean $\pm$
 standard deviation across $200$ test cases.
 Data and resolution sweeps use $200$ epochs; width scaling uses $100$.}
 \label{tab:darcy-data-scaling}
 \label{tab:darcy-resolution-scaling}
 \label{tab:darcy-width-scaling}
 \small
 \setlength{\tabcolsep}{4pt}
 \renewcommand{\arraystretch}{1.15}
 \begin{tabular*}{\linewidth}{@{\extracolsep{\fill}}lccccc@{}}
 \toprule
 \multicolumn{6}{@{}l}{\textit{Data scaling: $85^2$ grid, $C=128$, $M=32$, $K=4$}}\\
 Training cases & 1,000 & 2,000 & 3,000 & 4,000 & 5,000 \\
 Error & $3.912\pm1.259$ & $3.397\pm1.022$ & $3.304\pm0.943$ & $3.195\pm0.859$ & $3.143\pm0.853$ \\
 \midrule
 \multicolumn{6}{@{}l}{\textit{Resolution scaling: 1,000 training cases, $C=128$, $M=32$, $K=4$}}\\
 Grid & $85^2$ & $106^2$ & $141^2$ & $211^2$ & $421^2$ \\
 Error & $3.912\pm1.259$ & $3.394\pm1.281$ & $2.876\pm1.042$ & $2.745\pm1.043$ & $2.699\pm0.812$ \\
 \midrule
 \multicolumn{6}{@{}l}{\textit{Parameter scaling: 3,000 training cases, $211^2$ grid, $M=128$, $K=8$}}\\
 Width $C$ & 32 & 64 & 128 & 192 & 256 \\
 Parameters & 125,161 & 462,937 & 1,776,489 & 3,946,377 & 6,964,889 \\
 Error & $3.170\pm1.041$ & $2.280\pm0.594$ & $1.832\pm0.388$ & $1.779\pm0.473$ & $1.713\pm0.383$ \\
 \bottomrule
 \end{tabular*}
\end{table}

\subsection{Standard Deviations}
\label{app:standard-deviations}
We report means and sample standard deviations over three independent
training runs, using seeds $42$, $43$, and $44$ and each run's final checkpoint.
Tables~\ref{tab:canonical-seed-results}--\ref{tab:realm-standard-deviations}
cover the canonical benchmarks, RealPDEBench, and REALM, respectively.
Following the main comparisons, we include the strongest baseline for each
metric to place the variation across runs in context.
Transolver-$\sigma$ reduces mean errors on canonical and
RealPDEBench tasks, and improves correlations and test errors on REALM.

\begin{table}[H]
 \centering
 \caption{\textbf{Standard deviations on the canonical benchmarks.}
 Relative $L^2$ errors (\%) are reported as mean $\pm$ standard deviation
 over three training runs. Strongest baseline values are taken from
 Table~\ref{tab:pde-main-results}: DRIFT-Net for Darcy, FactFormer for NS2D and Smoke3D, and EddyFormer for KF2D and IT3D.}
 \label{tab:canonical-seed-results}
 \small
 \setlength{\tabcolsep}{1.8pt}
 \renewcommand{\arraystretch}{1.15}
 \begin{tabular*}{\linewidth}{@{\extracolsep{\fill}}l|cccccccc@{}}
 \toprule
 \multirow{2}{*}{Model} & Darcy & NS2D & \multicolumn{2}{c}{KF2D} & \multicolumn{2}{c}{IT3D} & \multicolumn{2}{c}{Smoke3D}\\
 \cmidrule(lr){2-2}\cmidrule(lr){3-3}\cmidrule(lr){4-5}\cmidrule(lr){6-7}\cmidrule(lr){8-9}
 & $p$ & $\omega$ & Avg. & Final & $\mathbf{u}$ & $p$ & $\mathbf{u}$ & $d$\\
 \midrule
 Strongest baseline & 0.45 & 4.23 & 12.69 & 21.99 & 12.70 & 19.43 & 23.37 & 9.19\\
 Transolver-$\sigma$ & $0.40_{\pm 0.01}$ & $2.79_{\pm 0.05}$ & $2.52_{\pm 0.05}$ & $4.14_{\pm 0.07}$ & $10.80_{\pm 0.16}$ & $15.65_{\pm 0.23}$ & $17.43_{\pm 0.25}$ & $7.07_{\pm 0.09}$\\
 \bottomrule
 \end{tabular*}
\end{table}

\begin{table}[H]
 \centering
 \caption{\textbf{Standard deviations on RealPDEBench.}
 Our entries show mean $\pm$ standard deviation across three training runs;
 all metrics retain the $100\times$ display scale of the main comparison.
 Strongest baseline values are taken from Table~\ref{tab:realpdebench-results}.}
 \label{tab:external-seed-results}
 \small
 \setlength{\tabcolsep}{4pt}
 \renewcommand{\arraystretch}{1.15}
 \begin{tabular*}{\linewidth}{@{\extracolsep{\fill}}ll|cccc@{}}
 \toprule
 Metric & Model & Controlled Cylinder & FSI & Foil & Combustion\\
 \midrule
 \multirow{2}{*}{RMSE} & Strongest baseline & 0.80 & 0.85 & 1.00 & 2.08\\
 & Transolver-$\sigma$ & $0.650_{\pm 0.010}$ & $0.743_{\pm 0.015}$ & $0.470_{\pm 0.010}$ & $1.860_{\pm 0.040}$\\
 \midrule
 \multirow{2}{*}{Relative $L^2$} & Strongest baseline & 5.55 & 5.83 & 1.59 & 53.31\\
 & Transolver-$\sigma$ & $5.200_{\pm 0.080}$ & $5.550_{\pm 0.080}$ & $1.403_{\pm 0.025}$ & $52.857_{\pm 0.550}$\\
 \midrule
 \multirow{2}{*}{fRMSE} & Strongest baseline & 0.09 & 0.07 & 0.10 & 0.24\\
 & Transolver-$\sigma$ & $0.060_{\pm 0.002}$ & $0.060_{\pm 0.002}$ & $0.080_{\pm 0.002}$ & $0.180_{\pm 0.004}$\\
 \bottomrule
 \end{tabular*}
\end{table}

\begin{table}[H]
 \centering
 \caption{\textbf{Standard deviations on REALM.}
 Our entries show mean $\pm$ standard deviation over three training runs.
 Correlations are multiplied by $100$; errors retain the benchmark scale.
 Strongest baseline values are taken from Table~\ref{tab:realm-results}.}
 \label{tab:realm-standard-deviations}
 \small
 \setlength{\tabcolsep}{3pt}
 \renewcommand{\arraystretch}{1.15}
 \begin{tabular*}{\linewidth}{@{\extracolsep{\fill}}ll|cccc@{}}
 \toprule
 Dataset & Model & Correlation & Train error & Val. error & Test error\\
 \midrule
 \multirow{2}{*}{IgnitHIT} & Strongest baseline & 97.36 & 0.20 & 1.86 & 1.87\\
 & Transolver-$\sigma$ & $98.493_{\pm 0.267}$ & $0.470_{\pm 0.010}$ & $1.503_{\pm 0.035}$ & $1.493_{\pm 0.035}$\\
 \midrule
 \multirow{2}{*}{EvolveJet} & Strongest baseline & 95.08 & 0.54 & 2.15 & 0.98\\
 & Transolver-$\sigma$ & $95.930_{\pm 0.255}$ & $0.030_{\pm 0.001}$ & $1.310_{\pm 0.030}$ & $0.630_{\pm 0.020}$\\
 \bottomrule
 \end{tabular*}
\end{table}

\subsection{Exact Rollout Error Decomposition}
\label{app:exact-error-decomposition}
The local expansion in Appendix~\ref{app:error-propagation} separates new
prediction error from the response to an imperfect history.
We measure these contributions along free rollouts using an exact decomposition
of the prediction error into injection, propagation, and their interaction.

\begin{figure}[!b]
 \centering
 \includegraphics[width=0.98\linewidth]{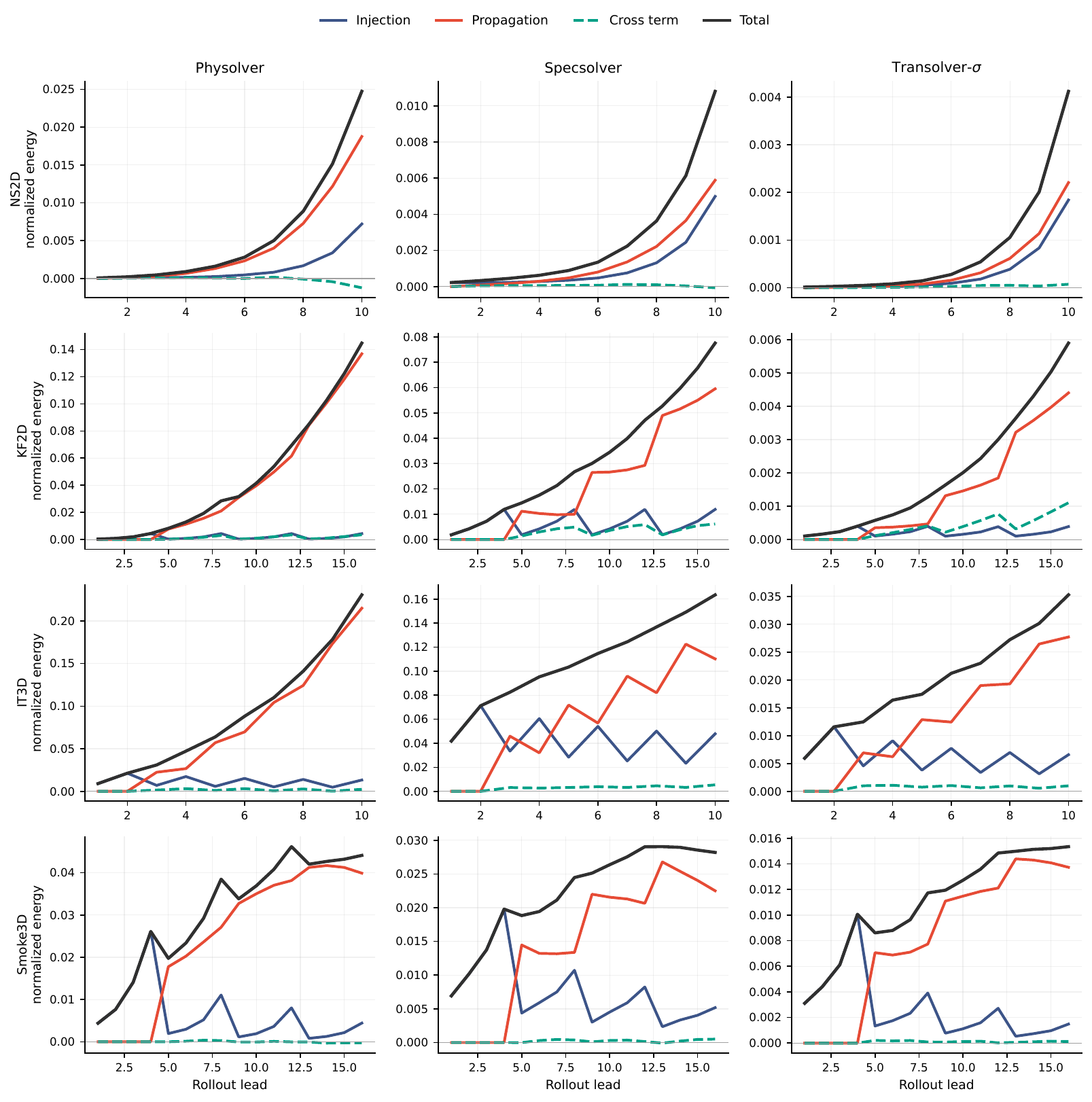}
 \caption{\textbf{Exact error decomposition during free rollout.}
 Rows correspond to NS2D, KF2D, IT3D, and Smoke3D;
 columns compare Physolver, Specsolver, and Transolver-$\sigma$.
 Injection, propagation, and their signed cross term sum to the total
 normalized error energy in Equation~\ref{eq:exact-error-energy}.
 Panel-specific vertical scales must be read when comparing magnitudes.
 The KF2D spectral-only model retains eight modes per axis.
 Leads count predicted frames; the associated block lengths are $1,4,2,4$.
 The decomposition tracks how local prediction errors and feedback-induced
 deviations jointly contribute to rollout error.}
 \label{fig:exact-error-decomposition}
\end{figure}

\paragraph{Evaluation setup}
We compare physical-only, spectral-only, and joint models on the same
cases, observed histories, output-block lengths, and rollout horizons within
each task. The joint model uses the parallel spectral--physical architecture
in Section~\ref{sec:method-decomposition}, with the canonical configurations
in Table~\ref{tab:training-settings}.
Table~\ref{tab:decomposition-cohort} lists evaluation samples,
output-block lengths, and prediction horizons.

\begin{table}[!htbp]
 \centering
 \caption{\textbf{Evaluation settings for the exact error decomposition.}
 $n$ counts trajectories (windows for KF2D);
 $T_{\mathrm{out}}$ denotes output frames per call and $H$ the total horizon.}
 \label{tab:decomposition-cohort}
 \small
 \setlength{\tabcolsep}{12pt}
 \begin{tabular}{@{}lrrr@{}}
 \toprule
 Task & $n$ & $T_{\mathrm{out}}$ & $H$ \\
 \midrule
 NS2D & 200 & 1 & 10 \\
 KF2D & 400 & 4 & 16 \\
 IT3D & 100 & 2 & 10 \\
 Smoke3D & 200 & 4 & 16 \\
 \bottomrule
 \end{tabular}
\end{table}

\paragraph{Exact decomposition and aggregation}
For a fixed model $f$, let $\mathbf{x}_k$ and $\widehat{\mathbf{x}}_k$
denote the true and freely rolled-out histories at block boundary $k$.
All outputs below are expressed in the physical units used for evaluation.
For the reference output block $\mathbf{y}_k$, define
\begin{equation}
\begin{aligned}
 \boldsymbol\delta_k &= f(\mathbf{x}_k)-\mathbf{y}_k,\\
 \boldsymbol\pi_k &= f(\widehat{\mathbf{x}}_k)-f(\mathbf{x}_k),\\
 \mathbf{e}_k &= f(\widehat{\mathbf{x}}_k)-\mathbf{y}_k
 =\boldsymbol\delta_k+\boldsymbol\pi_k.
\end{aligned}
\label{eq:exact-error-vectors}
\end{equation}
True-history predictions measure error injection; their difference from
free-history predictions measures accumulated history effects.
At each lead, we extract the corresponding frame from these blocks.
For a nonzero target frame $\mathbf{y}$, normalized error energy satisfies
\begin{equation}
 \frac{\|\mathbf{e}\|_2^2}{\|\mathbf{y}\|_2^2}
 = \underbrace{\frac{\|\boldsymbol\delta\|_2^2}{\|\mathbf{y}\|_2^2}}_{\text{injection}}
 + \underbrace{\frac{\|\boldsymbol\pi\|_2^2}{\|\mathbf{y}\|_2^2}}_{\text{propagation}}
 + \underbrace{\frac{2\langle\boldsymbol\delta,\boldsymbol\pi\rangle}
 {\|\mathbf{y}\|_2^2}}_{\text{cross term}}.
 \label{eq:exact-error-energy}
\end{equation}
We compute each normalized term per case and then average over cases at each lead.
The cross term is signed and records alignment or cancellation between the two error components.
The total is the mean squared relative error, providing an additive energy
view of the rollout errors in Figure~\ref{fig:rollout-comparison}.
For multi-frame outputs, the true history is supplied at each block boundary,
and all frames in that block share this conditioning.
Norms jointly cover spatial positions and all output fields in each evaluated frame.
Predictions and targets are denormalized before measurement.
For Smoke3D, we apply the prescribed Dirichlet boundary
operator to predictions before forming both diagnostic components.

\paragraph{Error accumulation during rollout}
Figure~\ref{fig:exact-error-decomposition} shows that propagation energy
exceeds injection energy at the final lead in all twelve panels.
This highlights the contribution of accumulated history error to finite-horizon prediction.
For NS2D, the physical-only model's final injection and propagation energies
are approximately $0.00720$ and $0.01876$, compared with $0.00184$ and
$0.00221$ for the joint model.
Thus, the joint model reduces both fresh prediction error and the
feedback-induced component along its rollout trajectory in this comparison.
The signed cross term explains how these components combine, including
partial cancellation in the physical-only NS2D result.
For multi-frame outputs, the within-block variation reflects changing
prediction lead under a shared input history.
Together, these measurements complement the prediction-error curves by
separating fresh prediction error, feedback-induced deviations, and their interaction.

\section{Full Efficiency Analysis}
\label{app:efficiency-measurements}
We complement Figure~\ref{fig:efficiency-representations}(a,b) with the
configurations and measurement protocol used to compare training costs
on NS2D and Kolmogorov flow.
We compare HPM, DRIFT-Net, FactFormer, Transolver, and
Transolver-$\sigma$, covering hybrid and attention-based PDE solvers.

\paragraph{Model configurations}
Table~\ref{tab:efficiency-settings} lists the dedicated efficiency configurations.
For HPM, FactFormer, Transolver, and Transolver-$\sigma$, we match backbone
depth, hidden width, and head count within each task.
DRIFT-Net uses its official T hierarchy, and each model retains its native
operator structure.
For Transolver-$\sigma$, both the accuracy and efficiency experiments
on Kolmogorov flow use eight blocks and eight Fourier modes per axis.

\begin{table}[!htbp]
  \centering
  \caption{\textbf{Model configurations for training-efficiency measurements.}
  $L$, $C$, and $h$ denote backbone depth, width, and head count;
  $M$ and $K$ denote slice count and retained Fourier modes per axis.
  Model-specific settings preserve the respective native operator structures.}
  \label{tab:efficiency-settings}
  \small
  \setlength{\tabcolsep}{5pt}
  \renewcommand{\arraystretch}{1.15}
  \begin{tabular*}{\linewidth}{@{\extracolsep{\fill}}lccc@{}}
    \toprule
    Model & NS2D $(L,C,h)$ & Kolmogorov $(L,C,h)$ & Operator settings \\
    \midrule
    HPM & $(8,256,8)$ & $(8,128,8)$ & 128 fixed frequencies \\
    DRIFT-Net & \multicolumn{2}{c}{Official T hierarchy} & Patch 4, window 16 \\
    FactFormer & $(8,256,8)$ & $(8,128,8)$ & Head dimension 64 \\
    Transolver & $(8,256,8)$ & $(8,128,8)$ & $M=32$ \\
    Transolver-$\sigma$
    & $(8,256,8)$
    & $(8,128,8)$
    & \begin{tabular}[c]{@{}c@{}}
        $M=32$ \\
        $K=4$ (NS2D), $K=8$ (KF2D)
      \end{tabular} \\
    \bottomrule
  \end{tabular*}
\end{table}

HPM and Transolver use their structured-mesh implementations with MLP ratio
two, reference size eight, and unified positional encoding.
FactFormer uses rotary positions, kernel and latent multipliers of two,
and head dimension $64$.
DRIFT-Net uses stage depths $(4,4,4,4)$, widths $(48,96,192,384)$,
head counts $(3,6,12,24)$, and skip settings $(2,2,2,0)$, with MLP ratio four.
Its ConvNeXt residual and spectral branch are retained.
Transolver-$\sigma$ uses shared CPE, equal-width physical and spectral
subspaces, and full-channel SwiGLU recomposition with MLP ratio two.
Its NS2D implementation uses fused scaled dot-product attention for the
slice-residual update.

\paragraph{Training workloads}
NS2D uses $1{,}000$ unnormalized vorticity trajectories on a $64^2$ grid,
with ten history frames and ten supervised frames.
Each update accumulates gradients from ten single-frame predictions,
advances the history with the reference frames, and performs one AdamW step.
The loss sums per-sample, per-frame relative $L^2$ errors.
With batch size two, an epoch contains $500$ optimizer updates and
$5{,}000$ batched forward passes with their associated backward passes.

Kolmogorov uses $100$ training trajectories, spatially downsampled to
$128^2$ and temporally subsampled by a factor of two.
Each trajectory contains $160$ sampled frames, normalized using training
statistics, and provides $134$ windows with ten history frames and sixteen available forecast frames.
The timed workload supervises four frames per window, giving $13{,}400$
windows and $6{,}700$ updates per epoch at batch size two.
Each update evaluates relative $L^2$ loss over the four-frame outputs and
performs one backward pass and one AdamW step.
FactFormer retains four internal latent-propagation steps.
Learning rates are $10^{-3}$ for NS2D and $5\times10^{-4}$ for Kolmogorov,
with weight decays $10^{-5}$ and $10^{-4}$, respectively.

\paragraph{Time and memory measurements}
All models are measured sequentially on the same GPU in FP32, with
TF32 and cuDNN benchmarking disabled.
We use Python 3.12.3, CUDA 12.8, and PyTorch 2.7, with identical data-loading
settings across models within each task.
After twenty warmup updates, we time complete training epochs with CUDA
synchronization at the start and end.
Timing includes data loading, device transfer, gradient clearing, forward
and backward computation, and optimizer updates.
Validation, visualization, checkpoint saving, and logging are outside the timed region.
We report mean epoch time and sample standard deviation over three
independent timing processes, varying model order across rounds.

Peak training memory is the maximum allocated GPU tensor memory, including
parameters, gradients, optimizer states, and activations, in decimal GB.
Parameter storage is the total byte size of parameter tensors, including
frozen and complex-valued parameters, in decimal MB.
Ordinary buffers and optimizer states contribute to training memory but
are excluded from parameter storage.

\paragraph{Efficiency comparison}
Transolver-$\sigma$ has the lowest parameter storage among the compared
models on both tasks, together with competitive epoch times.
On NS2D, it uses $19.44$ MB compared with FactFormer's $66.92$ MB,
a reduction of approximately $71\%$.
Their mean epoch times are $200.132$ and $210.461$ seconds, respectively,
corresponding to approximately $1.05\times$ training throughput under the matched workload.
Parameter storage and peak training memory capture different costs:
HPM's Kolmogorov configuration includes $33.55$ MB of frozen frequency
bases within its $39.92$ MB parameter storage.
Transolver-$\sigma$ also uses a $4.19$ MB positional buffer, which is
included in peak training memory.
These measurements characterize the storage and computational costs of
the operator configurations under each task's training workload.

\endgroup 
\end{document}